\documentclass[10pt,journal,compsoc]{IEEEtran}

\usepackage{graphicx}
\usepackage{booktabs} 
\usepackage{array}

\usepackage{booktabs} 
\usepackage{bbold}
\graphicspath{{./images/}}

\usepackage{bm}
\usepackage{tikz}
\usetikzlibrary{calc,shapes,positioning}
\usepackage{enumitem}

\usepackage{hyperref}

\usepackage{mystyle}

\usepackage{amsthm}

\newtheorem{myprop}{Proposition}
\ifCLASSOPTIONcompsoc
\usepackage[nocompress]{cite}
\else
\usepackage{cite}
\fi

\allowdisplaybreaks

\hyphenation{op-tical net-works semi-conduc-tor}

\newcommand{\ubar}[1]{\mkern2mu\underline{\mkern-2mu #1\mkern-2mu}\mkern2mu}

\begin{document}

\title{Neural Network based Explicit Mixture Models and Expectation-maximization based Learning}

\author{Dong~Liu,
  Minh~Th{\`a}nh~Vu,
  Saikat~Chatterjee,
  and~Lars~K.~Rasmussen
  \IEEEcompsocitemizethanks{\IEEEcompsocthanksitem The authors are with School of Electrical Engineering and Computer Science, KTH Royal Institute of Technology, Stockholm, Sweden. 
    E-mail: \{doli, mtvu, sach, lkra\}@kth.se}
}

\IEEEtitleabstractindextext{%

  \begin{abstract}
  We propose two neural network based mixture models in this article. The proposed mixture models are explicit in nature. The explicit models have analytical forms with the advantages of computing likelihood and efficiency of generating samples. Computation of likelihood is an important aspect of our models. Expectation-maximization based algorithms are developed for learning parameters of the proposed models. We provide sufficient conditions to realize the expectation-maximization based learning. The main requirements are invertibility of neural networks that are used as generators and Jacobian computation of functional form of the neural networks. The requirements are practically realized using a flow-based neural network. In our first mixture model, we use multiple flow-based neural networks as generators. Naturally the model is complex. A single latent variable is used as the common input to all the neural networks. The second mixture model uses a single flow-based neural network as a generator to reduce complexity. The single generator has a latent variable input that follows a Gaussian mixture distribution. We demonstrate efficiency of proposed mixture models through extensive experiments for generating samples and maximum likelihood based classification.
  
       
  \end{abstract}
  \begin{IEEEkeywords}
    Generative model, mixture models, expectation maximization, classification.
  \end{IEEEkeywords}
}

\maketitle

\section{Introduction}

The paradigm of neural network based implicit distribution
modeling has received a significant attention. In this paradigm, a neural network being a powerful non-linear function acts as an efficient generator. Prominent examples of neural network based implicit distributions are generative adversarial networks (GANs) \cite{NIPS2014_5423} and its
variants \cite{NIPS2016_6125,
  2018arXiv180508318Z, salimans2018improving}. GANs are efficient for generating samples and successful in several applications \cite{ledig2017photo}, \cite{NIPS2016_6125}. For a GAN, a latent variable is used as
an input to the generator neural network of the GAN and the output of the
neural network is considered to be a data sample from the implicit
distribution.  In implicit distribution modeling by GANs, neither
analytical form of the distribution nor likelihood for a data sample
is available. Naturally the use of neural network based implicit distribution models like GANs is restricted to many applications where it is important to compute likelihood, for example, maximum likelihood based classification.


In this article, we focus on neural network based explicit distribution modeling. An explicit distribution model has an analytical functional form and we are able to compute likelihood. While use of neural network based generators in GANs for distribution modeling is powerful, we look for further improvements. In this regard, a standard practice is to use mixture models assuming that the underlying distribution is multi-modal, or data are spread over multiple manifolds and subspaces. Therefore we propose to design neural network based explicit mixture models such that they
\begin{enumerate}[noitemsep, nolistsep]
    \item[(a)] have analytical forms, 
    \item[(b)] offer the advantage of computing likelihood, and 
    \item[(c)] retain the advantage of generating samples efficiently.
\end{enumerate}
With the advantage of computing likelihood, our proposed neural network based mixture models are suitable for maximum likelihood based classification. 

An important question is how to design practical algorithms to learn parameters of the proposed mixture models. In literature, expectation-maximization (EM) \cite{dempster1977maximum} is a standard approach for learning parameters of an explicit mixture model in a maximum likelihood framework, such as learning parameters of a Gaussian mixture model (GMM) \cite{Bishop:2006:PRM:1162264}. For realizing EM, computation of the
posterior distribution of a hidden variable (related to identity of a mixture component) given the observation
(visible signal/data) is required in the expectation step
(E-step). In addition, it is required to compute the joint log-likelihood of the observation signal and the hidden variable in the maximization step (M-step). For example, EM for GMM can be realized due to fullfilment of the above two requirements. Typically it is challenging to fulfill these two requirements for many other mixture distribution models. We also face the challenge to realize EM for learning parameters of neural network based explicit mixture models. This is due to the fact that use of neural networks in design of a system/algorithm/method often leads to loss of required level of analytical tractability. 



In pursuit of neural network based explicit mixture models, our contributions in this article are as follows.
\begin{enumerate}[noitemsep, nolistsep]
    \item[(a)] Proposing two mixture models - a high-complexity model and a low-complexity model. The low-complexity model uses shared parameters.
    \item[(b)] Finding theoretical conditions for the models such that EM can be applied for their parameter learning. The theoretical conditions help to find explicit posterior and computation of expected likelihood. 
    \item[(c)] Designing practical algorithms for realization of EM where gradient search based optimization is efficiently embedded into M-step.
    \item[(d)] Demonstrating efficiency of proposed mixture models through extensive experiments for generating samples and maximum likelihood based classification. 
\end{enumerate}
{At this point we mention the conditions of realizing EM to learn neural network based explicit mixture models.} The sufficient conditions are invertibility of associated neural networks and Jacobian computation of functional form of the neural networks. This helps to compute likelihood using change of variables. In practice we address the sufficient conditions using a flow-based neural network \cite{2018arXiv180703039K}.

\subsection{Related Work and Background}

While GANs have high success in many applications, they are known to
suffer in a mode dropping problem where a generator of a GAN is unable
to capture all modes of an underlying probability distribution of data
\cite{2018arXiv180600880K}. To address diversity in data and model
multiple modes in a distribution, variants of generative models have been developed and
usage of multiple generators
has been considered. For instance, methods of minibatch discrimination \cite{NIPS2016_6125} and feature representation \cite{bang2018icml} are used to construct new discriminators of GANs which encourage the GANs to generate samples with diversity. Multiple Wasserstein GANs \cite{2017arXiv170107875A} are used in \cite{2018arXiv180600880K} with appropriate mutual information based regularization to encourage the diversity of samples generated by different GANs.
A mixture GAN approach is proposed in \cite{hoang2018mgan} using multiple generators and multi-classification solution to
encourage diversity of samples. Multi-agent diverse GAN \cite{DBLP:journals/corr/GhoshKNTD17} similarly employs $k$ generators, but uses a $(k+1)$-class discriminator instead of a typical binary discriminator to increase the diversity of generated samples. These works are implicit probability distribution modeling and thus prior distribution of generators can not be inferred when multiple generators are used.

Typically, for a GAN, the latent variable is assumed to follow a known and fixed distribution, e.g., Gaussian. The latent signal for a given data sample can not be obtained since generators which are usually based on neural networks are non-invertible. The mapping from a data sample to its corresponding latent signal is approximately estimated by neural networks in different ways. \cite{donahue2017adversarial} and \cite{dumoulin2017adversarially} propose to train a
generative model and an inverse mapping (also neural network) from the data sample to the latent signal
simultaneously, using the adversarial training method of
GAN. Alternatively, \cite{dustin2017hierarchical}
proposes to approximately minimize a Kullback-Leibler divergence to
estimate the mapping from the data sample to the latent variable, which leads to a nontrivial probability density ratio estimation problem.

Another track of
mixture modeling is based on ensembling method that combines weaker
learners together to boost the overall performance \cite{grover2017aaai_boost,
  2017arXiv170102386T}. In this approach mixture models are obtained as follows. Based on how well the current-step mixture model captures the underlying distribution, a new generative model is
trained to compensate the miss-captured part. However, measuring the difference between current-step mixture model and underlying distribution
of dataset quantitatively is a nontrivial task. In addition, since incremental building components are used in the mixture modeling, parallel training of model components is not feasible.

\begin{figure}
  \begin{tikzpicture}
    \tikzstyle{enode} = [thick, draw=blue, ellipse, inner sep = 2pt,  align=center]
    \tikzstyle{nnode} = [thick, rectangle, rounded corners = 2pt,minimum size = 0.8cm,draw,inner sep = 2pt]
    \node[enode] (z) at (0,0) {$\bm{z}\sim p(\bm{z})$};
    \node[enode] (x) at (5.5,0){$\bm{x}\sim p(\bm{x}; \bm{\Phi})$};
    \node[nnode] (g1) at (2.6,1.8) {$\bm{g}_1$};
    \node[nnode] (g2) at (2.6,0.5) {$\bm{g}_2$};
    \node[nnode] (gk) at (2.6,-1.8) {$\bm{g}_K$};
    \draw[dotted,line width=2pt] (2.6,-0.3) -- (2.6,-1.2);
    \draw[->] (z) [in= 180, out =0] to (g1);
    \draw[->] (z) [in= 180, out =0] to (g2);
    \draw[->] (z) [in= 180, out =0] to (gk);
    \filldraw[->] (3.7, 0.5)circle (2pt) -- node[above=0.2]{$\bm{s}\sim \bm{\pi}$} (x) ;
    \draw[->] (g1) -- (3.5,1.8);
    \draw[->] (g2) -- (3.5, 0.5);
    \draw[->] (gk) -- (3.5, -1.8);
  \end{tikzpicture}
  \caption{Diagram of Generator Mixture Model (GenMM).}\label{dia-emgm-nm}
  \vspace{0.1cm}
\end{figure}
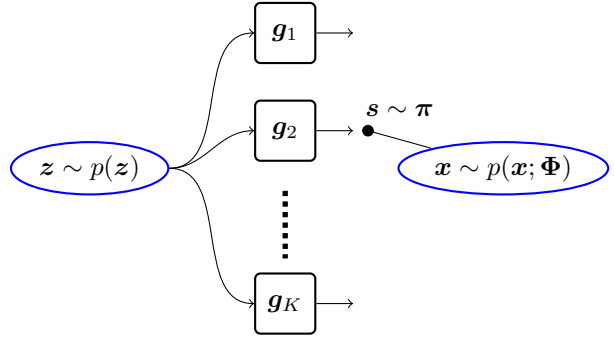

\section{Generator mixture model and EM}

In this proposed generative model, we have $K$ separate neural networks. All the $K$ neural networks have a common input latent variable $\bm{z} \in \mathbb{R}^M$. Here, a neural network $\bm{g}_k(\bm{z}): \mathbb{R}^M
\rightarrow \mathbb{R}^N$ acts as the $k$-th generator and depends on a set of parameters $\bm{\theta}_k$ as
$\bm{g}_k(\bm{z})=\bm{g}(\bm{z};\boldsymbol{\theta}_k)$. For
simplicity, we assume that all $K$ neural networks have the same
signal-flow structure. Furthermore, the distribution of $\bm{z}$ is fixed as Gaussian $\mathcal{N}(\bm{0},\bm{I})$.
The induced probability density function (pdf) of $\bm{x} \in \mathbb{R}^N$ of the proposed mixture model with $K$ mixture components is given as:
\begin{align}\label{eq:FirstMixtureModel}
  p(\bm{x};\bm{\Phi})  &= \textstyle\sum_{k=1}^K \pi_k  p_k(\bm{x}) \nonumber\\
                       &= \textstyle \sum_{k=1}^K \pi_k  p(\bm{g}_k(\bm{z}))\nonumber\\
                       &= \textstyle \sum_{k=1}^K \pi_k  p(\bm{g}(\bm{z};\boldsymbol{\theta}_k)).
\end{align}
Their parameters $\boldsymbol{\theta}_k$, however,
are different. We use $\bm{\Phi}$ to denote the set of all parameters $ \{\bm{\pi},\bm{\theta}_1, \dots, \bm{\theta}_K \}$, where $\bm{\pi} = \left[\pi_1, \hdots, \pi_K\right]^{T}$is the prior distribution of the generators. Note that $\pi_k \geq 0$ and $\sum_{k=1}^K \pi_k =1$. The mixture model
in \eqref{eq:FirstMixtureModel} is called a generator mixture model (GenMM). The
diagram of GenMM is illustrated in \autoref{dia-emgm-nm}. The GenMM can be considered as a high-complexity model because each mixture component $p_k(\bm{x})$ has its own parameter set $\bm{\theta}_k$. 

The maximum likelihood estimation problem is
\begin{equation}\label{eq:max-genmm}
  \hat{\bm{\Phi}} =    \arg \max_{\bm{\Phi}} \log \textstyle\prod_{i} p(\bm{x}^{(i)};\bm{\Phi}),
\end{equation}
where the superscript $(i)$ corresponds to the $i$'th data sample in a given dataset.
We address the above maximum likelihood estimation problem using EM. 
Let us use a categorical variable $\bm{s} = [s_1, s_2, \cdots, s_K]$
for $1$-of-$K$ representation to be a hidden variable that indicates which generator is the actual one. Elements of $\bm{s}$ follow $s_k \in \{0,1\}$, $\sum_{k=1}^K s_k =1$, and $\PP\{s_k=1\}=\pi_k$. The variable $\bm{s}$ is the hidden variable in EM. We will use $\gamma_k$ to denote the posterior probability $\PP\{s_k =1|\bm{x}\}$ calculated as
\begin{align}\label{eq-genmm-gamma}
  \gamma_k = \PP\{s_k =1|\bm{x};\bm{\Phi}\}  
  = \frac{\pi_k p(\bm{g}(\bm{z};\bm{\theta}_k))}{\sum_{l=1}^K\; \pi_l p(\bm{g}(\bm{z};\bm{\theta}_l))}.
\end{align}
The posterior probability $\gamma_k$ is also known as responsibility in the EM literature.
Assume that a value $\bm{\Phi}^{\mathrm{old}}$ of the parameter set $\bm{\Phi}$ is given, the iterative steps in EM algorithm update $\bm{\Phi}$ as follows.
\begin{enumerate}
\item E-step: Evaluation of $\gamma_{k}^{(i)}$ is 
  \begin{equation}\label{eq-genmm-e-step}
    \gamma_{k}^{(i)}(\bm{\Phi}^{\mathrm{old}}) = \frac{\pi_k^\mathrm{{old}} p(\bm{g}(\bm{z}^{(i)};\bm{\theta}_k^{\mathrm{old}}))}{\sum_{l=1}^K\; \pi_l^\mathrm{{old}} p(\bm{g}(\bm{z}^{(i)};\bm{\theta}_l^{\mathrm{old}}))}.   
  \end{equation}
\item M-step: Evaluation of $\bm{\Phi}^{\mathrm{new}}$ given by
  \begin{equation}\label{eq-genmm-opt}
    \bm{\Phi}^{\mathrm{new}} =   \arg \max_{\bm{\Phi}} \mathcal{Q} (\bm{\Phi},\bm{\Phi}^{\mathrm{old}}),  
  \end{equation}
  where the expected likelihood is
  \begin{equation}
    \hspace{-8pt}\mathcal{Q} (\bm{\Phi},\bm{\Phi}^{\mathrm{old}}) = \sum_{i}\sum_{k} \gamma_{k}^{(i)}(\bm{\Phi}^{\mathrm{old}}) \log \pi_k p_k(\bm{x}^{(i)}).
  \end{equation}
\end{enumerate}

For the GenMM in \eqref{eq:FirstMixtureModel}, the main technical challenges in realizing EM are computation of $\gamma_k$ in the E-step and computation of the joint likelihood $\log{\pi_k p_k(\bm{x})}$ in the M-step. They require explicit computation of the conditional density $p_k(\bm{x}) =  p(\bm{g}_k(\bm{z})) =p(\bm{g}(\bm{z};\bm{\theta}_k))$. Thus, the problem statement is how to design the neural network $\bm{g}_k(\cdot) = \bm{g}(\cdot;\bm{\theta}_k)$ such that $p_k(\bm{x})=p(\bm{g}(\bm{z};\bm{\theta}_k))$ can be computed.

\subsection{On Theoretical Requirement}

We provide sufficient conditions for realizing EM algorithm associated with learning parameters of GenMM.  

\begin{myprop}[Sufficient conditions]
  EM algorithm for GenMM distribution~\eqref{eq:FirstMixtureModel} is realizable if every generator neural network is an one-to-one mapping function and $M=N$. That means $\bm{g}_k(\bm{z}): \mathbb{R}^N \rightarrow \mathbb{R}^N, \forall k$ are invertible.
\end{myprop}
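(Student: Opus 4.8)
The plan is to show that the two operations flagged as obstacles to EM --- the responsibility $\gamma_k$ in the E-step \eqref{eq-genmm-gamma} and the joint log-likelihood $\log \pi_k p_k(\bm{x})$ in the M-step --- both reduce to a single primitive: the explicit evaluation of the component density $p_k(\bm{x}) = p(\bm{g}(\bm{z};\bm{\theta}_k))$. Granting that $p_k$ is available in closed form, the E-step follows at once because \eqref{eq-genmm-gamma} writes $\gamma_k$ as a ratio of the quantities $\pi_l\, p_l(\bm{x})$, and the M-step objective $\mathcal{Q}$ is an explicit function of the same $p_k(\bm{x})$. Hence the whole claim of realizability rests on establishing that the stated conditions make $p_k(\bm{x})$ computable.

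The heart of the argument is the change-of-variables formula for probability densities. Because each $\bm{g}_k$ is invertible, the latent value $\bm{z} = \bm{g}_k^{-1}(\bm{x})$ is uniquely determined by $\bm{x}$; because $M = N$, the Jacobian $\partial \bm{g}_k/\partial \bm{z}$ is a square matrix and its determinant is well defined. The density pushed forward by $\bm{x} = \bm{g}_k(\bm{z})$ with $\bm{z} \sim \mathcal{N}(\bm{0},\bm{I})$ is then
\[ p_k(\bm{x}) = p\bigl(\bm{g}_k^{-1}(\bm{x})\bigr)\, \Bigl| \det \frac{\partial \bm{g}_k^{-1}(\bm{x})}{\partial \bm{x}} \Bigr|, \]
equivalently $p_k(\bm{x}) = p(\bm{z})\,\bigl| \det\bigl(\partial \bm{g}_k(\bm{z})/\partial \bm{z}\bigr) \bigr|^{-1}$ evaluated at $\bm{z} = \bm{g}_k^{-1}(\bm{x})$. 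The factor $p(\bm{z})$ is the known Gaussian density, so $p_k(\bm{x})$ is explicit as soon as the Jacobian determinant and the inverse map can be evaluated --- precisely the two ingredients the forthcoming flow-based construction is engineered to provide. Substituting this into \eqref{eq-genmm-gamma} closes the E-step, and the decomposition $\log \pi_k\, p_k(\bm{x}) = \log \pi_k + \log p\bigl(\bm{g}_k^{-1}(\bm{x})\bigr) - \log \bigl| \det\bigl(\partial \bm{g}_k(\bm{z})/\partial \bm{z}\bigr) \bigr|$ closes the M-step, completing the proof.

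I expect the real subtlety, rather than the real labor, to sit in justifying the change-of-variables step under exactly the hypotheses as written. Plain injectivity (``one-to-one'') is slightly weaker than what the determinant formula strictly needs: one wants $\bm{g}_k$ to be a diffeomorphism --- continuously differentiable with a nowhere-singular Jacobian and a differentiable inverse --- so that $\det(\partial \bm{g}_k/\partial \bm{z}) \neq 0$ and the formula holds. My intention is therefore to read the condition ``invertible with $M = N$'' as carrying the smoothness and nonvanishing-Jacobian regularity that neural networks with smooth activations enjoy, and to note that the inverse function theorem then supplies a local smooth inverse together with a nonzero determinant, which is all the change-of-variables theorem requires. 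With that regularity acknowledged, the argument is essentially the verification above.
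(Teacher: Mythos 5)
Your proposal is correct and follows essentially the same route as the paper: reduce realizability of both the E-step and the M-step to explicit evaluation of $p_k(\bm{x})$, then obtain $p_k(\bm{x}) = p(\bm{z})\,\bigl|\det(\bm{J})\bigr|$ at $\bm{z}=\bm{g}_k^{-1}(\bm{x})$ via the multivariate change-of-variables formula, exactly as in \eqref{eq:sufficient-change-var}. Your closing remark that bare injectivity should be read as diffeomorphism-level regularity is a fair refinement the paper leaves implicit, but it does not change the argument.
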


\noindent \emph{\textbf{Proof:}} We use the multivariate transformation method to prove the proposition. To realize EM for GenMM, it is required to compute $p_k(\bm{x}),\forall k$. Without loss of generality we consider computation of $\gamma_k  = \PP(s_k =1|\bm{x};\bm{\Phi})$ and $p_k(\bm{x}) =  p(\bm{g}_k(\bm{z})) =p(\bm{g}(\bm{z};\bm{\theta}_k))$. 
Let us denote the output of the $k$-th generator by $\tilde{\bm{x}}$ 
and $\tilde{\bm{g}}(\bm{z})
= \bm{g}(\bm{z};\bm{\theta}_k)$. We have $\tilde{\bm{x}} = \tilde{\bm{g}}(\bm{z})$. Under the conditions $M=N$ and invertible $\bm{g}_k(\bm{z}) = \tilde{\bm{g}}(\bm{z})$, there exists an inverse function $\tilde{\bm{g}}^{-1}(\tilde{\bm{x}})=[\tilde{g}_1^{-1}(\tilde{\bm{x}}), \, \tilde{g}_2^{-1}(\tilde{\bm{x}}), \hdots, \tilde{g}_N^{-1}(\tilde{\bm{x}})]^{\top}=\bm{z}$. Then, the Jacobian of this multivariate transformation is
\begin{equation}
  \bm{J} = \left[
    \begin{array}{ccc}
      \frac{\partial \tilde{g}_1^{-1}}{\partial \tilde{x}_1} & \hdots & \frac{\partial \tilde{g}_1^{-1}}{\partial \tilde{x}_N} \\
      \vdots & \hdots & \vdots \\
      \frac{\partial \tilde{g}_N^{-1}}{\partial \tilde{x}_1} & \hdots & \frac{\partial \tilde{g}_N^{-1}}{\partial \tilde{x}_N} 
    \end{array}
  \right].
\end{equation}
Let $\mathrm{det}(\bm{J})$ denotes the determinant of $\bm{J}$. Then, the pdf is
\begin{equation}\label{eq:sufficient-change-var}
  p_k(\bm{x}) = p(\tilde{\bm{x}}) =  p(\bm{z}) \bigg| \mathrm{det}(\bm{J}) \big|_{\bm{z}=\tilde{\bm{g}}^{-1}(\tilde{\bm{x}})}\bigg|.
\end{equation}
Similarly, we can compute the pdf of other mixture components.

\subsection{Algorithm for Learning}
In this section we first discuss about a suitable neural network model for $\bm{g}_k(\bm{z})$ in GenMM and then design the EM algorithm for GenMM. 

\subsubsection{Use of flow-based neural network}\label{subsec-flow-intro}

We use feed-forward neural network to implement every generator. With some notational abuse, assume that $\tilde{\bm{g}}$ is a feed-forward neural network:
$\tilde{\bm{x}}=\tilde{\bm{g}}(\bm{z})$ that has multiple hidden layers
$\tilde{\bm{g}}=\tilde{\bm{g}}^{[L]}\circ \tilde{\bm{g}}^{[L-1]}\circ \cdots
\circ \tilde{\bm{g}}^{[1]}$ and is invertible $\tilde{\bm{f}}=\tilde{\bm{g}}^{-1}$. Then the signal flow can be depicted as
\begin{equation*}
  \vspace{-8pt}
  \centering
  \begin{tikzpicture}
    \node (z) at (0,0) {};
    \node at ($(z)-(0.5,0)$){$\bm{z}=\bm{h}_0$};
    \node (xi1) at (1.5,0) {$\bm{h}_1$};
    \node (xi2) at (3,0) {};
    \node (xi3) at (4.5,0){};
    \node (x) at (6,0) {};
    \node at ($(x)+(0.5,0)$){$\tilde{\bm{x}} = \bm{h}_L$};
    \draw[->] ($(z) + (0.3,0.1)$) -- node[above]{$\tilde{\bm{g}}^{[1]}$} ($(xi1)+(-0.3,0.1)$); 
    \draw[->] ($(xi1)-(0.3,0.1)$) -- node[below]{$\tilde{\bm{f}}^{[1]}$}($(z) - (-0.3,0.1)$);
    \draw[->] ($(xi1) + (0.3,0.1)$) -- node[above]{$\tilde{\bm{g}}^{[2]}$} ($(xi2)+(-0.3,0.1)$); 
    \draw[->] ($(xi2)-(0.3,0.1)$) -- node[below]{$\tilde{\bm{f}}^{[2]}$}($(xi1) - (-0.3,0.1)$);
    \draw[->] ($(xi3) + (0.3,0.1)$) -- node[above]{$\tilde{\bm{g}}^{[L]}$} ($(x)+(-0.3,0.1)$); 
    \draw[->] ($(x)-(0.3,0.1)$) -- node[below]{$\tilde{\bm{f}}^{[L]}$}($(xi3) - (-0.3,0.1)$);
    \draw[dotted,line width = 0.3 mm] (xi2) -- (xi3);
  \end{tikzpicture},
\end{equation*}
where $\tilde{\bm{g}}^{[l]}$ and $\tilde{\bm{f}}^{[l]}$ are the $l$-th layer of $\tilde{\bm{g}}$ and $\tilde{\bm{f}}$, respectively. In a feed-forward neural network, if every layer is invertible,
the full feed-forward neural network is invertible. The inverse
function is given by $\bm{z}=\tilde{\bm{f}}(\tilde{\bm{x}})$. Flow-based
network, proposed in \cite{DBLP:journals/corr/DinhKB14}, is such a
feed-forward neural network, which is further improved in subsequent works \cite{2016arXiv160508803D, 2018arXiv180703039K}. It also has additional advantages as efficient Jacobian computation and low computational complexity. 

For a flow-based neural network architecture, let us assume that
the feature $\bm{h}_l$ at the $l$'th layer has two subparts as
$\bm{h}_l = [\bm{h}_{l,a}^{T} \, , \, \bm{h}_{l,b}^{T}]^{T}$ where
$(\cdot)^{T}$ denotes transpose operation. Then considering $\bm{h}_0 = \bm{z}$, we have the following forward and inverse relations between $(l-1)$'th and $l$'th layers:
\begin{equation}\label{eq-gl}
  \begin{array}{l}
    \bm{h}_{l-1} =
    \begin{bmatrix}
      \bm{h}_{l-1,a}\\
      \bm{h}_{l-1,b}
    \end{bmatrix}
    =
    \begin{bmatrix}
      \bm{h}_{l,a}\\
      \bm{m}_a(\bm{h}_{l,a})\odot \bm{h}_{l,b} + \bm{m}_b(\bm{h}_{l,a})
    \end{bmatrix},\vspace{10pt}\\
    \bm{h}_{l} =
    \begin{bmatrix}
      \bm{h}_{l,a}\\
      \bm{h}_{l,b}
    \end{bmatrix}
    =
    \begin{bmatrix}
      \bm{h}_{l-1,a}\\
      \left(  \bm{h}_{l-1,b} - \bm{m}_b(\bm{h}_{l-1,a}) \right)\oslash \bm{m}_a(\bm{h}_{l-1,a}) 
    \end{bmatrix}, \\
  \end{array}  
\end{equation}
where $\odot$ denotes element-wise product, $\oslash$ denotes
element-wise division, and $\bm{m}_a(\cdot), \bm{m}_b(\cdot)$ can be
complex non-linear mappings (implemented by neural networks).
For the flow-based neural network, the determinant of Jacobian matrix is
\begin{equation}
  \begin{array}{rl}
    \mathrm{det}(\bm{J}) |_{\bm{z}=\tilde{\bm{f}}(\tilde{\bm{x}})} & = \prod_{l=1}^L \det (\bm{J}_l) |_{\bm{h}_{l}},
  \end{array}
\end{equation}
where $\bm{J}_l$ is the Jacobian of the transformation from the $l$-th layer to the $(l-1)$-th layer, i.e., the inverse transformation. We compute the determinate of the Jacobian matrix as
\begin{align}\label{eq-hl-determinate}
  \det (\bm{J}_l)|_{\bm{h}_{l}}& = \det \left[  \pd{\bm{h}_{l-1}}{\bm{h}_l} \right] \nonumber\\
                               & = \det
                                 \begin{bmatrix}
                                   \bm{I}_a & \mathbf{0} \nonumber\\
                                   \pd{\bm{h}_{l-1,b}}{\bm{h}_{l,a}} & \mathrm{diag}(\bm{m}_a(\bm{h}_{l,a}))
                                 \end{bmatrix}\nonumber\\
                               &= \det \left( \mathrm{diag}(\bm{m}_a(\bm{h}_{l,a})) \right),
\end{align}
where $\bm{I}_a$ is identity matrix and $\mathrm{diag}(\cdot)$ returns a square matrix with the elements of $(\cdot)$ on the main diagnal. Then the pdf is
\begin{align}
  p(\tilde{\bm{x}}) & =  p(\bm{z}) \big| \mathrm{det}(\bm{J}) |_{\bm{z}=\tilde{\bm{f}}(\tilde{\bm{x}})}\big| \nonumber\\
            &  = p(\bm{z}) \prod_{l=1}^L \abs{\det\left( \mathrm{diag}(\bm{m}_a(\bm{h}_{l,a}))]  \right)}.
\end{align}
\autoref{eq-gl} describes a \textit{coupling} mapping between layers. Since the coupling has a partial identity mapping, direct concatenation of multiple such coupling mappings would result in a partial identity mapping of the whole neural network $\tilde{\bm{g}}$. Alternating the positions of identity mapping \cite{2016arXiv160508803D} or using $1\times1$ convolution operations \cite{2018arXiv180703039K} before each coupling mapping is used to treat the issue.
Furthermore, \cite{2016arXiv160508803D}\cite{2018arXiv180703039K} split some hidden
layer signal $\bm{h}$ and model a part of it directly as standard Gaussian to reduce computation and memory burden.

\subsubsection{EM Algorithm for GenMM}
\label{sec-algo-genmm}
The mixture model GenMM is illustrated in \autoref{dia-emgm-nm},
where $K$ generators with a certain prior distribution share the same
latent distribution $p(\bm{z})$. With a flow-based neural network used as the generator $\bm{g}_k$ for the $k$'th mixture component in GenMM, the pdf $p_k(\bm{x})$ for any
$\bm{x}$ can be computed exactly. Recall that $p_k(\bm{x}) =  p(\bm{g}_k(\bm{z})) =p(\bm{g}(\bm{z};\bm{\theta}_k))$. Let $\bm{f}_k$ be the inverse of $\bm{g}_k$. Then, %
the posterior probability can be computed further from \autoref{eq-genmm-e-step} as
\begin{align}\label{eq-genmm-gamma}
  \gamma_k({\bm{\Phi}}^{\mathrm{old}}) 
  = & \frac{\pi_k^{\mathrm{old}} p(\bm{g}(\bm{z};\bm{\theta}_k))}{\sum_{j=1}^K\;
      \pi_j^{\mathrm{old}} p(\bm{g}(\bm{z};\bm{\theta}_j))} \nonumber\\
  = &\frac{\pi_k^{\mathrm{old}} p(\bm{f}_k(\bm{x})) \big|\det\left( \pd{\bm{f}_k(\bm{x})}{\bm{x}} \right)\big|}{\sum_{j=1}^K\; \pi_j^{\mathrm{old}} p(\bm{f}_j(\bm{x})) \big|\det\left( \pd{\bm{f}_j(\bm{x})}{\bm{x}} \right)\big|},
\end{align}
and the objective function in the M-step can be written as
\begin{align}\label{eq-genmm-obj}
  &\Qq\left(\bm{\Phi},\bm{\Phi}^{\mathrm{old}}\right)  =\sum_{i=1}^n \sum_{k=1}^{K}\gamma_k^{(i)}(\bm{\Phi}^{\mathrm{old}})\bigg[ \log\;\pi_k 
    \nonumber\\ &+\log\;p(\bm{f}_k(\bm{x}^{(i)})) 
                  + \log\;\bigg|\det\left(\pd{\bm{f}_k(\bm{x}^{(i)})}{\bm{x}^{(i)}} \right)\bigg|\bigg],
\end{align}
where $n$ denotes the number of data samples. We usually deal with a large dataset for model learning, i.e. $n$ is large. In that case we implement the EM algorithm in batch fashion. 
 Recall that $ \bm{\Phi}= \{\bm{\pi},\bm{\theta}_1, \dots, \bm{\theta}_K \}$ and hence the M-step optimization problem $\arg \max_{\bm{\Phi}} \mathcal{Q} (\bm{\Phi},\bm{\Phi}^{\mathrm{old}})$ is addressed in two steps: (a) optimization of $\{ \bm{\theta}_k \}_{k=1}^{K}$, and (b) optimization of $\bm{\pi}$.


Finding a closed-form solution for the problem $\arg \max_{\{\bm{\theta}_k\}_{k=1}^{K}} \mathcal{Q} (\bm{\Phi},\bm{\Phi}^{\mathrm{old}})$ is challenging. Instead, we do the batch-size gradient decent to optimize w.r.t. $\{\bm{\theta}_k\}_{k=1}^{K}$.
Further, optimization in the batch fashion leads to a practical problem as follows. 
Since $\bm{\theta}_k$ is the parameter set of neural networks $\bm{g}_k$, one update step of gradient decent would update the generator $\bm{g}_k$ and we would lose the old mixture model parameter set $\bm{\Phi}^{\mathrm{old}}$ that is needed to compute the posteriors $\gamma_k(\bm{\Phi}^{\mathrm{old}})$ and to update $\bm{\pi}$. Thus, in learning GenMM, we maintain two such models with parameter sets $\bm{\Phi}$ and $\bm{\Phi}^{\mathrm{old}}$, respectively. At the beginning of an EM step, $\bm{\Phi} = \bm{\Phi}^{\mathrm{old}}$. While we optimize $\{\bm{\theta}_k\}_{k=1}^{K}$ of $\bm{\Phi}$ with batch-size gradient decent, we use the model with old parameter set $\bm{\Phi}^{\mathrm{old}}$ to do posterior computation and update of $\bm{\pi}$. At the end of the EM step, the old parameter set is replaced by the updated one: $\bm{\Phi}^{\mathrm{old}}\gets \bm{\Phi}$.

\begin{algorithm}[t]
  \caption{EM for learning GenMM}\label{flow-algo-em}
  \begin{algorithmic}[1]
    \State {\bfseries Input:}
      Latent distribution: $p(\bm{z})$. Empirical distribution $P_d(\bm{x})$ of the input dataset;
      \State Set a total number of epochs $T$ for training, a prior distribution $\bm{\pi}$, EM update gap $t_{\mathrm{EM}}$;
    \Statex  Set a learning rate $\eta$. 
    \State Build two models with parameter sets:
    \Statex ${\bm{\Phi}}^{\mathrm{old}}=\{ \bm{\pi}^{\mathrm{old}},
    \bm{\th}_1^{\mathrm{old}},\hdots,
    \bm{\th}_K^{\mathrm{old}} \}$,
    \Statex ${\bm{\Phi}} = \{ \bm{\pi},
    \bm{\th}_1,\hdots, \bm{\th}_K \}$.
    \State Initialize the generator prior distribution $\pi_k = 1/K$;\\ Initialize $\bm{\theta}_k$ of $\bm{g}_k$, for all $k=1,\dots,K$ randomly. 
    \State $\bm{\Phi}^{\mathrm{old}} \gets \bm{\Phi}$.
    \For { {epoch} $t < T$}
    \For{the iteration in epoch $t$} 
    \State Sample a batch of data $\left\{ \bm{x}^{(i)}
    \right\}_{i=1}^{n_b}$ from the dataset $P_d(\bm{x})$
    \State Compute $\gamma_k^{(i)}(\bm{\Phi}^{\mathrm{old}})$ as in \autoref{eq-genmm-gamma},
    for all $\bm{x}^{(i)}$ and $k=1, \dots, K$
    \State Compute
    $\Qq\left(\bm{\Phi},\bm{\Phi}^{\mathrm{old}}\right)$ as in \autoref{eq-genmm-obj}
    
    \State $\partial{g_k} \gets \nabla_{\bm{\th}_k} \frac{1}{n^{b}}\Qq\left(
      \bm{\Phi}, \bm{\Phi}^{\mathrm{old}}\right)$,
    $\forall \bm{\th}_k \in \bm{\Phi}$
    \State $\bm{\th}_k \gets \bm{\th}_k + \eta \cdot \partial{g_k}$, $\forall \bm{\th}_k \in \bm{\Phi}$
    \EndFor
    \If{$(t \mod t_{\mathrm{EM}} ) = 0$}
    \State $\pi_k \gets \EE_{P_d}\left[ \gamma_k \right]$ 
    \State $\bm{\Phi}^{\mathrm{old}} \gets \bm{\Phi}$.
    \EndIf
    \EndFor
  \end{algorithmic}
\end{algorithm}

Then we discuss the optimization of the prior distribution $\bm{\pi}$. The optimization problem is 
\begin{equation}\label{eq-pi-update1}
  \bm{\pi}^{\mathrm{new}} = \displaystyle \arg \max_{\bm{\pi}} \mathcal{Q}
  (\bm{\Phi},\bm{\Phi}^{\mathrm{old}}), \,\,
  \mathrm{s.t.} \sum_{k=1}^{K}\pi_k = 1.
\end{equation}
The update of prior follows the solution
\begin{equation}\label{eq-pi-solution}
  \pi_k^{\mathrm{new}} = \frac{1}{n}\sum_{i=1}^{n}\gamma_k^{(i)}(\bm{\Phi}^{\mathrm{old}}).
\end{equation}
The detail to get the solution is derived in the subsection~\ref{subsubsec:Proof_for_update}. For a given dataset with empirical distribution $P_d(\bm{x})$,
$\gamma_k$ is evaluated with batch data in order to calculate the cost
$\Qq\left(\bm{\Phi},\bm{\Phi}^{\mathrm{old}}\right)$ and to update the
parameter $\bm{\theta}_k$ of $\bm{g}_k$. We accumulate the values of $\gamma_k$ of batches and
average out for one epoch to update $\bm{\pi}$, {i.e.}, $\pi_k \gets \EE_{P_d}\left[ \gamma_k \right]$.

We summarize the EM algorithm for GenMM in Algorithm~\autoref{flow-algo-em}.
In implementation, to avoid numerical computation problem, $\log{p(\bm{g}(\bm{z}; \bm{\theta}_k))}$ is
scaled by the dimension of signal $\bm{x}$ in order to compute $\gamma_k$. 


\subsubsection{Proof for update of $\pi$}
\label{subsubsec:Proof_for_update}


The optimization of $\bm{\pi}$ is addressed in the following Lagrangian form
\begin{equation}
  \Ff(\bm{\Phi}) = \mathcal{Q}(\bm{\Phi},\bm{\Phi}^{\mathrm{old}}) + \lambda
  \left( 1 - \sum_{k=1}^{K}\pi_k \right),
\end{equation}
where $\lambda$ is the Lagrange multiplier. Then
\begin{align}
  \bm{\pi}^{\mathrm{new}} =& \arg \max_{\bm{\pi}} \Ff(\bm{\Phi}) \nonumber\\
                   =&  \arg \max_{\bm{\pi}}\sum_{i=1}^n
                     \sum_{k=1}^{K}\gamma_k^{(i)}(\bm{\Phi}^{\mathrm{old}})\bigg[
                      \log\;\pi_k +   \log\;p(\bm{f}_k(\bm{x}^{(i)})) \nonumber \\
  &+ \log\;\bigg|\det\left(
      \pd{\bm{f}_k(\bm{x}^{(i)})}{\bm{x}^{(i)}}
                      \right)\bigg|\bigg] + \lambda  \left( 1 - \sum_{k=1}^{K}\pi_k \right) \nonumber\\
    =& \arg \max_{\bm{\pi}} \sum_{i=1}^n
                     \sum_{k=1}^{K}\gamma_k^{(i)}(\bm{\Phi}^{\mathrm{old}})
      \log\;\pi_k+ \lambda  \left( 1 - \sum_{k=1}^{K}\pi_k \right),
\end{align}
where $\bm{f}_k = \bm{g}_k^{-1}$. Then solving
\begin{equation}
   \pd{\Ff}{\pi_k} = 0, k=1, 2, \cdots, K,
 \end{equation}
 we get 
 $
   \pi_k = \frac{1}{\lambda}
   \sum_{i=1}^{n}\gamma_k^{(i)}(\bm{\Phi}^{\mathrm{old}}), \forall k.
 $
With condition $\sum_{k=1}^{K}\pi_k =1$, we have
 $
\lambda = \sum_{k=1}^{K}\sum_{i=1}^{n}\gamma_k^{(i)}(\bm{\Phi}^{\mathrm{old}}) =n.
$
Therefore, the solution is
 $
\pi_k = \frac{1}{n}
   \sum_{i=1}^{n}\gamma_k^{(i)}(\bm{\Phi}^{\mathrm{old}}), \forall k.
$
Note that the updated prior parameter $\pi_k$
is non-negative due to the non-negativity of the posterior $\gamma_k{(i)}$.

\subsection{On Convergence of GenMM}
{In general the convergence of EM is guaranteed only in some cases, cf. \cite{wu1983convergence}. However, under some conditions our GenMM converges. In what follows we present the convergence arguments. 
}

\begin{myprop}
  Assume that for all $k$, the parameters $\bm{\theta}_k$ are in a compact set such that the corresponding mapping $\bm{g}_k$ is invertible. Assume further that all generator mappings fulfill that $\bm{f}_k$ and $\frac{\partial \bm{f}_k}{\partial \bm{x}}$ are continuous functions of $\bm{\theta}_k$. Then GenMM converges. 
\end{myprop}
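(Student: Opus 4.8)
The plan is to establish convergence through the classical monotone-ascent theory of the EM algorithm, in the form made rigorous by Wu~\cite{wu1983convergence}, specializing its regularity hypotheses to the structure of GenMM. Write the incomplete-data log-likelihood as $\ell(\bm{\Phi}) = \sum_{i}\log p(\bm{x}^{(i)};\bm{\Phi})$. The first ingredient is the standard EM ascent inequality: decomposing $\ell$ via the hidden variable $\bm{s}$ and using non-negativity of the Kullback--Leibler divergence gives, for any $\bm{\Phi}$ and $\bm{\Phi}^{\mathrm{old}}$,
\begin{equation*}
  \ell(\bm{\Phi}) - \ell(\bm{\Phi}^{\mathrm{old}}) \geq \mathcal{Q}(\bm{\Phi},\bm{\Phi}^{\mathrm{old}}) - \mathcal{Q}(\bm{\Phi}^{\mathrm{old}},\bm{\Phi}^{\mathrm{old}}),
\end{equation*}
so any update that does not decrease $\mathcal{Q}(\cdot,\bm{\Phi}^{\mathrm{old}})$ does not decrease $\ell$. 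Since the prior update \eqref{eq-pi-solution} maximizes $\mathcal{Q}$ over the simplex and, for a sufficiently small learning rate $\eta$, one gradient-ascent step on $\{\bm{\theta}_k\}$ does not decrease $\mathcal{Q}$, the generated sequence $\{\bm{\Phi}^{(t)}\}$ satisfies $\ell(\bm{\Phi}^{(t+1)}) \geq \ell(\bm{\Phi}^{(t)})$; GenMM is therefore a generalized EM (GEM) procedure with monotonically non-decreasing likelihood.

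Next I would use the compactness and continuity hypotheses to bound $\ell$ from above. By the change-of-variables formula \eqref{eq:sufficient-change-var}, each component density is $p_k(\bm{x}) = p(\bm{f}_k(\bm{x}))\,\big|\det(\partial\bm{f}_k(\bm{x})/\partial\bm{x})\big|$ with $p(\cdot)$ the standard Gaussian, which is bounded by $(2\pi)^{-N/2}$. Because $\bm{f}_k$ and $\partial\bm{f}_k/\partial\bm{x}$ are assumed continuous in $\bm{\theta}_k$ and $\bm{\theta}_k$ ranges over a compact set, the factor $\big|\det(\partial\bm{f}_k/\partial\bm{x})\big|$ is bounded uniformly over the parameter set for each fixed $\bm{x}$; hence $p(\bm{x};\bm{\Phi}) = \sum_k \pi_k p_k(\bm{x})$ is bounded above and $\ell(\bm{\Phi})$ is bounded above on the compact parameter domain. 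A monotone sequence bounded above converges, so $\ell(\bm{\Phi}^{(t)})$ converges to a finite limit $\ell^{\star}$.

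Finally, to upgrade convergence of the likelihood values to convergence of the iterates toward the stationary set, I would invoke the global-convergence argument of Wu (equivalently, Zangwill's theorem). Its hypotheses reduce here to: joint continuity of $\mathcal{Q}(\bm{\Phi},\bm{\Phi}^{\mathrm{old}})$ in both arguments, which follows because $\gamma_k$ in \eqref{eq-genmm-gamma} and the summands of \eqref{eq-genmm-obj} are continuous in $\bm{\Phi}$ under the stated continuity of $\bm{f}_k$ and its Jacobian; compactness of the parameter domain, which holds by assumption for each $\bm{\theta}_k$ and automatically for $\bm{\pi}$ on the probability simplex; and closedness of the point-to-set map induced by one EM iteration. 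Under these conditions all limit points of $\{\bm{\Phi}^{(t)}\}$ are stationary points of $\ell$, with $\ell(\bm{\Phi}^{(t)}) \to \ell^{\star}$ a stationary value.

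I expect the main obstacle to be that the M-step optimizes $\{\bm{\theta}_k\}$ by gradient ascent rather than exact maximization, so the argument must be carried out for a GEM rather than for ideal EM. This requires (i) verifying that a gradient step with small enough $\eta$ strictly increases $\mathcal{Q}$ whenever $\nabla_{\bm{\theta}}\mathcal{Q}\neq\bm{0}$ --- which needs $\nabla_{\bm{\theta}_k}\mathcal{Q}$ to exist and be continuous, a consequence of differentiability of $\bm{f}_k$ and its log-Jacobian in $\bm{\theta}_k$ --- and (ii) ensuring the iterates remain inside the compact set on which invertibility holds, which in practice must be enforced (for instance by projection, or by arguing the set is forward-invariant). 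Establishing closedness of the resulting algorithmic map and ruling out escape from the compact invertibility region are the delicate technical points; once they are in place, the monotonicity-plus-boundedness skeleton above delivers the claim.
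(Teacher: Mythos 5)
Your proposal is correct and follows essentially the same route as the paper: establish continuity of $p_k(\bm{x})$ in $\bm{\theta}_k$ via the change-of-variables formula, use compactness together with that continuity to bound the log-likelihood from above, and conclude from the monotone ascent of the EM objective that the sequence converges. In fact your write-up is more careful than the paper's, which stops at "the lower bound is monotone and cannot grow unbounded" and does not address the issues you correctly flag (the M-step being an inexact gradient step, i.e. a GEM rather than EM iteration, and whether the iterates remain inside the compact invertibility region), nor does it invoke Wu/Zangwill to say anything about convergence of the iterates themselves rather than of the objective values.
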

\begin{proof}
  Assume that the assumption holds. Then the determinant term $\det(\bm{J})$ in \autoref{eq:sufficient-change-var} is a continuous function of $\bm{\theta}_k$. Due to \autoref{eq:sufficient-change-var} and the continuity of Gaussian density $p(\bm{z})$, the pdf $p_k(\bm{x})$ is a continuous function of $\bm{\theta}_k$. Therefore, $p(\bm{x})$ given in \autoref{eq:FirstMixtureModel} is a continuous function of $\bm{\Phi}$. Denote the likelihood in \autoref{eq:max-genmm} as $\Ll(\bm{\Phi})=\log \textstyle\prod_{i} p(\bm{x}^{(i)};\bm{\Phi})$. The maximum value of $\Ll(\bm{\Phi})$ is bounded due to continuity of $p(\bm{x})$ w.r.t. $\bm{\Phi}$.
  Define $\Bb(\bm{\Phi}) = \Qq\left(\bm{\Phi},\bm{\Phi}^{\mathrm{old}}\right) - \sum_{i=1}^n\sum_{k=1}^{K}\gamma_k^{(i)}(\bm{\Phi}^{\mathrm{old}}) \log\gamma_k^{(i)}(\bm{\Phi}^{\mathrm{old}})$. It is well known that $\Bb(\bm{\Phi})$ is a lower bound on the likelihood function $\Ll(\bm{\Phi})$, i.e. $\Ll(\bm{\Phi}) \geq \Bb(\bm{\Phi})$. Note that the essence of EM algorithm is that the likelihood function value is elevated by increasing the value of its lower bound $\Bb(\bm{\Phi})$. Since the maximum value of the log-likelihood $\Ll(\bm{\Phi})$ is finite, $\Bb(\bm{\Phi})$ can not grow unbounded.

\end{proof}

\section{A low-complexity model}
There are $K$ neural networks in GenMM, which makes GenMM a high-complexity model.
We now propose a low-complexity model where parameters are shared. This is motivated by many machine learning setups where model parameters are shared across model components. For example, this techniques is applied as use of shared covariance matrices in a tied Gaussian mixture model, in linear discriminant analysis \cite{bellegarda1990tiedmixture, Kimball:1993:UTD:1075671.1075694, Bishop:2006:PRM:1162264}, and use of common subspace in non-negative matrix factorization \cite{Gupta2013}. Based on the idea of sharing parameters, we propose a low-complexity model which we refer to as latent mixture model as follow.

\subsection{Latent mixture model}\label{subsec-latmm}

In this generative model, we use a latent variable $\bm{z}$ that has the following Gaussian mixture distribution
\begin{equation}
  p(\bm{z}) = \sum_{k=1}^K \pi_k p_k(\bm{z}), 
\end{equation}
where $p_k(\bm{z})$ is pdf of Gaussian distribution $\mathcal{N}(\bm{z};\bm{\mu}_k,\bm{C}_k)$ with mean $\bm{\mu}_k$ and covariance $\bm{C}_k$.
The data $\bm{x}$ is
assumed to be generated in the model using a single neural network
$\bm{g}(\bm{z}): \mathbb{R}^M \rightarrow \mathbb{R}^N$ as
$\bm{x}=\bm{g}(\bm{z};\bm{\theta})$, where $\bm{\theta}$
is the set of parameters of the neural network. The diagram of this mixture model
is shown in \autoref{dia-emgm-sm}. Similarly, we use $\ubar{\bm{\Phi}}$ to denote the set of all parameters $\{ \bm{\pi},
\bm{\mu}_1,\hdots, \bm{\mu}_K, \bm{C}_1,\hdots,
\bm{C}_K, \bm{\theta} \}$. Furthermore, we also have a categorical variable $\bm{s}$ to indicate which underlying source is chosen. The density function of the proposed latent mixture model (LatMM) is given as
\begin{align}\label{eq:SecondMixtureModel}
  p(\bm{x};\ubar{\bm{\Phi}}) & = \textstyle\sum_{k=1}^K \pi_k p_k(\bm{x}) \nonumber\\
                             &= \textstyle\sum_{k=1}^K \pi_k  p(\bm{g}(\bm{z}; \bm{\theta})| s_k=1) \nonumber \\
                             & = \textstyle\sum_{k=1}^K \pi_k  p(\bm{g}(\bm{z};\bm{\theta}); \bm{\mu}_k, \bm{C}_k).
\end{align}
The LatMM is illustrated in Figure~\ref{dia-emgm-sm} where the neural network $\bm{g}$ is shared. Learning of LatMM requires solving the maximum likelihood estimation problem
\begin{equation}
  \ubar{\hat{\bm{\Phi}}} =    \arg \max_{\ubar{\bm{\Phi}}} \log \prod_{i} p(\bm{x}^{(i)};\ubar{\bm{\Phi}}),
\end{equation}
which we address using EM.
We have
\begin{align}\label{eq-latMM-gamma}
  \hspace{-9pt}\ubar{\gamma}_k = \PP(s_k =1|\bm{x};\ubar{\bm{\Phi}})  
  = \frac{\pi_k p(\bm{g}(\bm{z};\bm{\theta});\bm{\mu}_k, \bm{C}_k)}{\sum_{l=1}^K\; \pi_l p(\bm{g}(\bm{z};\bm{\theta});\bm{\mu}_l, \bm{C}_l)}.
\end{align}

Similar to the case of GenMM, realization of the corresponding EM
algorithm associated with LatMM in \autoref{eq:SecondMixtureModel}
also has technical challenges on computing the posterior distribution $\ubar{\gamma}_k$ and the joint likelihood $\log{\pi_k p_k(\bm{x})}$. They require explicit computation of the conditional density function $p_k(\bm{x}) = p(\bm{g}(\bm{z};\bm{\theta})| s_k=1) = p(\bm{g}(\bm{z};\bm{\theta});\bm{\mu}_k, \bm{C}_k) $. In LatMM, $\bm{g}(\bm{z}): \mathbb{R}^N \rightarrow \mathbb{R}^N$ is also required to be invertible. We model $\bm{g}$ by a flow-based neural network as explained in \autoref{subsec-flow-intro}.
Then, the problem is how to learn the parameters of LatMM. 
\begin{figure}
  \begin{tikzpicture}
    \tikzstyle{enode} = [thick, draw=blue, ellipse, inner sep = 1pt,  align=center]
    \tikzstyle{nnode} = [thick, rectangle, rounded corners = 2pt,minimum size = 0.8cm,draw,inner sep = 2pt]
    \node[enode] (z1) at (0,1.8) {$\bm{z}_1\sim p_1(\bm{z})$};
    \node[enode] (z2) at (0,0.5){$\bm{z}_2\sim p_{2}(\bm{z})$};
    \node[enode] (zK) at (0,-1.8) {$\bm{z}_K\sim p_{K}(\bm{z})$}; 
    \node[enode] (x) at (5.5,0){$\bm{x}\sim p(\bm{x};\ubar{\bm{\Phi}})$};
    \node[nnode] (g) at (3.2,0) {$\bm{g}$};
    \draw[dotted,line width=2pt] (0,-0.3) -- (0,-1.2);
    \filldraw[->] (1.8, 0.5)circle (2pt) -- node[above=0.2]{$\bm{s}\sim \bm{\pi}$} (g) ;
    \draw[->] (z1) -- (1.6, 1.8);
    \draw[->] (z2) -- (1.6, 0.5);
    \draw[->] (zK) -- (1.6, -1.8);
    \draw[->] (g) to (x);
  \end{tikzpicture}
  \caption{Diagram of Latent Mixture Model (LatMM).}\label{dia-emgm-sm}
  \vspace{0.1cm}
\end{figure}
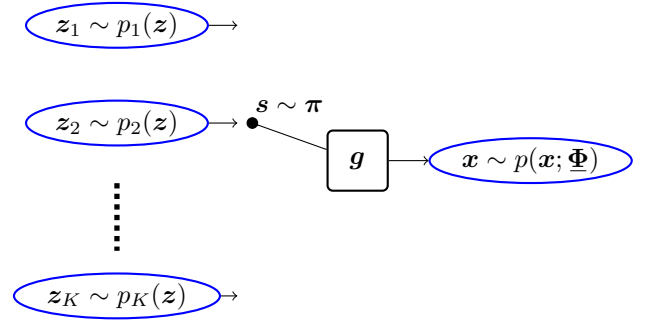

\subsubsection{EM Algorithm for LatMM}
\begin{algorithm}[t]
  \caption{EM for learning LatMM}\label{flow-algo-sem}
  \begin{algorithmic}[1]
    \State {\bfseries Input:} Empirical distribution $P_d(\bm{x})$ of dataset;
    \Statex Latent mixture distribution: 
    \Statex $\sum_{k=1}^{K}\pi_k \Nn\left(\bm{z}; \bm{\mu}_k, \mathrm{diag}(\bm{\sigma}_k^2)\right)$
    \State Set a total number of epochs $T$ of training, prior $\bm{\pi}$ update gap $t_{\pi}$, EM update gap $t_{\mathrm{EM}}$, a learning rate $\eta$; Set hyperparameter $a$ , $b$ for prior of
    $\bm{\sigma}_k^{-1}, \forall k$.
    \State Build two models with parameter sets:
    \Statex $\ubar{\bm{\Phi}}^{\mathrm{old}}=\{ \bm{\pi}^{\mathrm{old}},
    \bm{\mu}_1^{\mathrm{old}},\hdots, \bm{\mu}_K^{\mathrm{old}}, \bm{\sigma}_1^{\mathrm{old}},\hdots,
    \bm{\sigma}_K^{\mathrm{old}}, \bm{\theta}^{\mathrm{old}} \}$,
    \Statex $\ubar{\bm{\Phi}} = \{ \bm{\pi},
    \bm{\mu}_1,\hdots, \bm{\mu}_K, \bm{\sigma}_1,\hdots,
    \bm{\sigma}_K, \bm{\theta} \}$.
    \State Initialize the generator prior distribution $\pi_k = 1/K$ and initialize its $\bm{\theta}$ for $\bm{g}$, $\bm{\mu}_k$, $\bm{\sigma}_k$, $\forall k$ randomly.
    \State $\ubar{\bm{\Phi}}^{\mathrm{old}} \gets \ubar{\bm{\Phi}}$
    \For {epoch $t < T$}
    \For{the iteration in epoch $t$}
    \State Sample a batch of data $\left\{ \bm{x}^{(i)}
    \right\}_{i=1}^{n_b}$ from dataset
    \State Compute $\ubar{\gamma}_k(\ubar{\bm{\Phi}}^{\mathrm{old}})$ by \autoref{em-latmm-gamma}, $\forall \bm{x}^{(i)}$ and $k=1, 2, \cdots, K$ 
    \State Compute
    $\ubar{\Qq}\left(\ubar{\bm{\Phi}},\ubar{\bm{\Phi}}^{\mathrm{old}}\right)$ in \autoref{eq-latmm-obj}

    \State $\partial{\bm{\th}}, \partial{\bm{\mu}_k}, \partial{\bm{\sigma}_k}\gets 
    \nabla_{\bm{\theta}, \bm{\mu}_k, \bm{\sigma}_k} 
    \frac{1}{n^{b}}\ubar{\Qq}\left(\ubar{\bm{\Phi}},\ubar{\bm{\Phi}}^{\mathrm{old}}\right)
    +\frac{1}{K}
    \log\prod_{k=1}^K
    \Gamma(\bm{\sigma}_k^{-1};
    a, b)$ 
    \State $\bm{\th} \gets \bm{\th} + \eta \cdot \partial{\bm{\theta}}$
    \State $\bm{\mu}_k \gets \bm{\mu}_k + \eta \cdot \partial{\bm{\mu}_k}, \forall k$
    \State $\bm{\sigma}_k \gets \bm{\sigma}_k + \eta \cdot \partial{\bm{\sigma}_k},
    \forall k$
    \EndFor

    \If{$(t \mod t_{\mathrm{EM}} )=0$} 
    \State $\pi_k \gets \EE_{P_d}\left[ \ubar{\gamma}_k(\ubar{\bm{\Phi}}^{\mathrm{old}}) \right]$
    \State $\ubar{\bm{\Phi}}^{\mathrm{old}} \gets \ubar{\bm{\Phi}}$
    \EndIf
    
    
    \EndFor
  \end{algorithmic}
\end{algorithm}

Algorithm~\ref{flow-algo-sem} summarizes the EM algorithm for LatMM.
LatMM is used to
learn one generative model that gets input from a mixture latent source
distribution with one single generator $\bm{g}$. For simplicity, we set
the covariance matrix of each latent Gaussian source as a diagonal
matrix, $\bm{C}_k = \mathrm{diag}(\bm{\sigma}_k^2) $. Each component $p_k(\bm{z})$ of
the latent source $p(\bm{z})$ can be obtained by an affine transform from
the standard Gaussian, {i.e.}, $\bm{z}_k \sim p_k(\bm{z})$ can be
obtained by a linear layer of neural network with $\bm{z}_k = \bm{\mu}_k
+ \bm{\sigma}_k \bm{\epsilon}, \bm{\epsilon} \sim \Nn(\bm{0}, \bm{I})$.
According to \autoref{subsec-latmm}, the posterior and objective function in M-step of LatMM can be computed as
\begin{align}\label{em-latmm-gamma}
  &\ubar{\gamma}_k(\ubar{\bm{\Phi}}^{\mathrm{old}}) = \frac{\pi_k^{\mathrm{old}} p_k\left(\bm{z}\right)}{\sum_{j=1}^K\;\pi_j^{\mathrm{old}} p_j\left(\bm{z}\right)}\bigg|_{\bm{z}=\bm{f}(\bm{x})},
\end{align}\vspace{-0.42cm}
\begin{align}\label{eq-latmm-obj}
  &\hspace{-8pt}\ubar{\Qq}\left( \ubar{\bm{\Phi}},\ubar{\bm{\Phi}}^{\mathrm{old}}\right)
    =\sum_{i=1}^n \log\;\bigg|\det\left(
    \pd{\bm{f}(\bm{x}^{(i)})}{\bm{x}^{(i)}} \right)\bigg| \nonumber \\ 
  &\hspace{-10pt}+\hspace{-3pt} \sum_{k=1}^{K}\ubar{\gamma}_k^{(i)}(\ubar{\bm{\Phi}}^{\mathrm{old}})\bigg[ \log\hspace{-1pt}{\pi_k}
    +\log\hspace{-1pt}{p_k\hspace{-2pt}\left(\bm{f}(\bm{x}^{(i)}); \bm{\mu}_k, \bm{\sigma}_k^2\right)}\bigg], 
\end{align}
where $\bm{f}$ is the inverse of $\bm{g}$.
Similar to \autoref{sec-algo-genmm}, update of prior $\bm{\pi}$
follows $\pi_k \leftarrow \EE_{P_d}[ \ubar{\gamma}_k(\bm{x})
]$. However, we need to consider the following issue when learning the parameters of
Gaussian mixture source $p(\bm{z}) = \sum_{k=1}^K \pi p_k(\bm{z})$. If a component of the mixture source overfits and collapses onto a data sample, the likelihood can be large but the parameter learning can be problematic. This problem is known as the singularity problem of Gaussian mixture \cite{Bishop:2006:PRM:1162264}.
We avoid this problem by using the following alternatives:
\begin{itemize}[noitemsep, nolistsep]
\item Assume that for each $\forall k=1, 2, \cdots, K$, there is a parameter prior distribution for $\bm{C}_k=\mathrm{diag}(\bm{\sigma}_k^2)$.
   To be specific, assume that the parameter prior distribution of the precision $\bm{\sigma}_k^{-1}$ is
  $\Gamma(\bm{\sigma}_k^{-1};a, b)$, where $\Gamma(\cdot; a, b)$ is
  Gamma distribution with parameter $a$ and $b$. Then, the objective function of the optimization problem w.r.t. $\bm{\Phi}$ is reformulated as
  \begin{equation}\label{eq-latmm-obj1}
    \umax{ \ubar{\bm{\Phi}}}   \frac{1}{n}
    \ubar{\Qq}\left(\ubar{\bm{\Phi}},\ubar{\bm{\Phi}}^{\mathrm{old}} \right)
    +\frac{1}{K}
    \log\prod_{k=1}^K
    \Gamma(\bm{\sigma}_k^{-1};
    a, b).
  \end{equation}
\item Alternatively, we use an $l_2$ regularization on $\bm{\sigma}_k$,
  which formulates the optimization step as
  \begin{equation}
    \umax{\ubar{\bm{\Phi}}}   \frac{1}{n} \ubar{\Qq}\left(\ubar{\bm{\Phi}},\ubar{\bm{\Phi}}^{\mathrm{old}}\right)-\lambda \sum_{k=1}^K \frac{(1- \bm{\sigma}_k)^2}{K},
  \end{equation}
  where $\la$ is the regulation parameter.
\end{itemize}


\subsection{On complexity of models and new variant models}
We have proposed two models, GenMM and LatMM. GenMM has a high complexity whereas LatMM has a low complexity. Due to their difference in model complexity as well as training complexity, their usage efficiency is highly application-dependent. For example, when the training data is limited, it may be advisable to use LatMM.

It is possible to combine GenMM and LatMM to obtain new models. A simple way is to replace the latent source $p(\bm{z})$ of GenMM by a LatMM model. 
This new combined model has a higher complexity than both GenMM and LatMM. 

To get a less complex model than GenMM, another new model can be derived by modifying the architecture of LatMM. There are multiple latent sources $p_k(\bm{z})$, $k=1, 2, \cdots, K$, in LatMM. If we assume that each such latent source $p_k(\bm{z})$ is induced by a latent generator network, we can obtain a new model that has a common-and-individual architecture. Each latent generator of its corresponding latent source has its own parameters and acts as an individual part. The common part of the new model transforms signal between observable signal $\bm{x}$ and latent signal $\bm{z}$ (generated by the latent generator networks). The common-and-individual technique is prevalently used in machine learning systems\cite{sundman2016design, SUNDMAN2014298}.


Therefore several new models can be derived using our proposed models, GenMM and LatMM. In spite of the scope and potential, development of analytical methodology to derive new model architectures turns out to be challenging. Traditionally the development is trial-and-error driven. Development of new model architectures by combining GenMM and LatMM will be investigated in future, and not to be pursued in this article.



\section{Experiments Results}\label{sec:experiments}

In this section, we evaluate our proposed mixture models for generating samples and maximum likelihood classification. We will show encouraging results.


\begin{figure*}[!ht]
  \captionsetup[subfigure]{justification=centering}
  \centering
  \begin{subfigure}{.4\textwidth}
    \centering
    \includegraphics[width=1\linewidth]{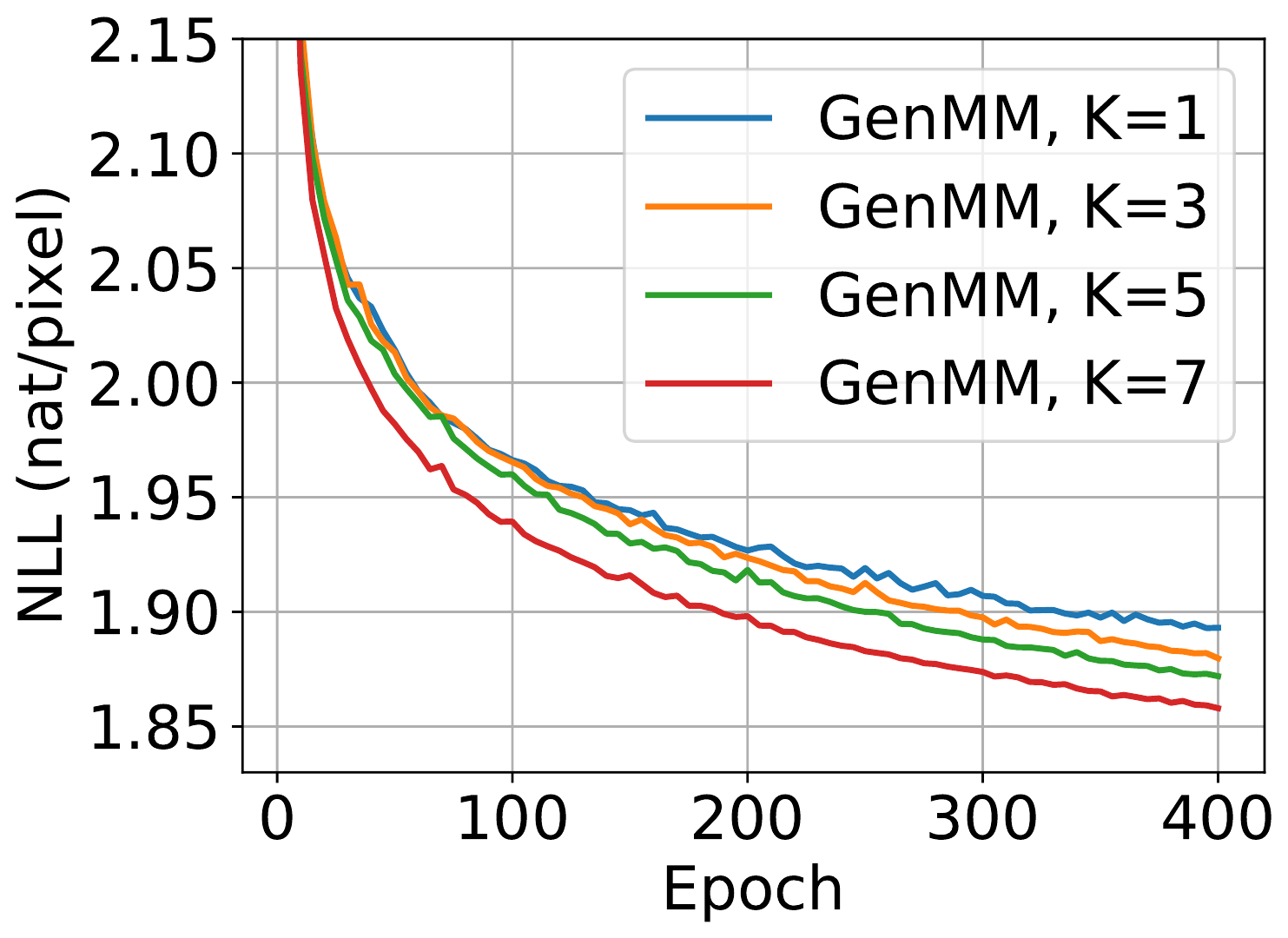}
    \vspace{-0.6cm}
    \caption{Dataset MNIST}
    \label{fig-genmm-mnist-nll-curve}
  \end{subfigure}\hspace{1cm}
  \begin{subfigure}{.4\textwidth}
    \centering
    \includegraphics[width=1\linewidth]{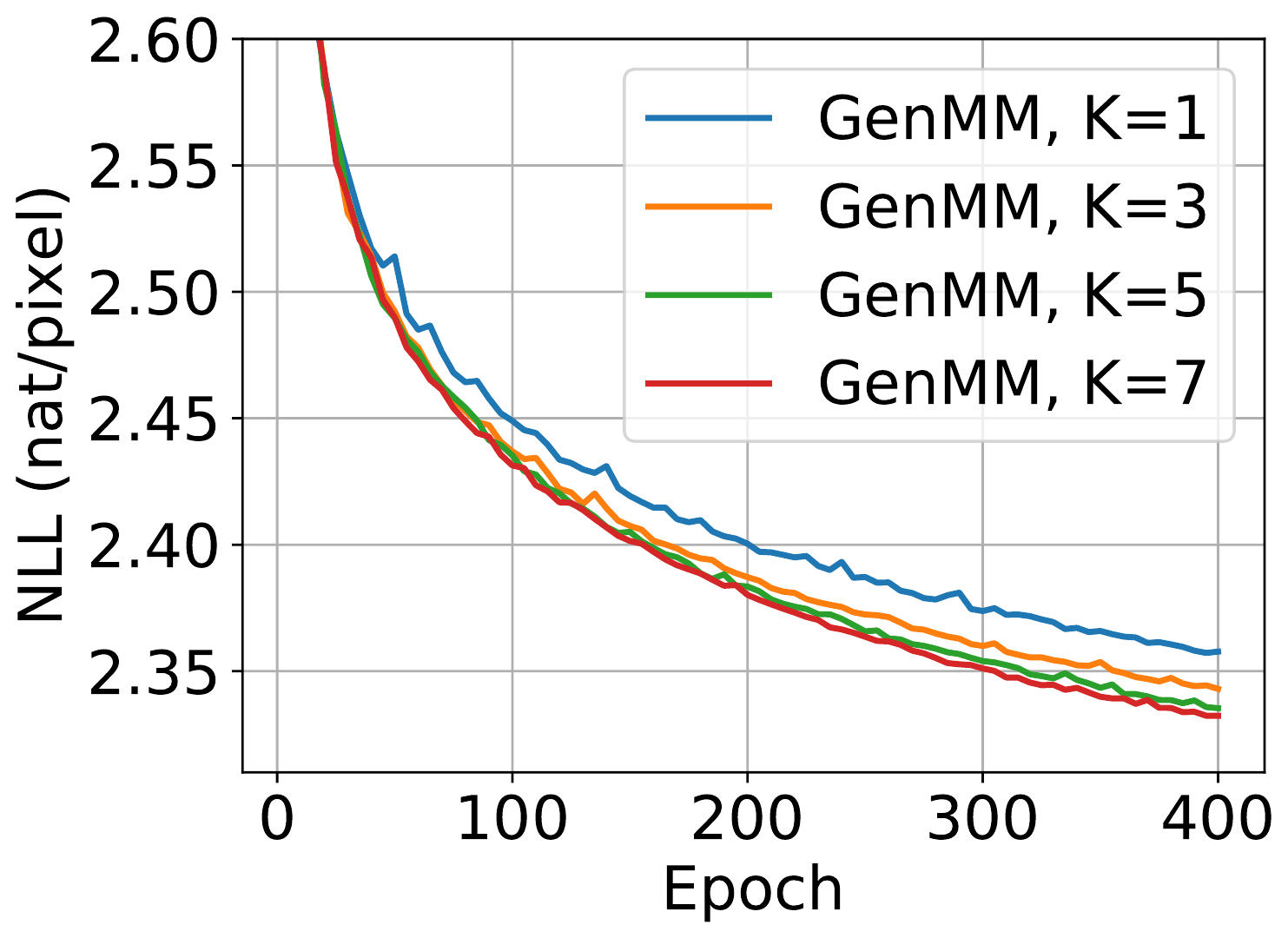}
    \vspace{-0.6cm}
    \caption{Dataset Fashion-MNIST}
    \label{fig-genmm-fsh-nll-curve}
  \end{subfigure}
  \vspace{-0.3cm}
  \caption{NLL (Unit: nat/pixel) of GenMM versus training epochs with different number of mixture component $k$. (a) $10000$ images from MNIST is used for training, (b) $10000$ images from Fashion-MNIST is used for training.}
  \label{fig:genmm-nll}
\end{figure*}

\begin{figure*}[!ht]
  \captionsetup[subfigure]{justification=centering}
  \centering
  \begin{subfigure}{.4\textwidth}
    \centering
    \includegraphics[width=1\linewidth]{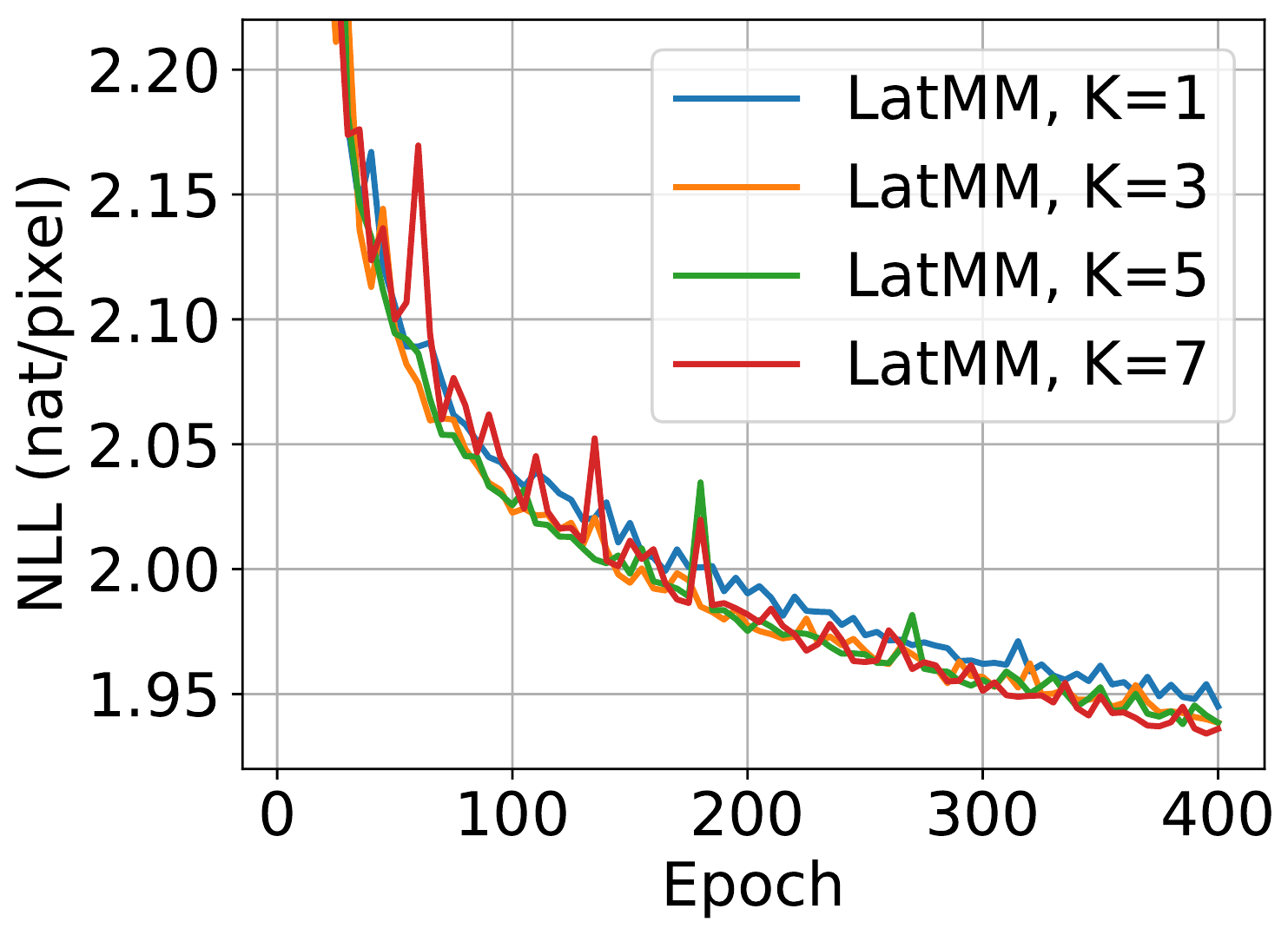}
    \vspace{-0.6cm}
    \caption{Dataset of MNIST}
    \label{fig-latmm-mnist-nll-curve}
  \end{subfigure}\hspace{1cm}
  \begin{subfigure}{.4\textwidth}
    \centering
    \includegraphics[width=1\linewidth]{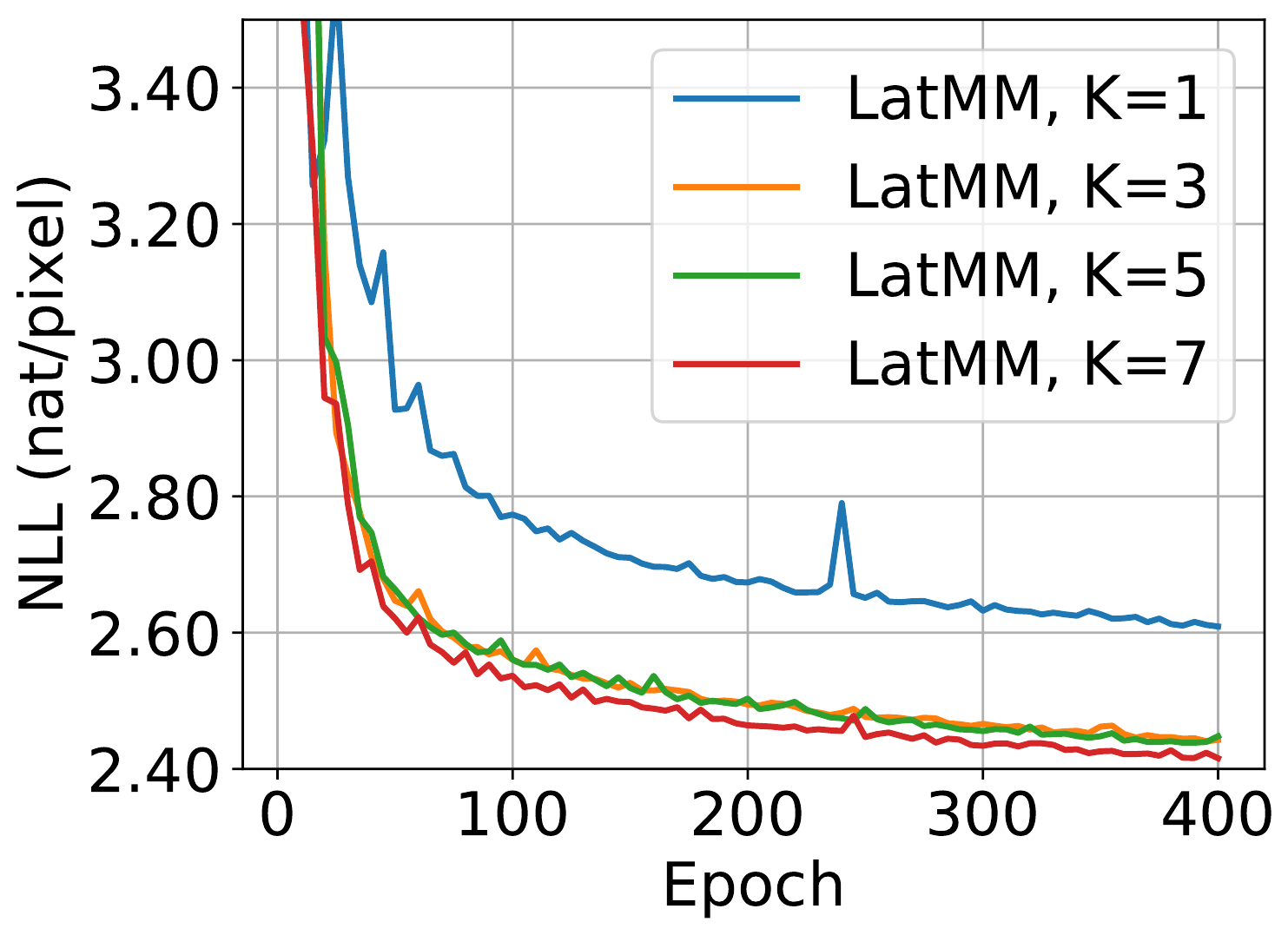}
    \vspace{-0.6cm}
    \caption{Dataset Fashion-MNIST}
    \label{fig-latmm-fsh-nll-curve}
  \end{subfigure}
  \vspace{-0.3cm}
  \caption{NLL (Unit: nat/pixel) of LatMM versus training epochs with different number of mixture component $k$. (a) $10000$ images from MNIST is used for training, (b) $10000$ images from Fashion-MNIST is used for training.}
  \label{fig:latmm-nll}
  \vspace{0.3cm}
\end{figure*}
\subsection{Experimental setup}\label{sub:exp-setup}

We use the flow-based neural network for implementing generators $\{\bm{g}_k\}_{k=1}^{K}$ in GenMM and $\bm{g}$ in LatMM. Specifically, we use the Glow structure \cite{2018arXiv180703039K} that is developed based on RealNVP \cite{2016arXiv160508803D} and NICE \cite{DBLP:journals/corr/DinhKB14}. As introduced in \autoref{subsec-flow-intro}, the operation in \autoref{eq-gl} is a coupling
layer. Since only a part of the input is mapped non-linearly after a coupling
layer and the rest part remains the same, permutation \cite{2016arXiv160508803D} or $1\times 1$ convolution operation \cite{2018arXiv180703039K} is used to alternate the part of signal that goes through identity mapping. In Glow structure, a basic \textit{flow step} is the concatenation of three layers: Actnorm (element-wise affine mapping)
$\rightarrow$ $1\times 1$ Convolution (for permutation purpose)
$\rightarrow$ Coupling layer. A \textit{flow block} consists of: a squeeze layer,
several flow steps, a split layer. A squeeze layer reshapes
signal. A split layer allows
flow model to split some elements of hidden layers out and model them
directly as standard Gaussian, which relieves computation burden. In our experiments, there are also split layers that make dimension of $\bm{z}$ one fourth of dimension $\bm{x}$, and split signal in hidden layers are modeled as standard Gaussian. 

All generators used in our experiments are randomly initialized before training. 
In addition, the prior distribution $\bm{\pi}$ update in both GenMM and LatMM is every $5$ epochs, {i.e.},  $t_{\pi} = 5$. For the training of LatMM, we adopt the Gamma distribution $\Gamma(\bm{\sigma}_k^{-1}; a, b)$ as the parameter prior for $\bm{\sigma}_k^{-1}, \forall k$, with shape parameter $a=2$ and rate parameter $b = 1$.
Our models are implemented using Pytorch and experiments are carried out on Tesla P100 GPU. Code is available at github repository\footnote{https://github.com/FirstHandScientist/EM-GM}.

\begin{figure*}[!tp]
  \captionsetup[subfigure]{justification=centering}
  \centering
  \begin{subfigure}{.24\textwidth}
    \centering
    \includegraphics[width=1\linewidth]{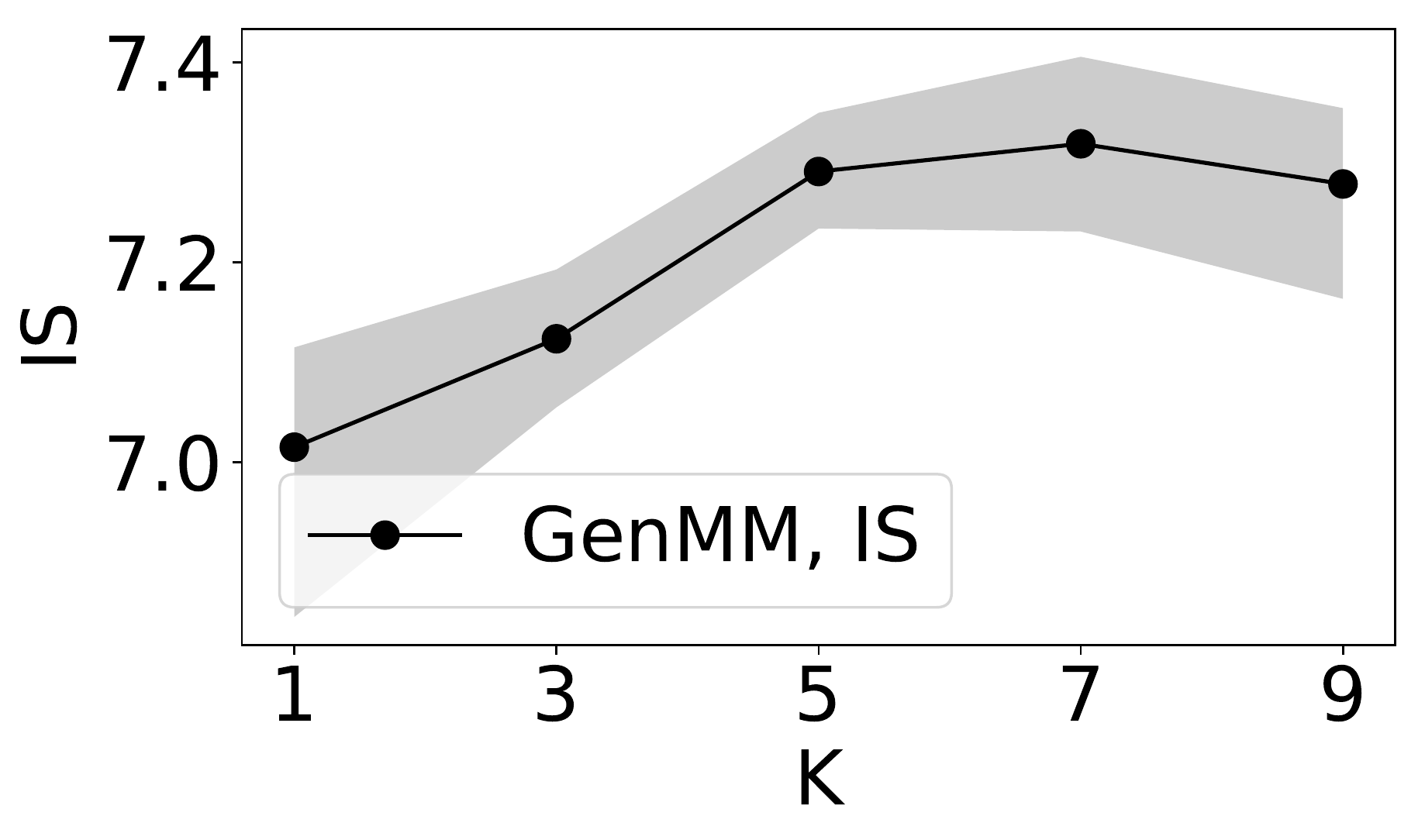}
  \end{subfigure}
  \vspace{-2pt}
  \begin{subfigure}{.24\textwidth}
    \centering
    \includegraphics[width=1\linewidth]{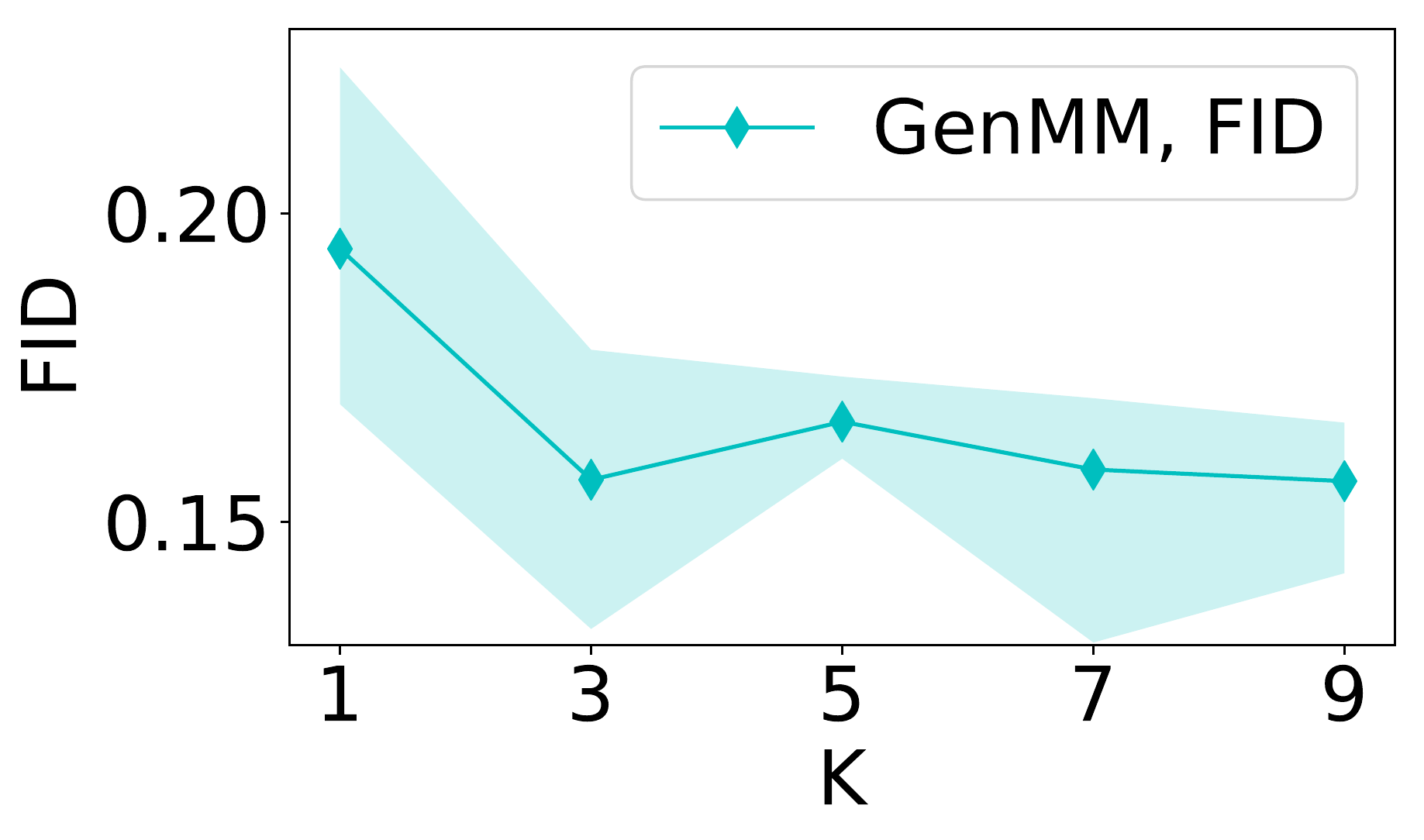}
  \end{subfigure}
  \centering
  \begin{subfigure}{.24\textwidth}
    \centering
    \includegraphics[width=1\linewidth]{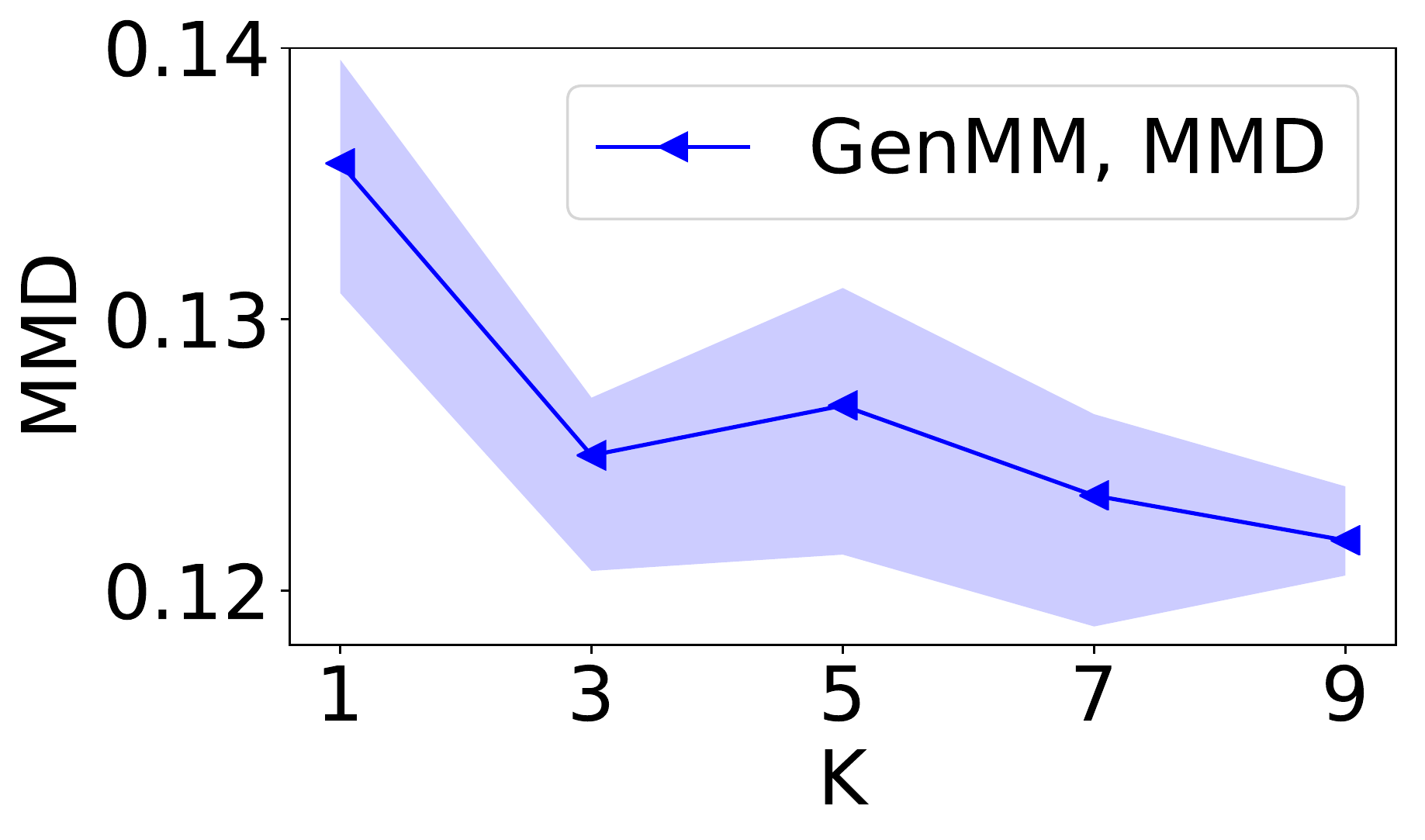}
  \end{subfigure}
  \centering
  \begin{subfigure}{0.24\textwidth}
    \centering
    \includegraphics[width=1\linewidth]{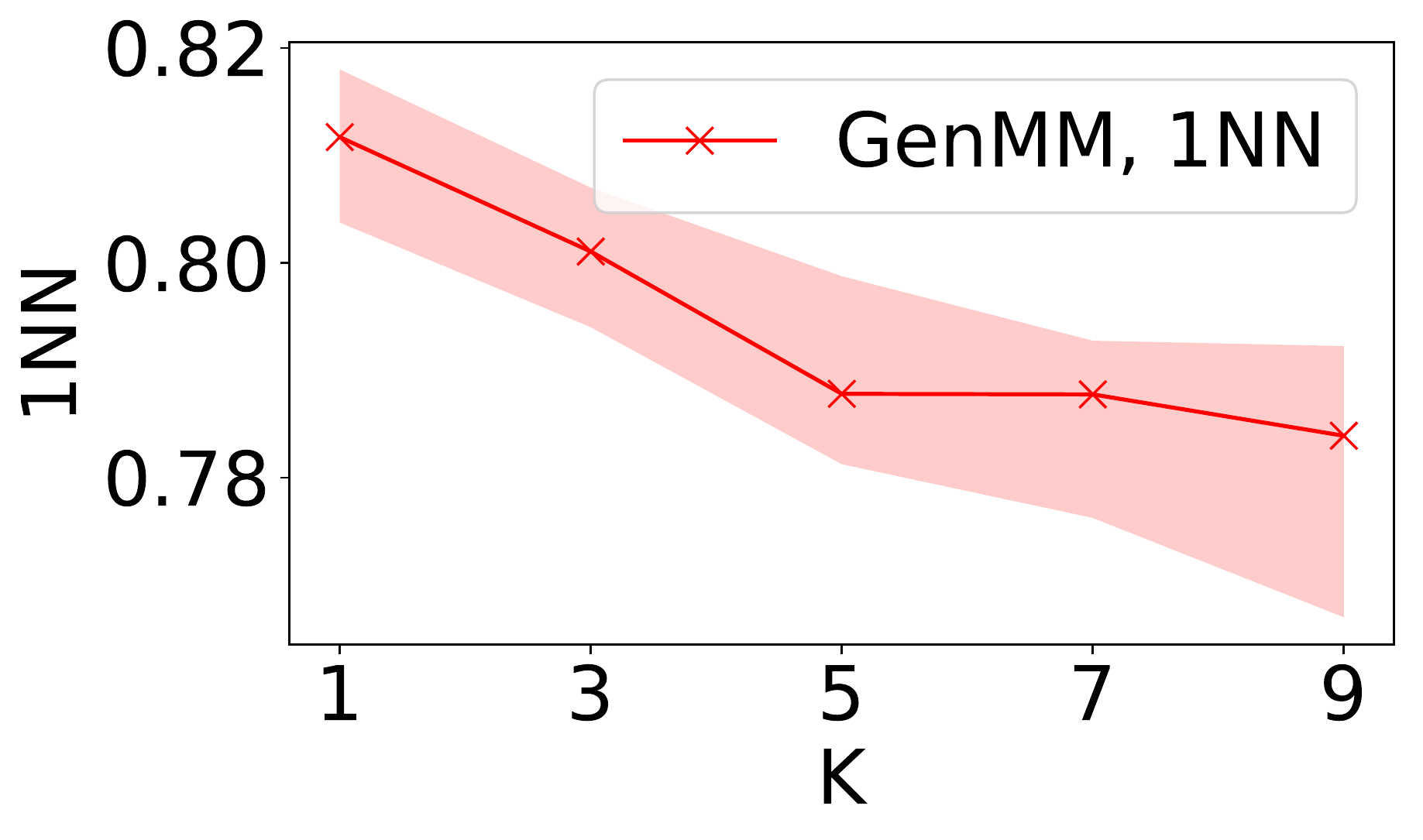}
  \end{subfigure}
  \centering
  \begin{subfigure}{.24\textwidth}
    \centering
    \includegraphics[width=1\linewidth]{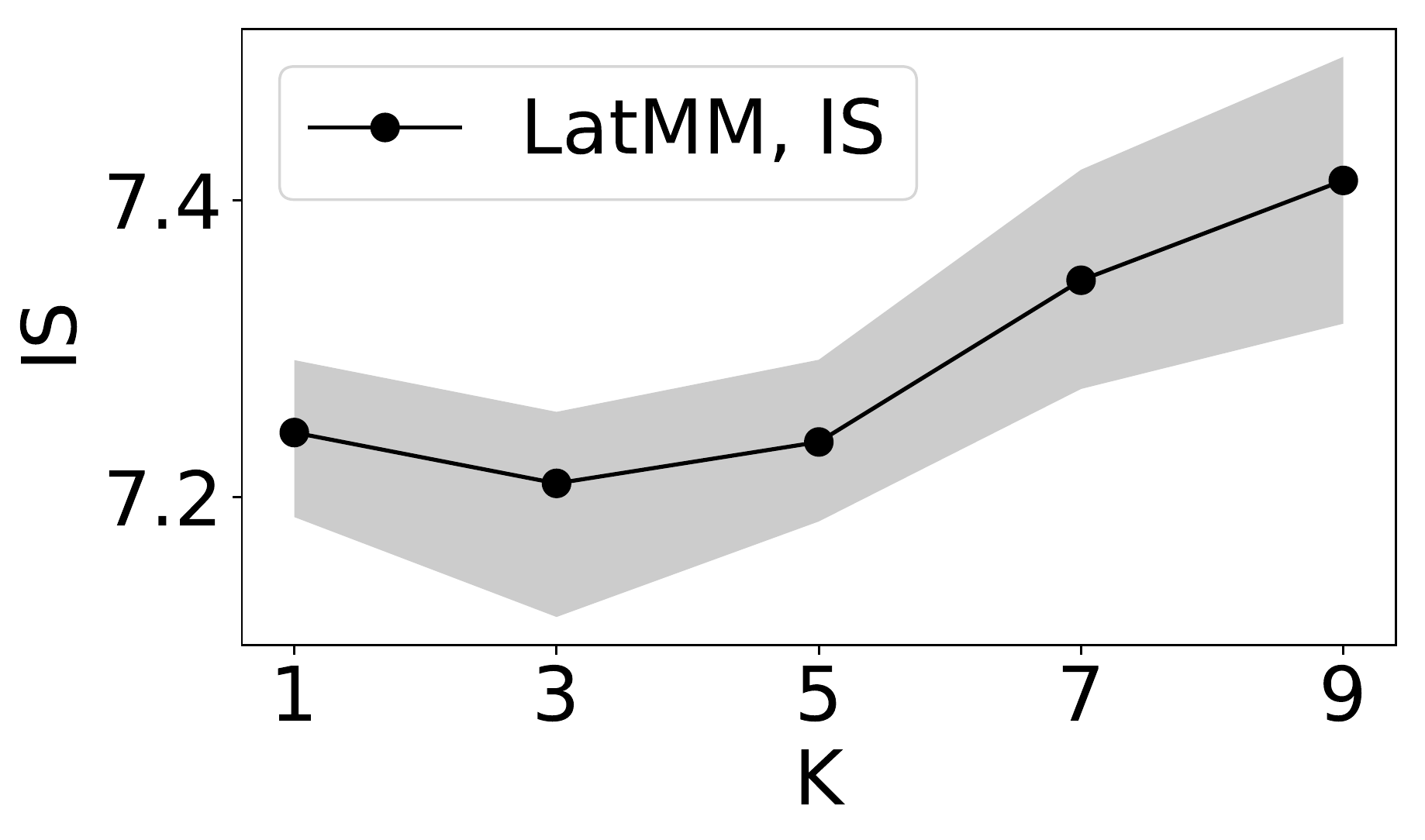}
  \end{subfigure}
  \centering
  \begin{subfigure}{.24\textwidth}
    \centering
    \includegraphics[width=1\linewidth]{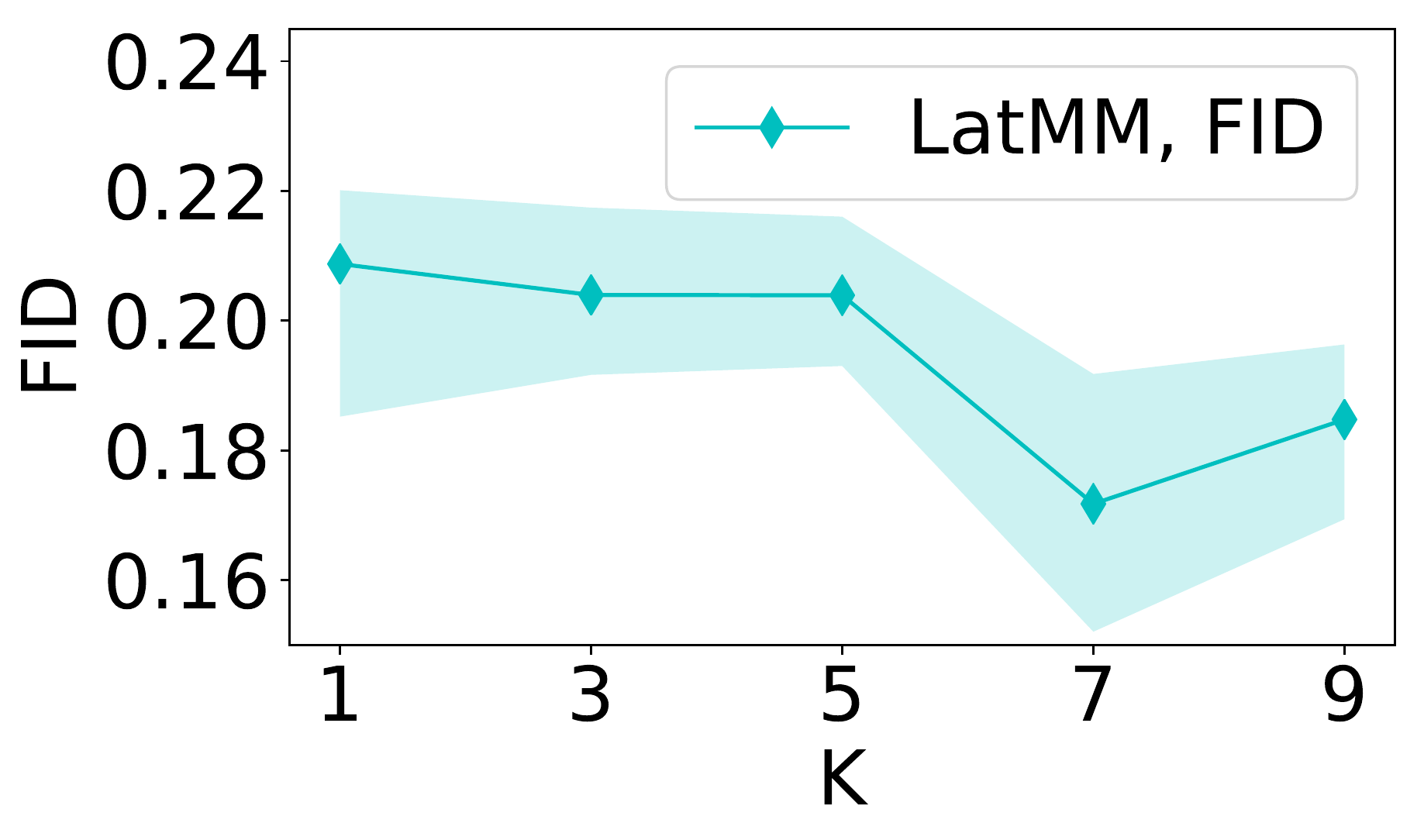}
  \end{subfigure}
  \centering
  \begin{subfigure}{.24\textwidth}
    \centering
    \includegraphics[width=1\linewidth]{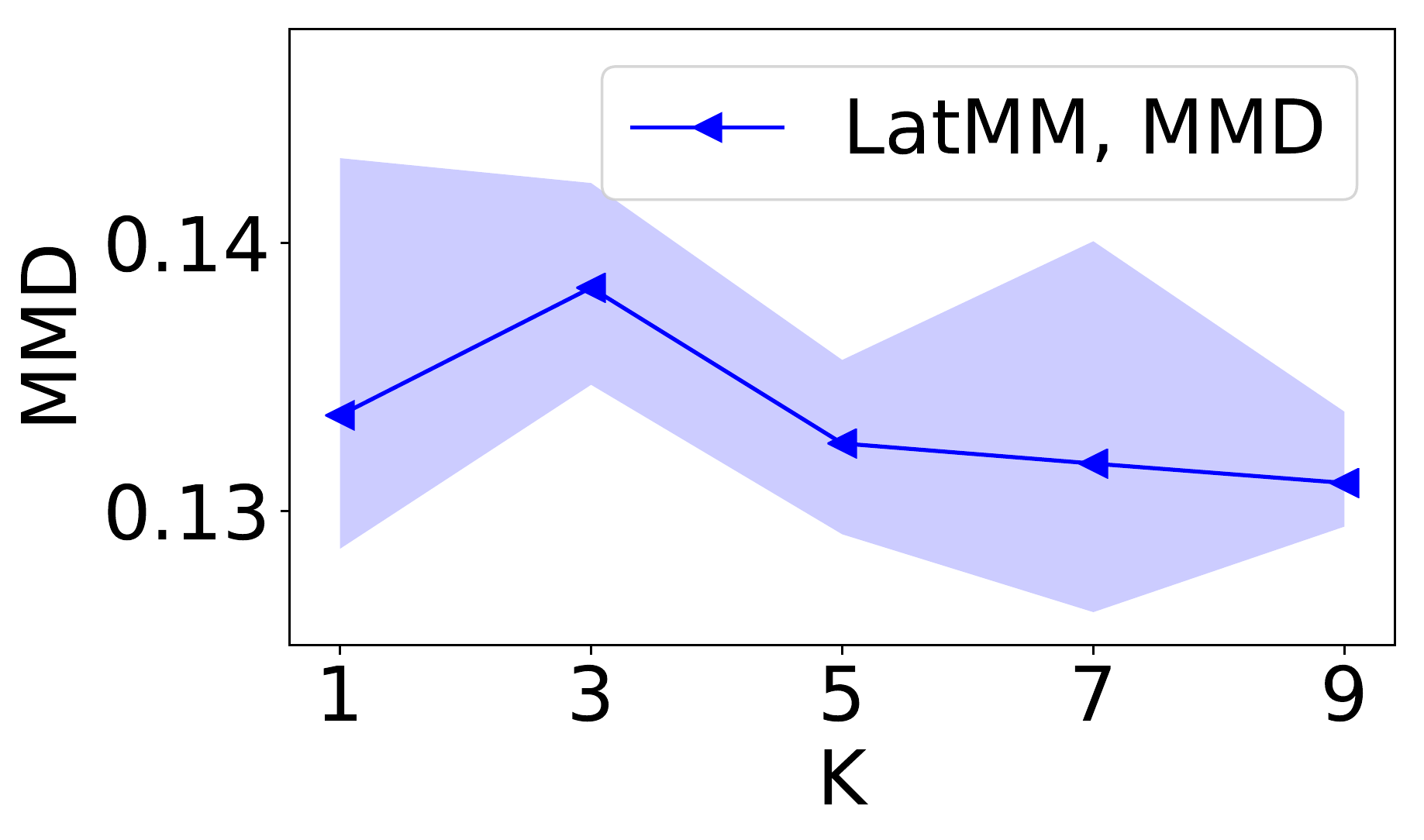}
  \end{subfigure}
  \begin{subfigure}{0.24\textwidth}
    \centering
    \includegraphics[width=1.\linewidth]{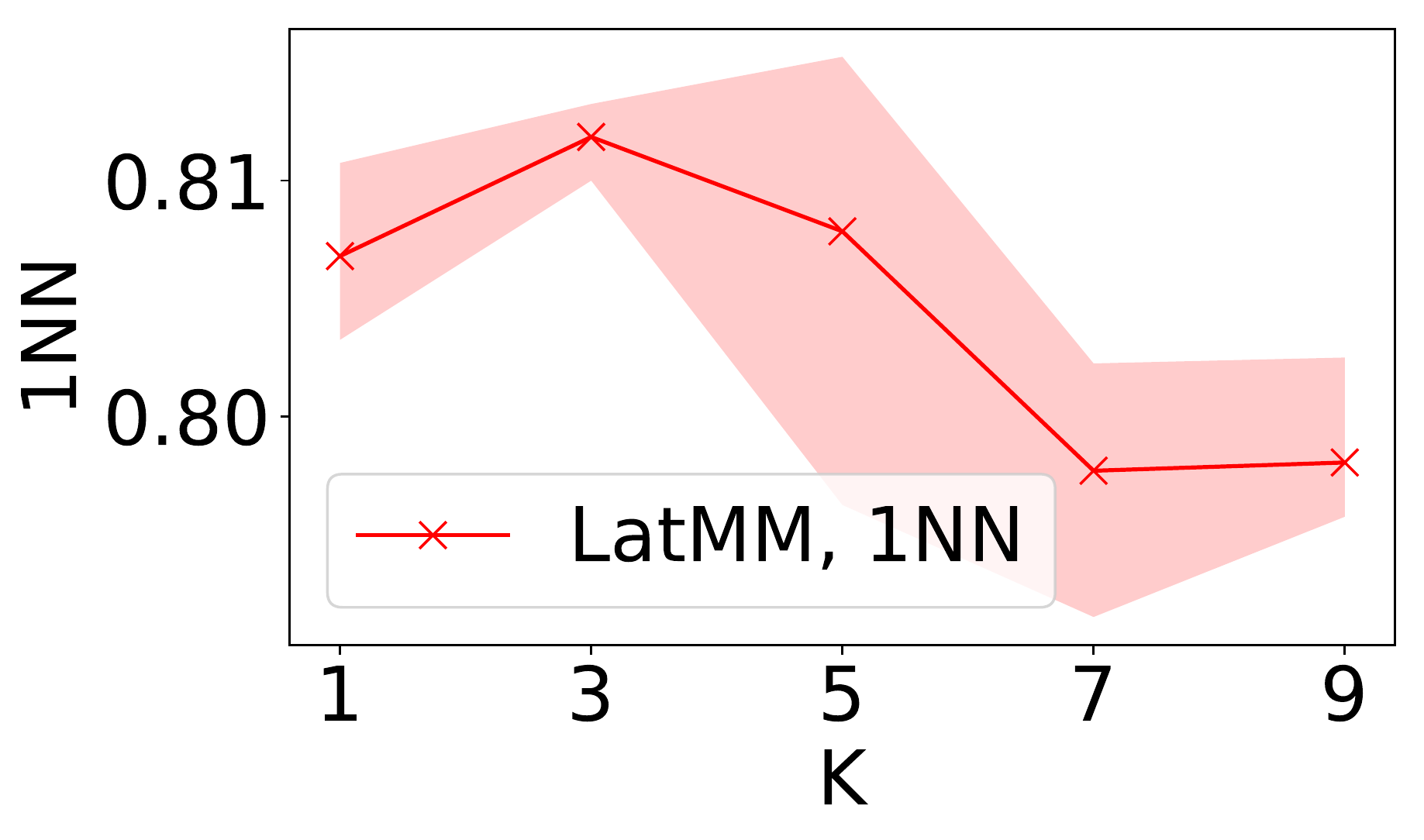}
  \end{subfigure}
  \vspace{-0.35cm}
  \caption{IS, FID, MMD and 1NN of GenMM and LatMM for MNIST dataset. GenMM and LatMM are trained on $60000$ images of MNIST. The results are evaluated on $2000$ samples per simulation point ($1000$ samples generated by GenMM or LatMM for corresponding $K$, $1000$ samples from MNIST). $5$ experiments are carried out for each assessed score at each setting of $K$. Curve with marker denotes mean score and shaded area denotes the range of corresponding score.}\label{fig-scores-k}
  \vspace{-0.15cm}
\end{figure*}

\begin{figure*}[!ht]
  \captionsetup[subfigure]{justification=centering}
  \centering
  \begin{subfigure}{.24\textwidth}
    \centering
    \includegraphics[width=1.0\linewidth]{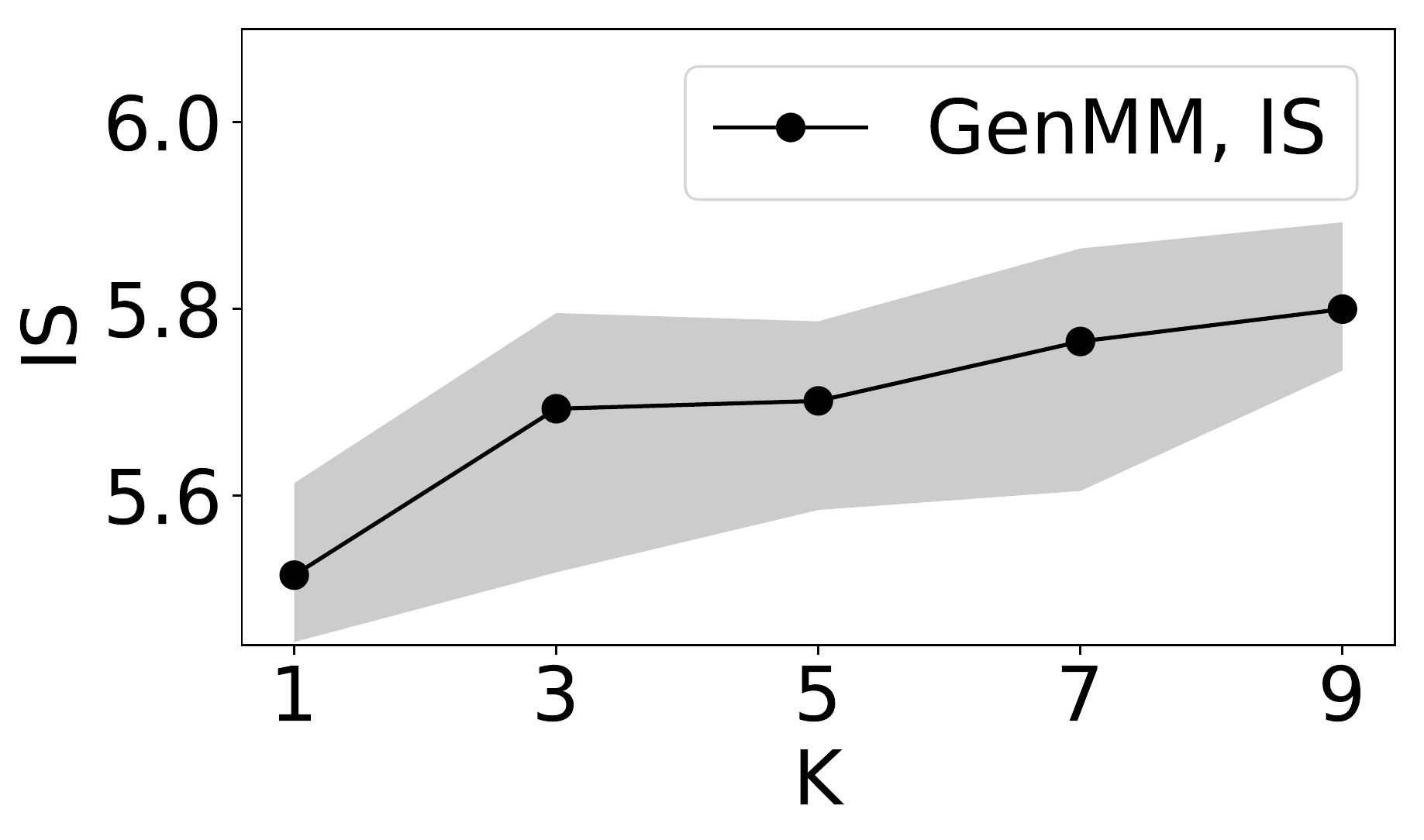}
  \end{subfigure}
  \vspace{-2pt}
  \begin{subfigure}{.24\textwidth}
    \centering
    \includegraphics[width=1.0\linewidth]{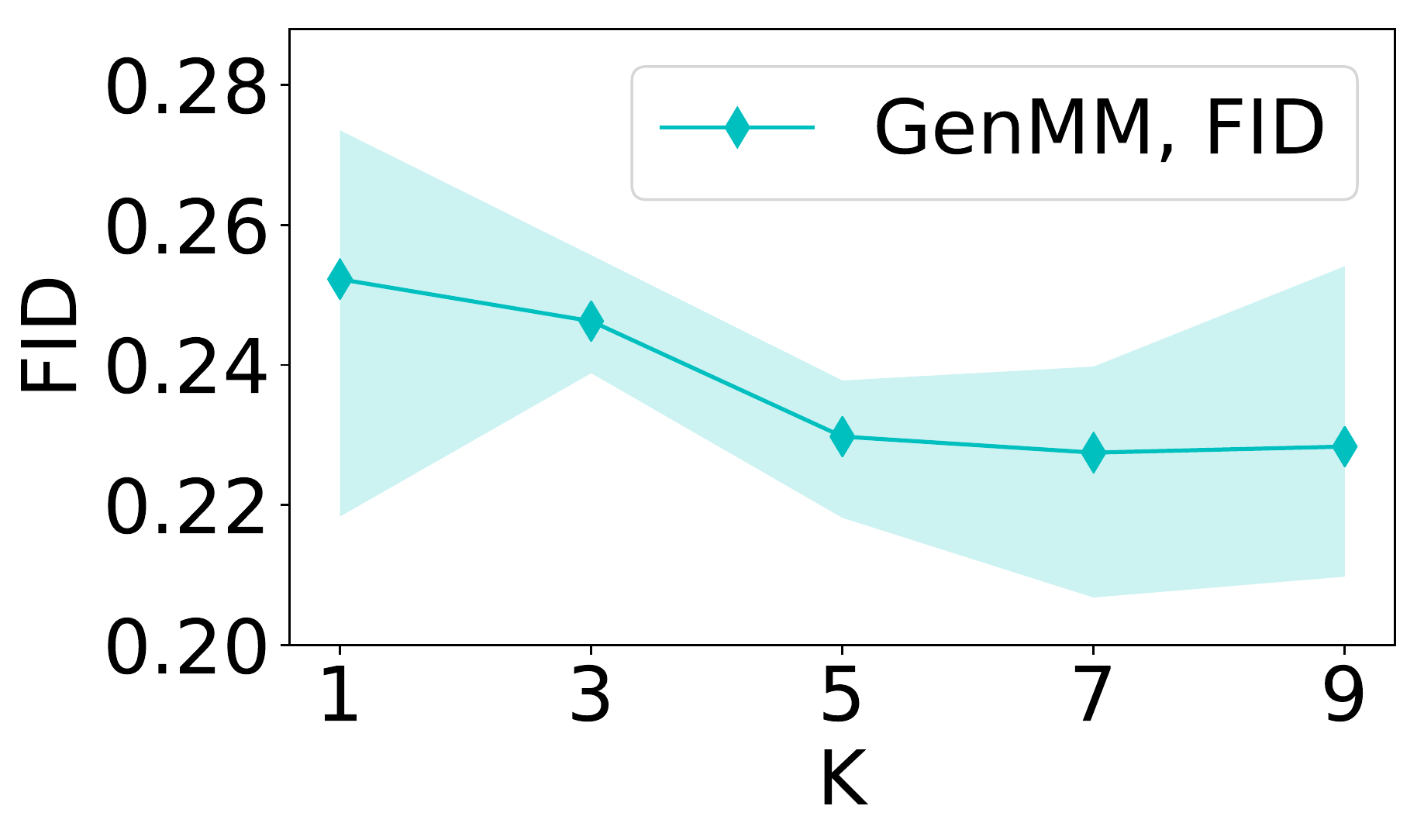}
  \end{subfigure}
  \centering
  \begin{subfigure}{.24\textwidth}
    \centering
    \includegraphics[width=1\linewidth]{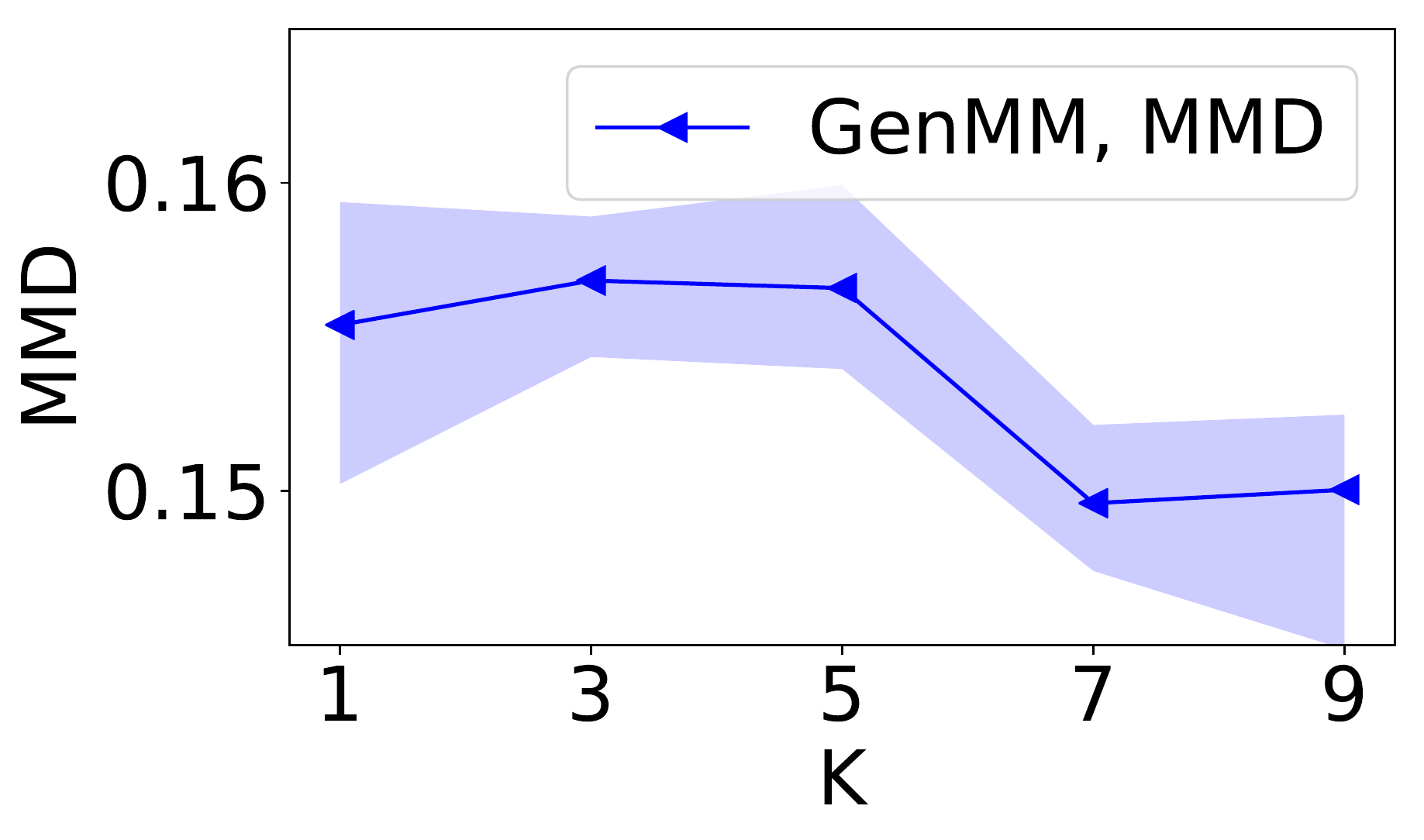}
  \end{subfigure}
  \centering
  \begin{subfigure}{0.24\textwidth}
    \centering
    \includegraphics[width=1\linewidth]{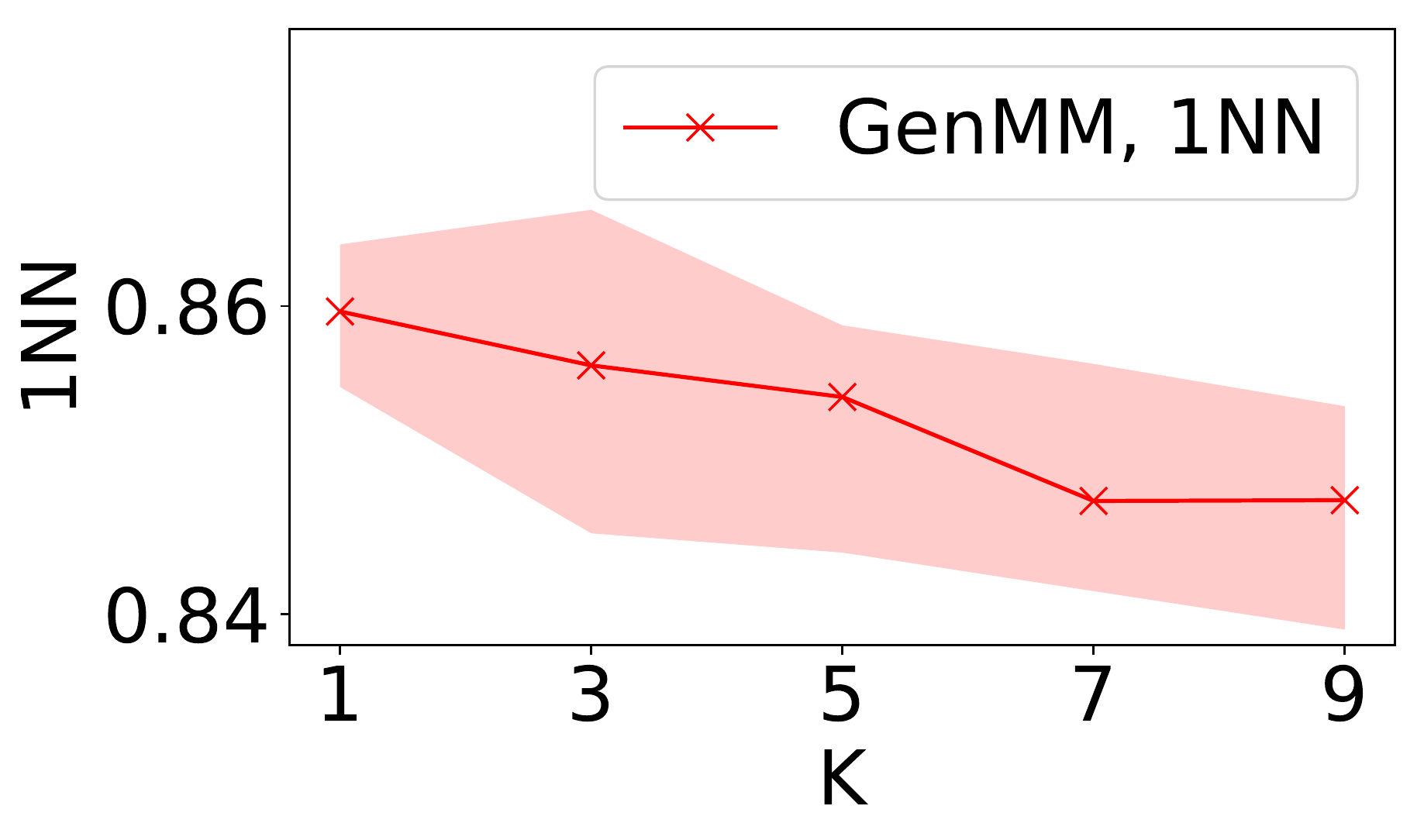}
  \end{subfigure}
  \centering
  \begin{subfigure}{.24\textwidth}
    \centering
    \includegraphics[width=1\linewidth]{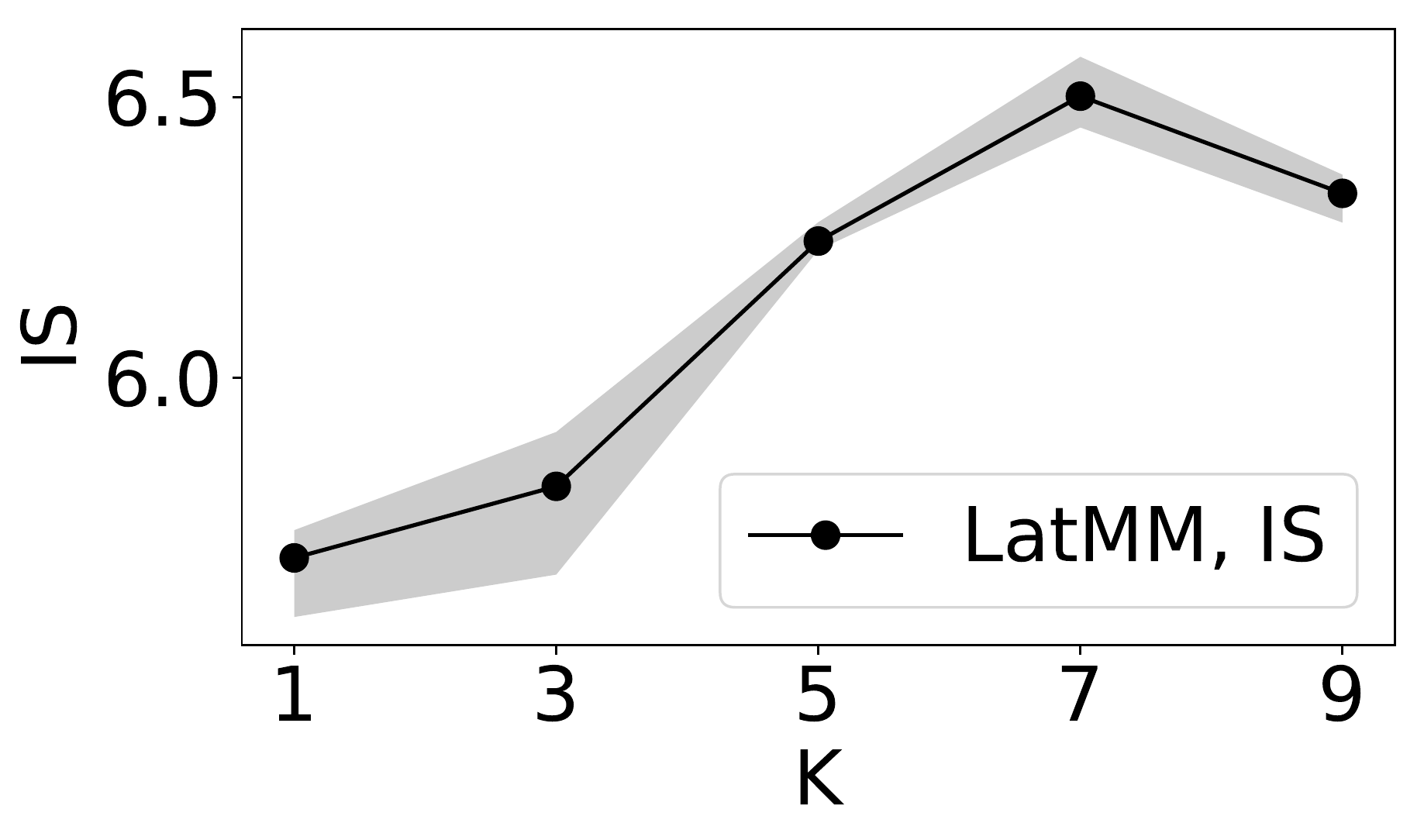}
  \end{subfigure}
  \centering
  \begin{subfigure}{.24\textwidth}
    \centering
    \includegraphics[width=1\linewidth]{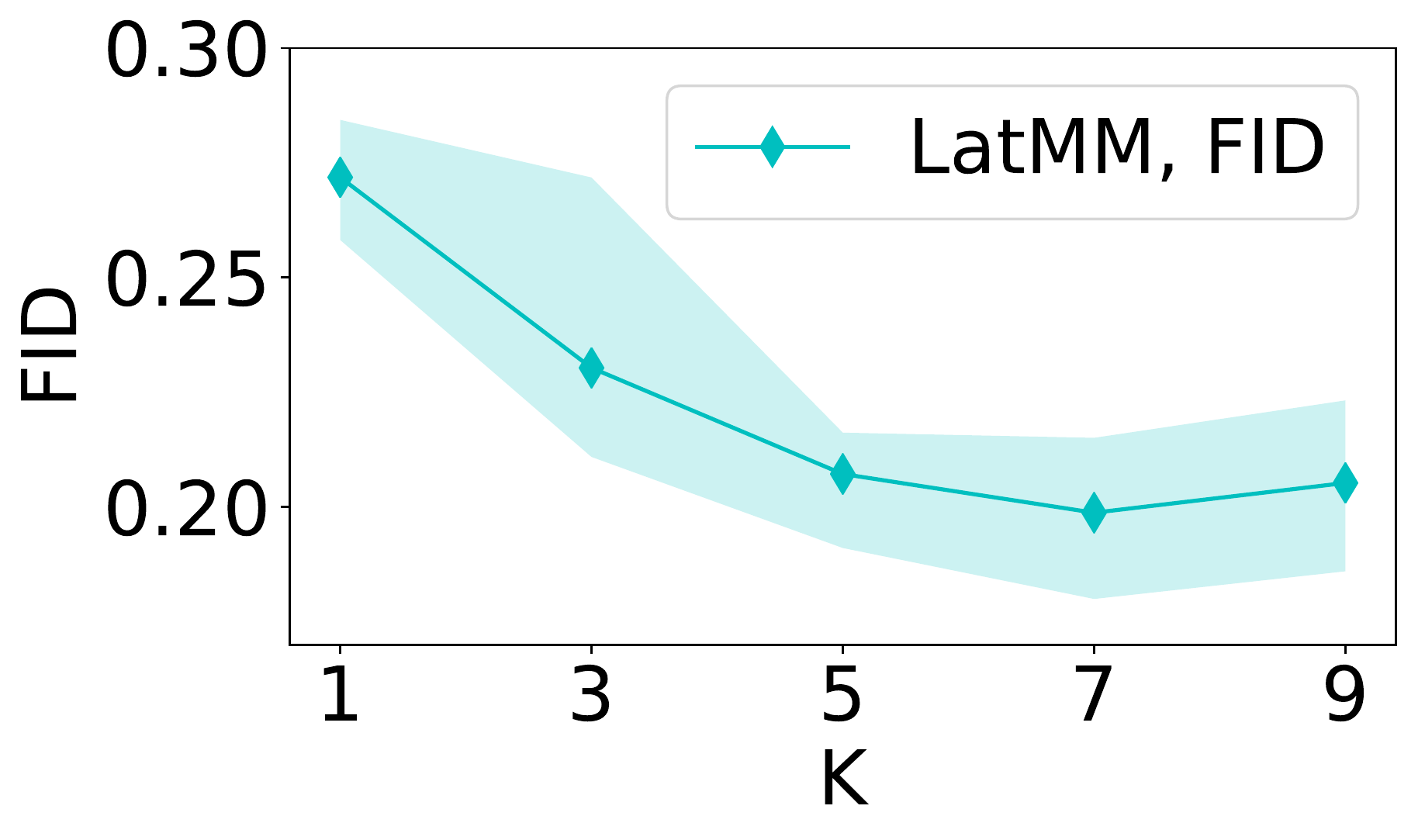}
  \end{subfigure}
  \centering
  \begin{subfigure}{.24\textwidth}
    \centering
    \includegraphics[width=1\linewidth]{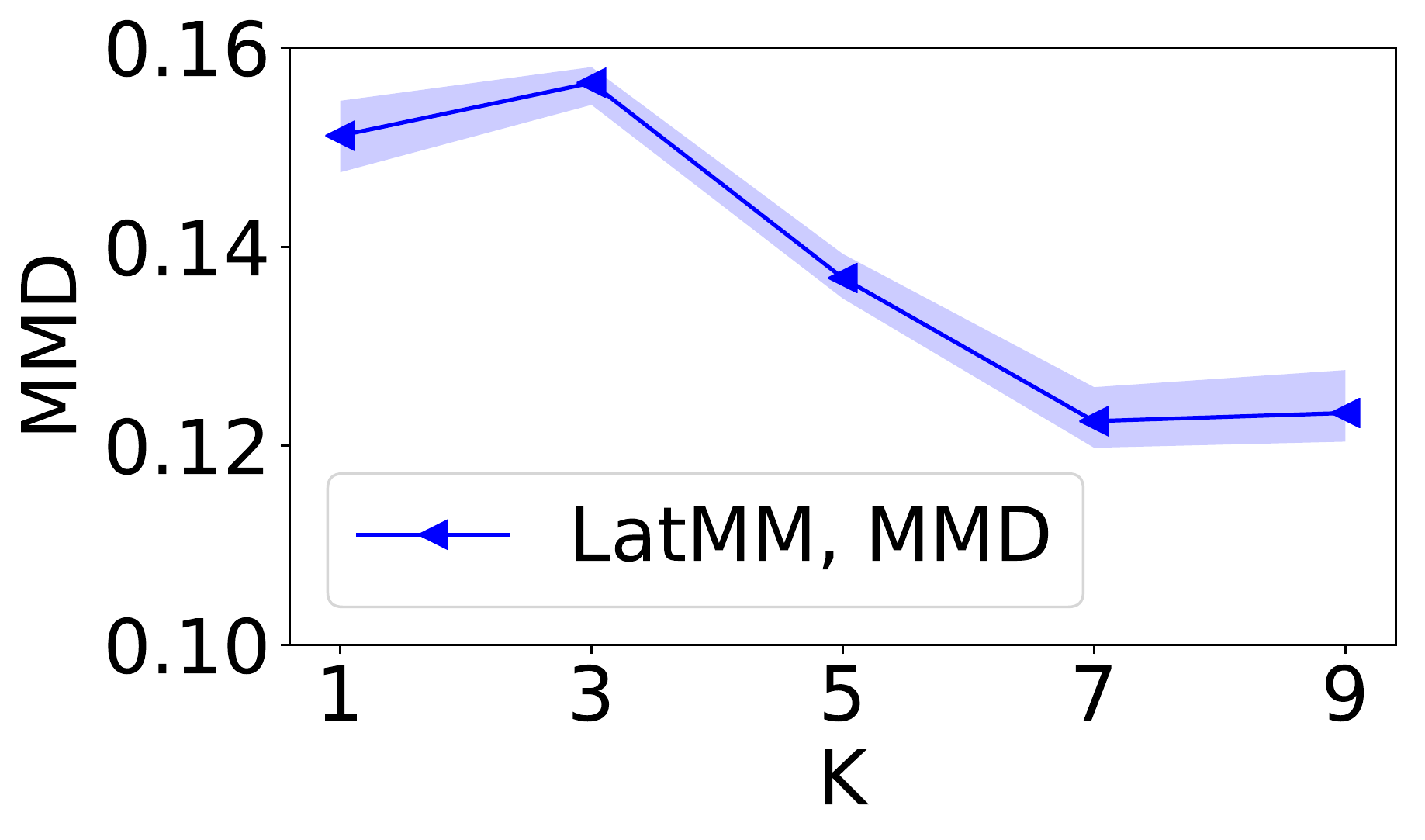}
  \end{subfigure}
  \begin{subfigure}{0.24\textwidth}
    \centering
    \includegraphics[width=1.\linewidth]{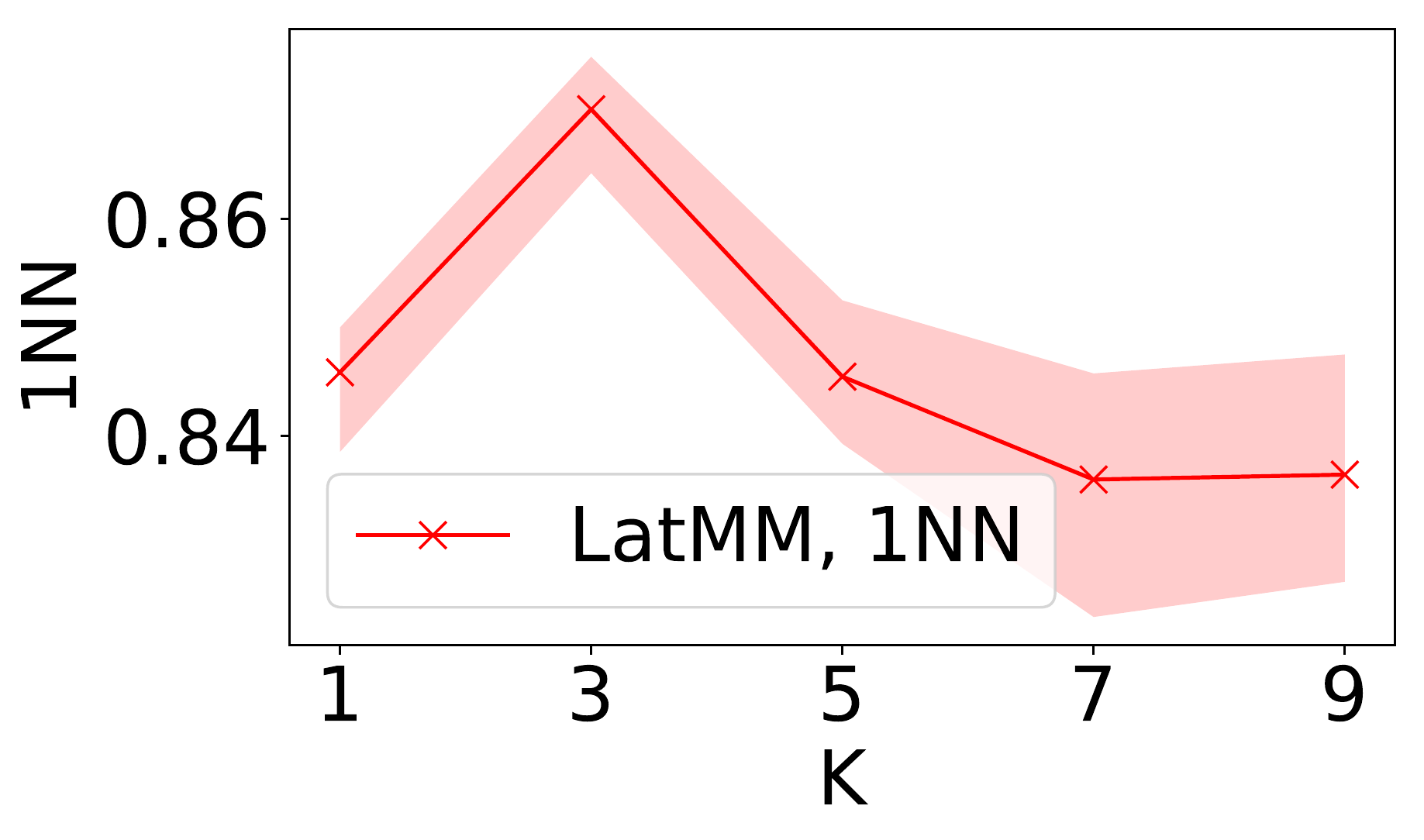}
  \end{subfigure}
  \vspace{-0.35cm}
  \caption{IS, FID, MMD and 1NN of GenMM and LatMM for
    Fashion-MNIST dataset. GenMM and LatMM are trained on $60000$ images of Fashion-MNIST. The results are evaluated on $2000$ samples
    per simulation point ($1000$ samples generated by GenMM or LatMM
    for corresponding $K$, $1000$ samples from Fashion-MNIST). $5$
    experiments are carried out for each assessed score at each
    setting of $K$. Curve with marker denotes mean score and shaded
    area denotes the range of corresponding
    score.}\label{fig-scores-k-FashionMNIST}
  \vspace{0.2cm}
\end{figure*}
\subsection{Evaluation of Proposed Models}
In order to see if the proposed algorithms of GenMM and LatMM help to improve probability distribution modeling capacity, we assess our proposed algorithms with varying number of mixtures ($K$). Since our models are explicit models, the negative log likelihood (NLL) is used for comparison of our models. Apart from NLL, another four different metrics are used in assessment of models.
The metrics are Inception Score (IS) \cite{NIPS2016_6125,2018arXiv180101973B,2018arXiv180607755X}, Frechet
Inception Distance (FID) \cite{2017arXiv170608500H}, Maximum Mean
Discrepancy (MMD) \cite{2018arXiv180607755X} and two-sample test based 1-Nearest
Neighbor (1NN) score \cite{2016arXiv161006545L}. IS measures statistically if a given sample can be recognized by a classifier with high confidence. A high IS stands for high quality for generated samples. FID measures a divergence between two distributions under testing by assuming these two distribution are both Gaussian. We also use MMD with Gaussian kernel to test how dissimilar two distributions are.
Small values of FID and MMD mean that the mixture distribution model
is close to the underlying distribution of dataset. 1NN score measures
if two given distributions are empirically close by computing 1NN accuracy
on samples from two distributions under testing. The closer 1NN score is to $0.5$, the more likely 
two distributions under testing are the same. Therefore, a high IS is good, low FID and MMD scores, and 1NN score close to 0.5 are good. We use the evaluation
framework of \cite{2018arXiv180607755X} to compute these metrics scores, where
we train a ResNet on datasets MNIST and Fashion-MNIST, respectively, as the feature extractor for evaluation of the four performance metrics.

\begin{table}
  \caption{The lowest NLL value of GenMM for curves in \autoref{fig:genmm-nll} (nat/pixel).}
  \label{tab:lowestNLLgenMM}
\begin{tabular}{l|c|c|c|c}
\toprule
    {Dataset} & K=1 &  K=3 &  K=5 &  K=7 \\                                         
\midrule                                                                                          MNIST &     1.8929 &    1.8797 &    1.8719 &    1.8579 \\
    FashionMNIST &   2.3571 &   2.3429 &   2.3353 &   2.3323 \\ 
  \end{tabular} 
\end{table}
The NLL curves of GenMM and LatMM models during model training phase are shown in \autoref{fig:genmm-nll} and \autoref{fig:latmm-nll}, respectively. Subsets of MNIST and Fashion-MNIST are used to train our mixture models in order to assess their performance w.r.t. NLL when different number of mixture components $K$ is used. All the curves in \autoref{fig:genmm-nll} and \autoref{fig:latmm-nll} show that NLL decreases as training epoch number increases in general. There is fluctuation of these decreasing NLL curves due to: (a) the iteration of E-step and M-step of EM, and (b) the use of batch-size gradient in optimization at M-step. In each figure of \autoref{fig:genmm-nll} and \autoref{fig:latmm-nll}, NLL curve corresponding to larger total number of mixture components, $K$, reaches smaller NLL value after traning for same number of epochs. The results are consistent since as $K$ increases, both GenMM and LatMM have smaller NLL. These results confirm our hypothesis that mixture models fit real data better. The lowest NLL values of curves in \autoref{fig:genmm-nll} in training GenMM models are reported in \autoref{tab:lowestNLLgenMM}.

\begin{figure*}[!ht]
  \captionsetup[subfigure]{justification=centering}
  \centering
  \begin{subfigure}[b]{0.24\textwidth}
    \centering
    \includegraphics[width=1\linewidth]{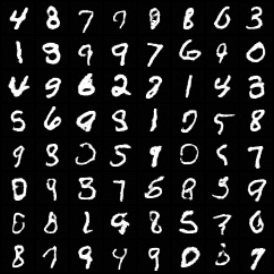}
    \caption{Generated Samples. (GenMM, K=7)}
  \end{subfigure}
  \centering
  \begin{subfigure}[b]{0.24\textwidth}
    \centering
    \includegraphics[width=1\linewidth]{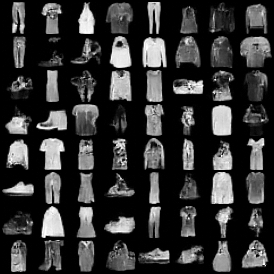}
    \caption{Generated samples. \\(GenMM, K=3)}
  \end{subfigure}
  \begin{subfigure}[b]{0.24\textwidth}
    \centering
    \includegraphics[width=1\linewidth]{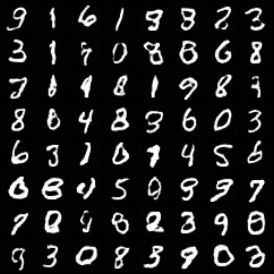}
    \caption{Generated samples. \\(LatMM, K=3)}
  \end{subfigure}
  \begin{subfigure}[b]{0.24\textwidth}
    \centering
    \includegraphics[width=1\linewidth]{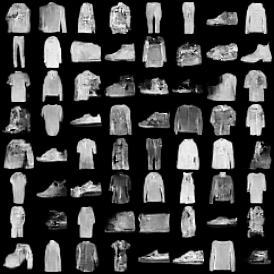}
    \caption{Generated samples. \\(LatMM, K=7)}
  \end{subfigure}
  \vspace{-0.3cm}
  \caption{Generated samples by GenMM and LatMM for MNIST and Fashion-MNIST datasets.}\label{fig-demo-samples}
\end{figure*}

\begin{figure*}[!ht]
  \centering
  \captionsetup[subfigure]{justification=centering}
  \begin{subfigure}[b]{0.3\textwidth}
    \centering
    \includegraphics[width=1\linewidth]{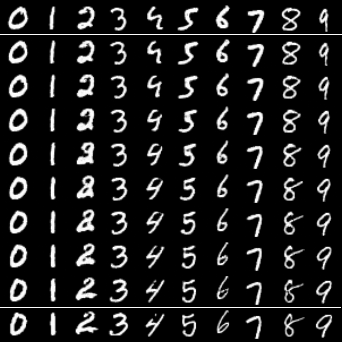}
    \caption{Interpolation by GenMM, K=7. Identity of $\bm{g}_k$ is chosen by $\argmax_{k}\; \gamma_k$.}\label{fig-interpo-genmm1}
  \end{subfigure}
  \hspace{10pt}
  \begin{subfigure}[b]{0.3\textwidth}
    \centering
    \includegraphics[width=1\linewidth]{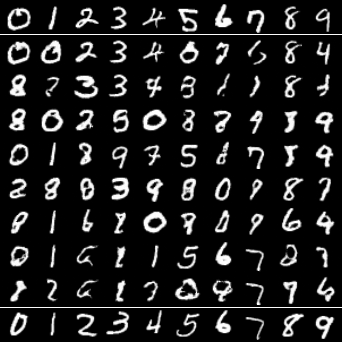}
    \caption{Interpolation by GenMM, K=7. Identity of $\bm{g}_k$ is randomly chosen.\\~ }\label{fig-interpo-genmm2}
  \end{subfigure}
  \hspace{10pt}
  \begin{subfigure}[b]{0.3\textwidth}
    \centering
    \includegraphics[width=1\linewidth]{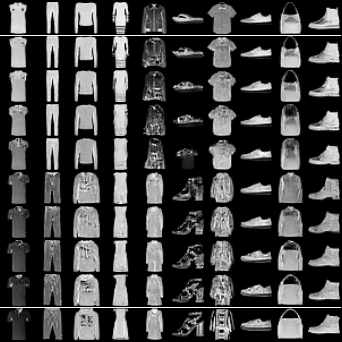}
    \caption{Interpolation by GenMM, K=9. Identity of $\bm{g}_k$ is chosen by $\argmax_{k}\; \gamma_k$.}\label{fig-interpo-genmm3}
  \end{subfigure}
  \vspace{0.22cm}
  \centering
  \captionsetup[subfigure]{justification=centering}
  \begin{subfigure}[b]{0.3\textwidth}
    \centering
    \includegraphics[width=1\linewidth]{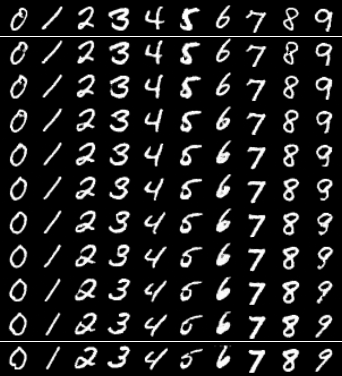}
    \caption{Interpolation by LatMM, K=9.}\label{fig-interpo-latmm1}
  \end{subfigure}
  \hspace{10pt}
  \begin{subfigure}[b]{0.3\textwidth}
    \centering
    \includegraphics[width=1\linewidth]{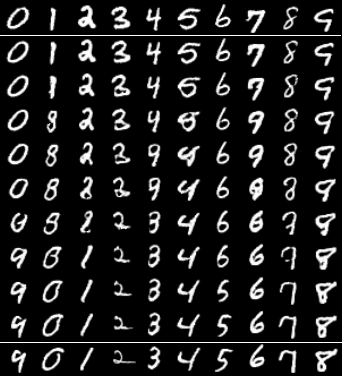}
    \caption{Interpolation by LatMM, K=9.}\label{fig-interpo-latmm2}
  \end{subfigure}
  \hspace{10pt}
  \begin{subfigure}[b]{0.3\textwidth}
    \centering
    \includegraphics[width=1\linewidth]{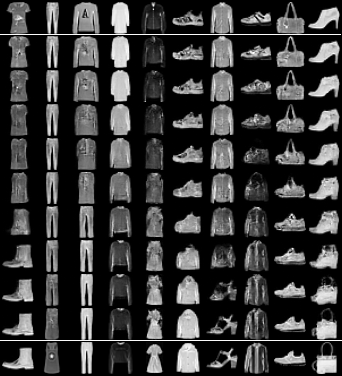}
    \caption{Interpolation by LatMM, K=9.}\label{fig-interpo-latmm3}
  \end{subfigure}\vspace{-0.5cm}
  \caption{Interpolation in latent space to generate samples . First
    and last rows are real samples from MNIST. For each row, images
    are generated by interpolating latent variables of empirical
    images in first and last rows.}\label{fig-interpo}
  \vspace{0.2cm}
  \label{fig-app-interpolation}
\end{figure*}

As for the scores of IS, FID, MMD, and 1NN, we increase $K$ for the proposed models and check
how the four metrics vary. We do several trials of evaluation and
report the results. The results are shown in
\autoref{fig-scores-k} for MNIST datset and
\autoref{fig-scores-k-FashionMNIST} for Fashio-MNIST dataset. Let us
first address the results in \autoref{fig-scores-k}. It can be
observed that IS increases with number of mixtures $K$. The IS
improvement shows a saturation and decreasing trend for GenMM when
$K=9$. The FID, MMD and 1NN scores show a decreasing trend with
increase in $K$. Their trends also saturate with increase in $K$. The
trends obey a statistical knowledge that performance improves with
increase in the model complexity, and then deteriorates if the model
complexity continues to increase. As that in \autoref{fig-scores-k}, similar trends are also observed in \autoref{fig-scores-k-FashionMNIST}. In some cases, performance for $K=3$ is poorer than $K=1$. We assume that the random initialization of parameters in mixture models has a high influence in this regard. 
Considering the trends in all the scores for both the figures, we can conclude that GenMM and LatMM can model the underlying distributions of data and the mixture models are good.

\subsection{Sample Generating and Interpolation}

\begin{table*}
  \caption{Test Accuracy Table of GenMM for Classification Task}\label{tab:acc-classification}
  \begin{tabular}{l|c|c|c|c|c|c|c} \toprule
    {Dataset} &  K=1 &  K=2 &  K=3 &  K=4 & K=10 & K=20 & State Of Art \\ \midrule
    Letter & 0.9459 &  0.9513 & 0.9578  & 0.9581 & 0.9657 & \textbf{0.9674} & {0.9582} \cite{tang2016extreme} \\ \midrule
    Satimage & 0.8900 & 0.8975 & 0.9045 & 0.9085 & 0.9105 & \textbf{0.9160} & 0.9090 \cite{jiang2013k-svd}   \\ \midrule
    Norb & 0.9184 & 0.9257 & 0.9406 & 0.9459 & 0.9538 & \textbf{0.9542} & 0.8920 \cite{pmlr-v5-salakhutdinov09a}  \\
  \end{tabular}
\end{table*}
\begin{figure*}[!ht]
  \captionsetup[subfigure]{justification=centering}
  \centering
  \begin{subfigure}{.33\textwidth}
    \centering
    \includegraphics[width=1\linewidth]{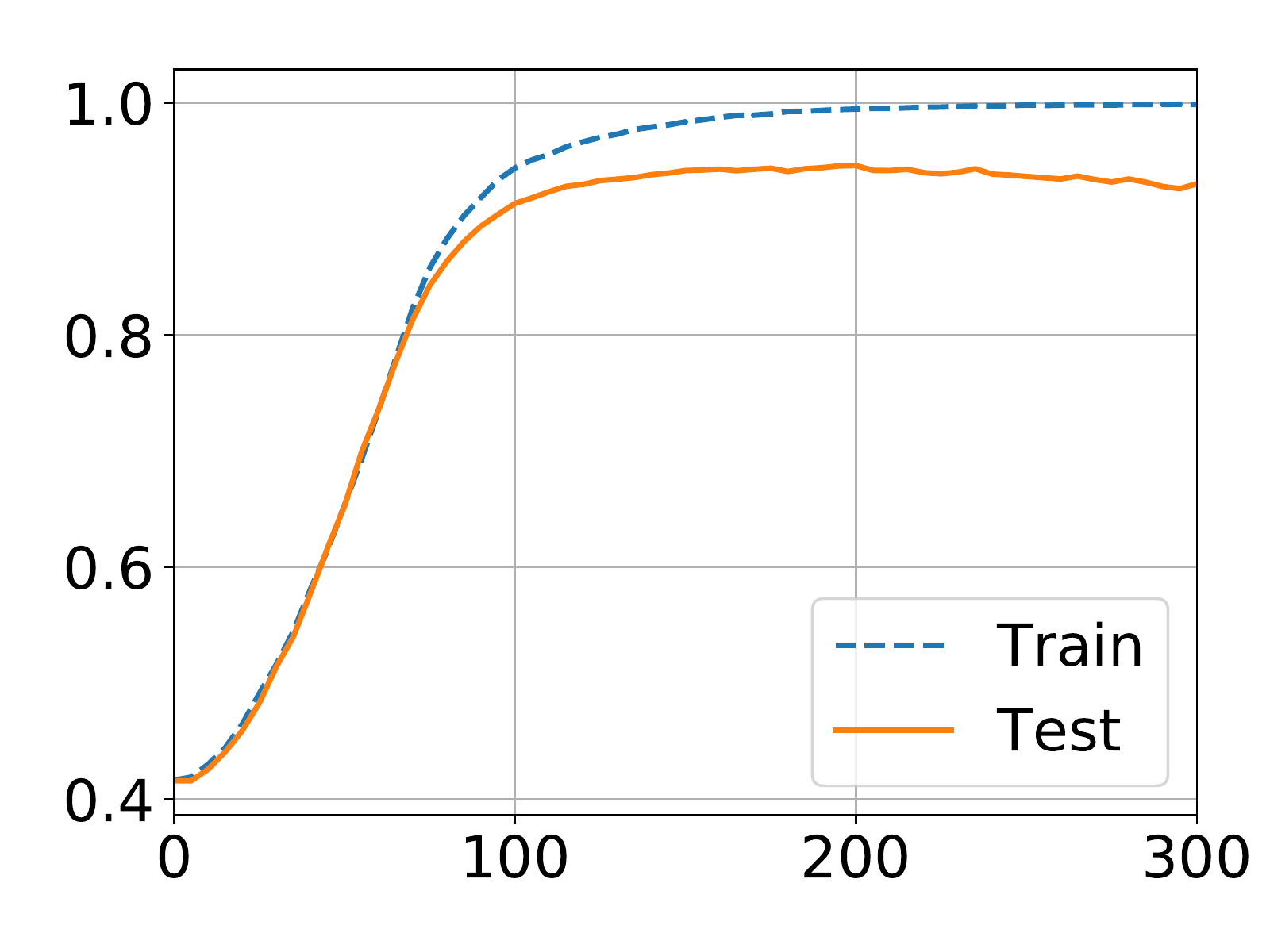}
    \vspace{-0.8cm}
    \caption{K=1}
  \end{subfigure}
  \begin{subfigure}{.33\textwidth}
    \centering
    \includegraphics[width=1\linewidth]{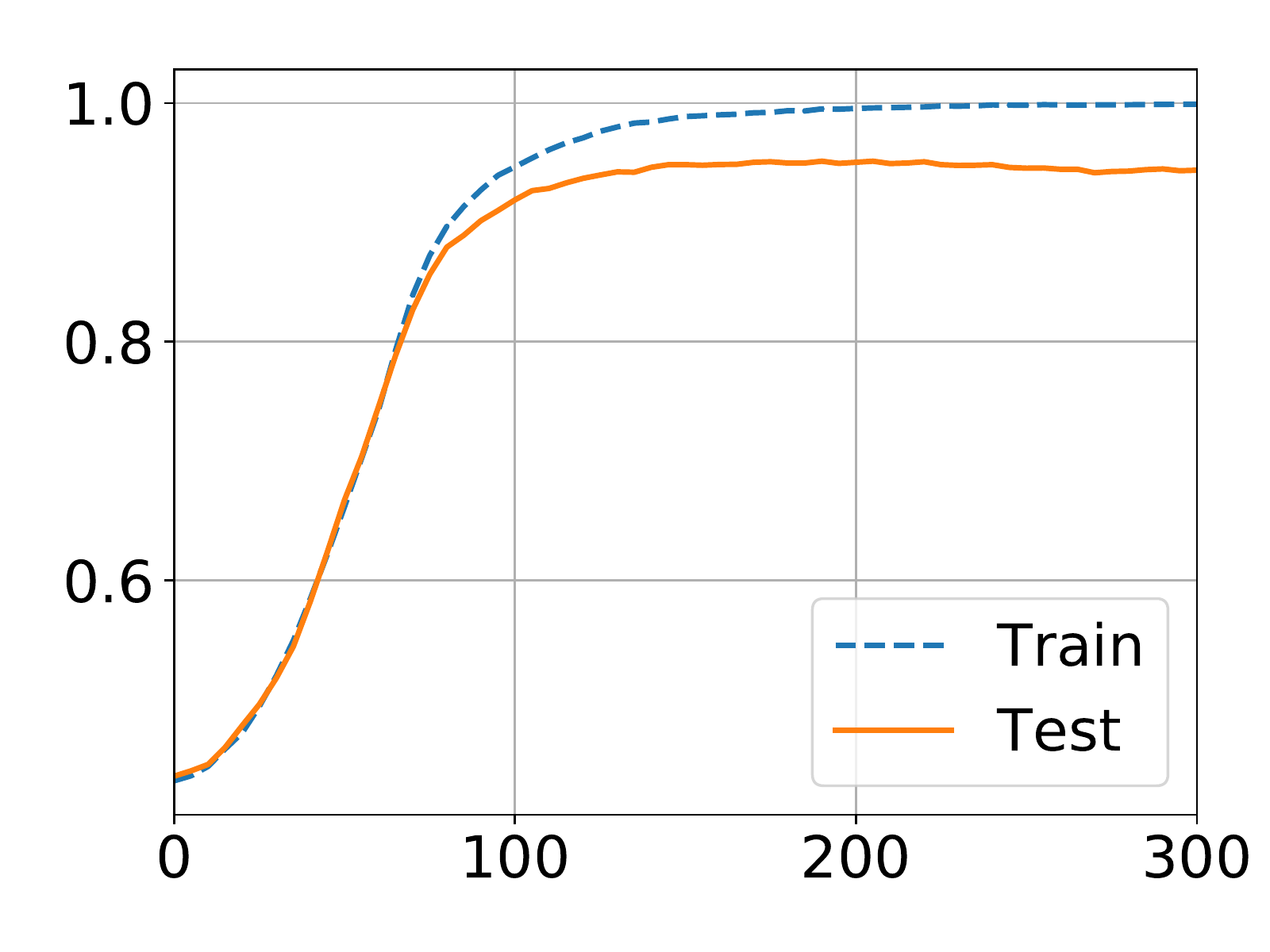}
    \vspace{-0.8cm}
    \caption{K=2}
  \end{subfigure}
  \centering
  \begin{subfigure}{.33\textwidth}
    \centering
    \includegraphics[width=1\linewidth]{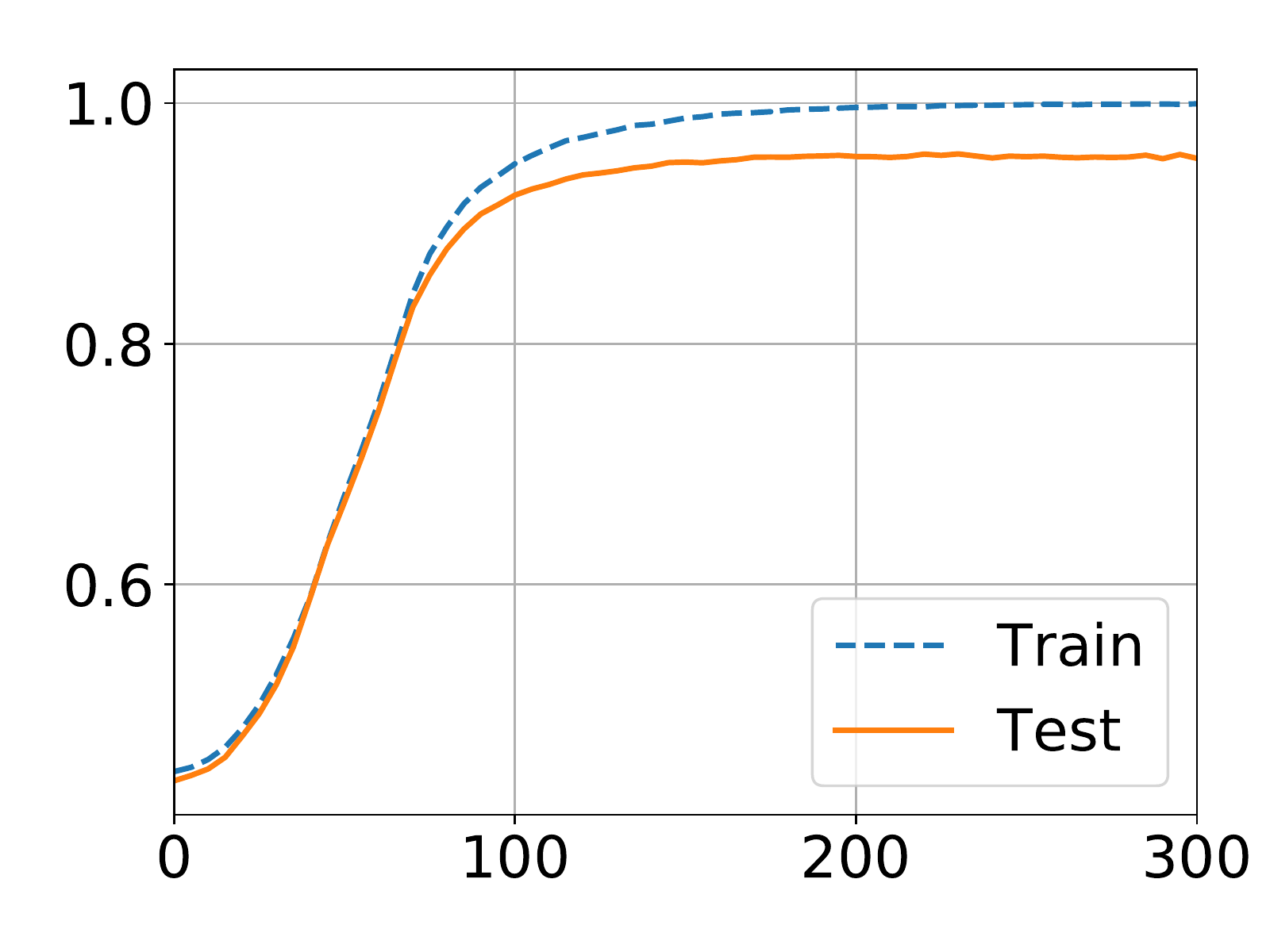}
    \vspace{-0.8cm}
    \caption{K=3}
  \end{subfigure}
  \centering
  \begin{subfigure}{0.33\textwidth}
    \centering
    \includegraphics[width=1\linewidth]{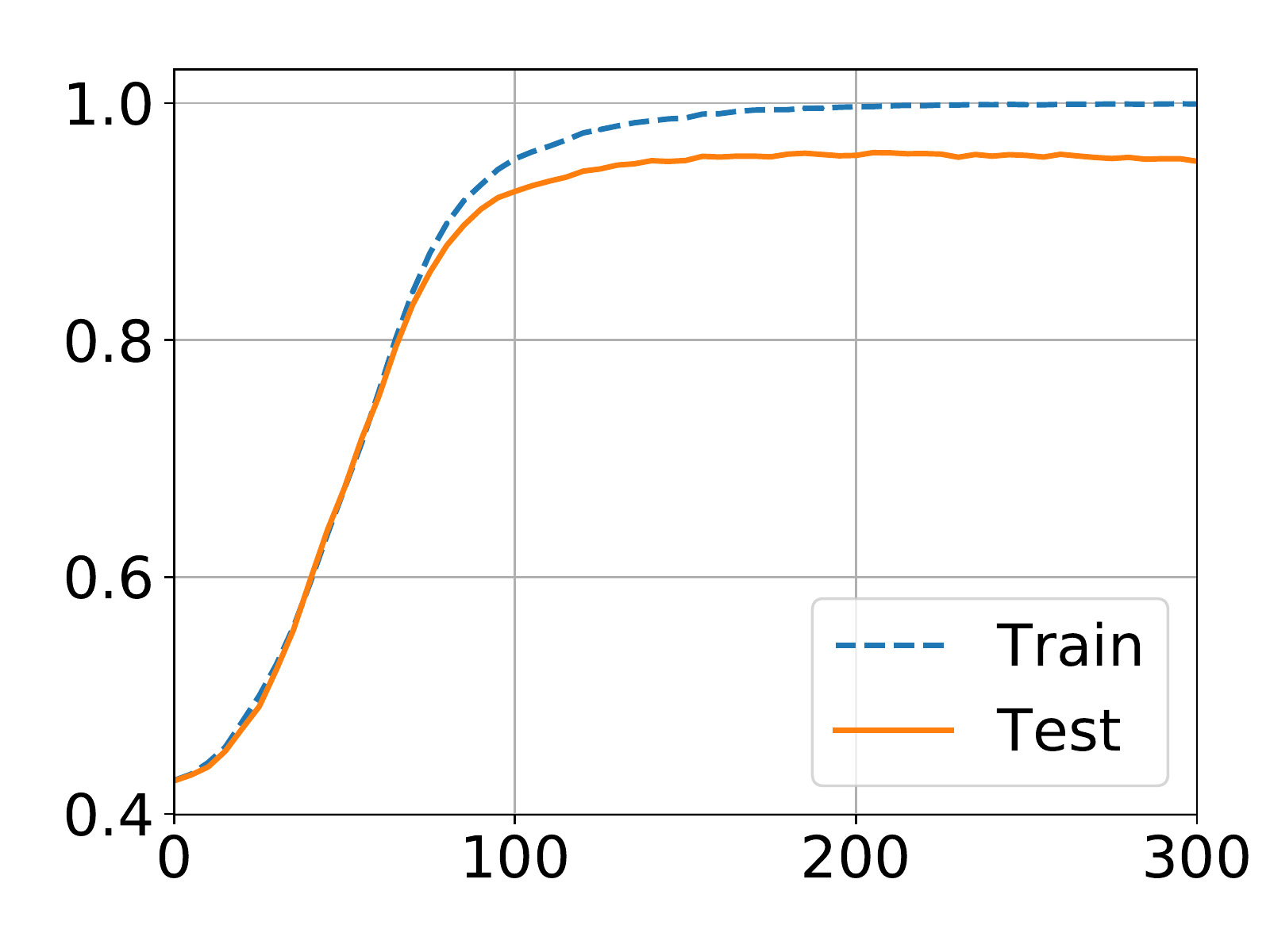}
    \vspace{-0.8cm}
    \caption{K=4}
  \end{subfigure}
  \centering
  \begin{subfigure}{.33\textwidth}
    \centering
    \includegraphics[width=1\linewidth]{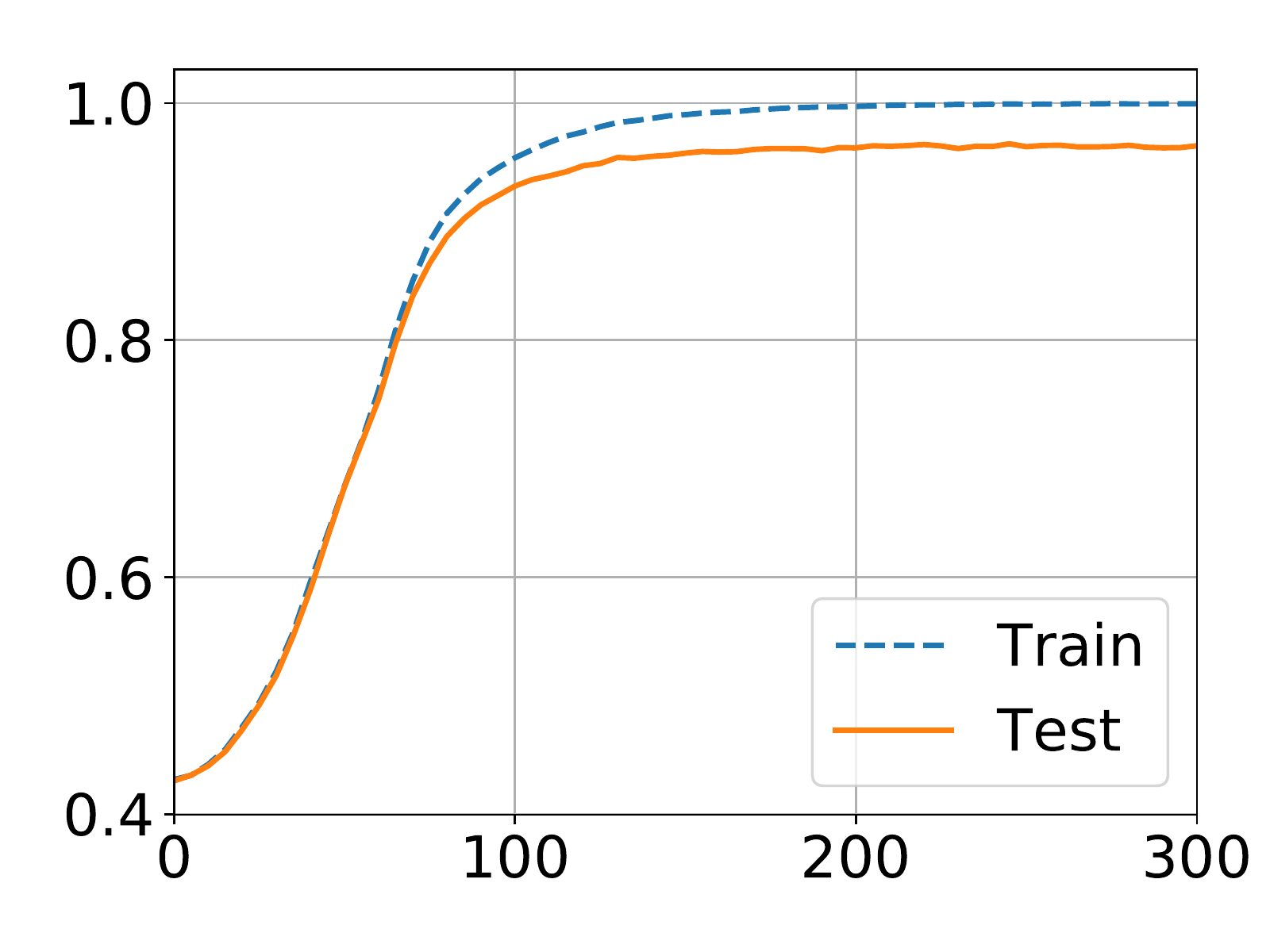}
    \vspace{-0.8cm}
    \caption{K=10}
  \end{subfigure}
  \centering
  \begin{subfigure}{.33\textwidth}
    \centering
    \includegraphics[width=1\linewidth]{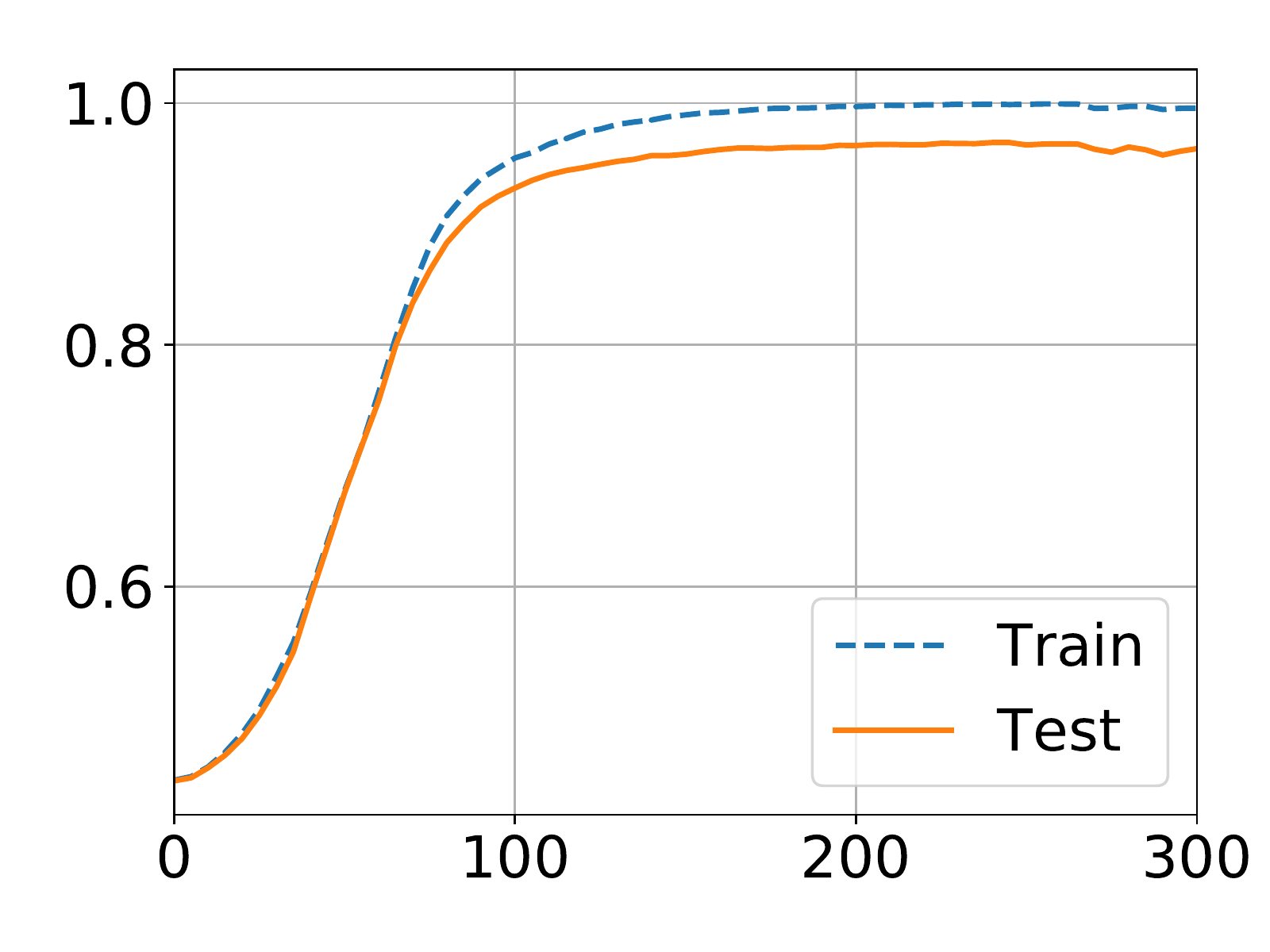}
    \vspace{-0.8cm}
    \caption{K=20}
  \end{subfigure}
  \vspace{-0.3cm}
  \caption{Train and Test Accuracy Curves versus Epochs on Dataset Letter.}
  \label{fig:class-letter}
  \vspace{0.2cm}
\end{figure*}

Next we show generated samples from the proposed models trained with MNIST and Fashion-MNIST in \autoref{fig-demo-samples}. In the figure, we show generated samples from GenMM and LatMM for MNIST and Fashion-MNIST datasets. We use different value of $K$ to generate images. It can be observed that LatMM is able to produce good quality image samples as GenMM. While we argue that LatMM has a lower level of complexity than GenMM, it is seen that LatMM works good in practice.    

In the second experiment, we explore power of invertibility for interpolation in the latent domain. We use samples from MNIST and Fashion-MNIST datasets for this `interpolation' experiment. In \autoref{fig-interpo}, we have six subfigures. For each subfigure, the first row and the last row are comprised of the real (true) data samples from MNIST and Fashion-MNIST dataset. In each column, we find latent codes corresponding to the real samples of the first row and the last row, $\bm{z}_1, \bm{z}_2$. This is possible as the neural networks are invertible. Then, we perform a convex combination of the two latent codes as $\alpha \bm{z}_1 + (1- \alpha)\bm{z}_2$, where $0 < \alpha <1$. The latent code produced by the convex combination is used to generate a new sample using the trained models. All other rows except the first and the last rows of the figure are the generated samples by varying $\alpha$. In \autoref{fig-interpo}, we observe the change visually from the first row to last row - how the first row slowly changes to the last row. We use GenMM for \autoref{fig-interpo-genmm1}, \autoref{fig-interpo-genmm2}, \autoref{fig-interpo-genmm3}, and LatMM for \autoref{fig-interpo-latmm1}, \autoref{fig-interpo-latmm2}, \autoref{fig-interpo-latmm3}. Interpolation experiment for LatMM is easier than GenMM. GenMM has a set of neural network generators $\{ \bm{g}_k(\bm{z}) \}_{k=1}^K$ and a fixed Gaussian distribution for latent variable $\bm{z}$. We compute $\gamma_k$ for a real image $\bm{x}$, and then find the latent code $\bm{z}$ of $\bm{x}$ using $\bm{g}_{k^{*}}^{-1}(\bm{x})=\bm{f}_{k^{*}}(\bm{x})$, where $k^{*} = \arg \max_{k} \gamma_k$. For two real images (one image is in the first row and the second image in the last row), we find the corresponding latent codes, compute their convex combination as interpolation, and then pass the computed latent code through a generator $\bm{g}_k(\bm{z})$ to produce a generated sample $\bm{x}$. Identity of the generator of GenMM is chosen as $k^{*}$ corresponding to the image of the first row if $\alpha < 0.5$, or to the image of the last row if $\alpha \geq 0.5$.

\begin{figure*}[!ht]
  \captionsetup[subfigure]{justification=centering}
  \centering
  \begin{subfigure}{.33\textwidth}
    \centering
    \includegraphics[width=1\linewidth]{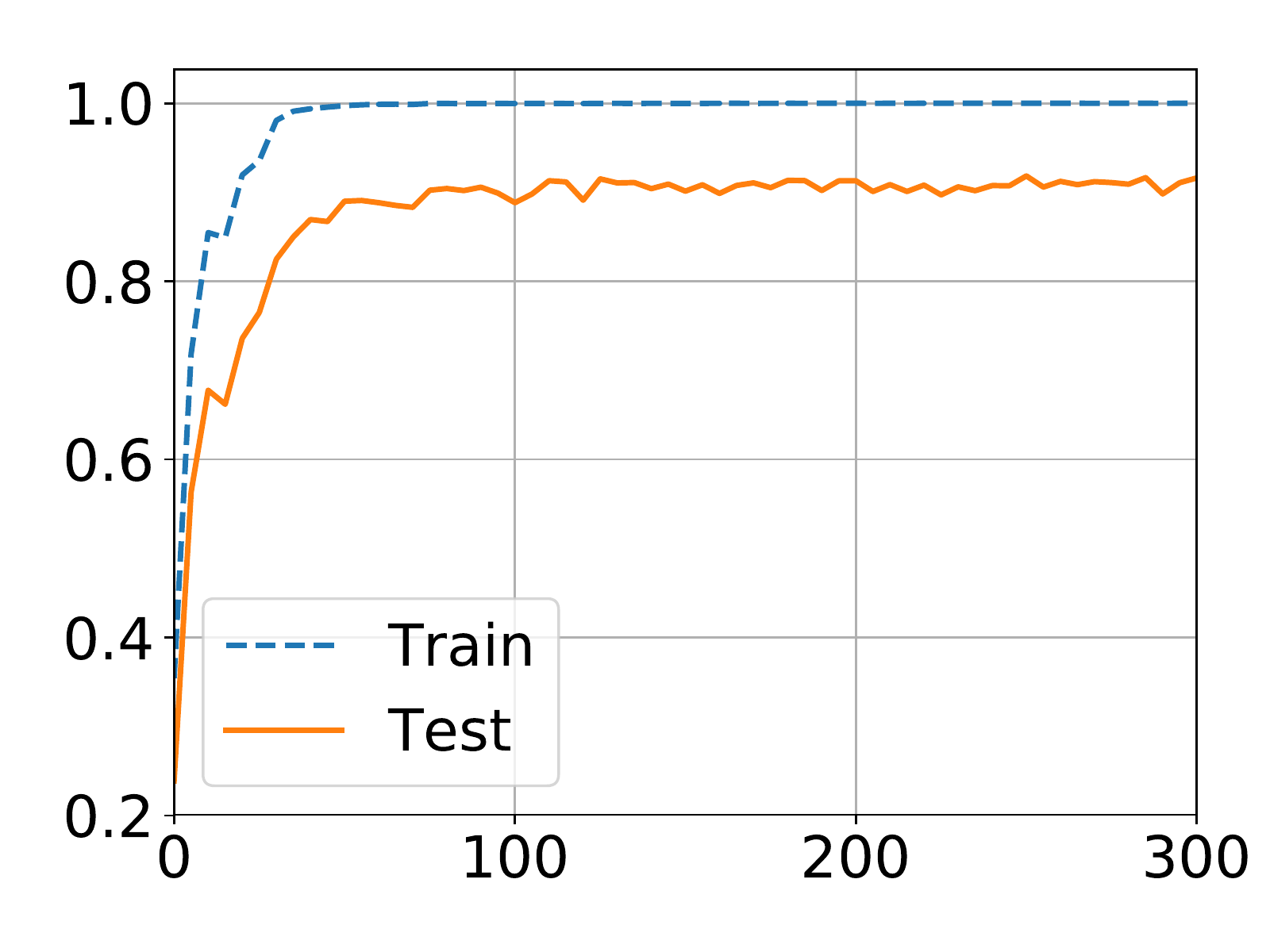}
    \vspace{-0.8cm}
    \caption{K=1}
  \end{subfigure}
  \vspace{-2pt}
  \begin{subfigure}{.33\textwidth}
    \centering
    \includegraphics[width=1\linewidth]{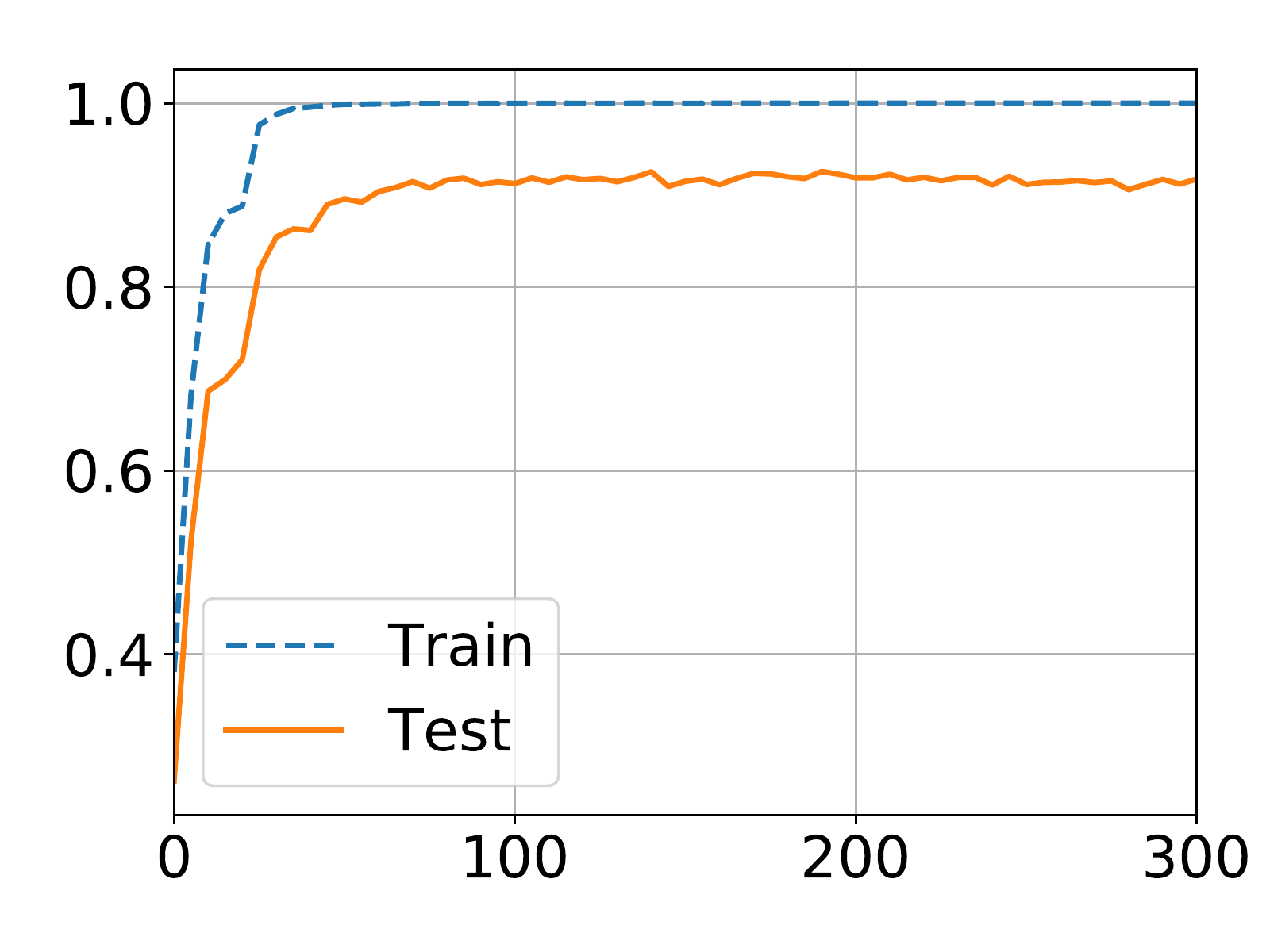}
    \vspace{-0.8cm}
    \caption{K=2}
  \end{subfigure}
  \centering
  \begin{subfigure}{.33\textwidth}
    \centering
    \includegraphics[width=1\linewidth]{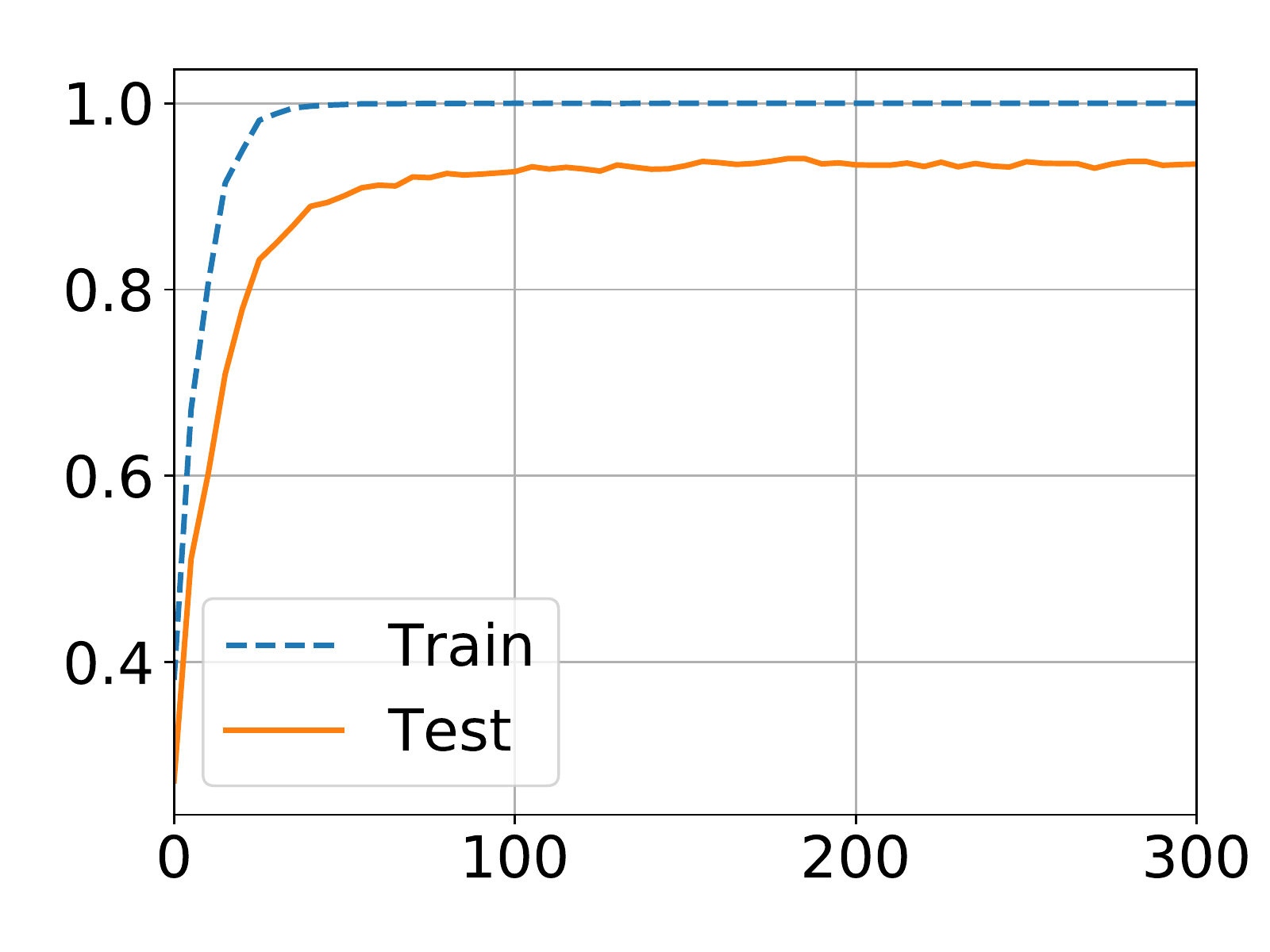}
    \vspace{-0.8cm}
    \caption{K=3}
  \end{subfigure}
  \centering
  \begin{subfigure}{0.33\textwidth}
    \centering
    \includegraphics[width=1\linewidth]{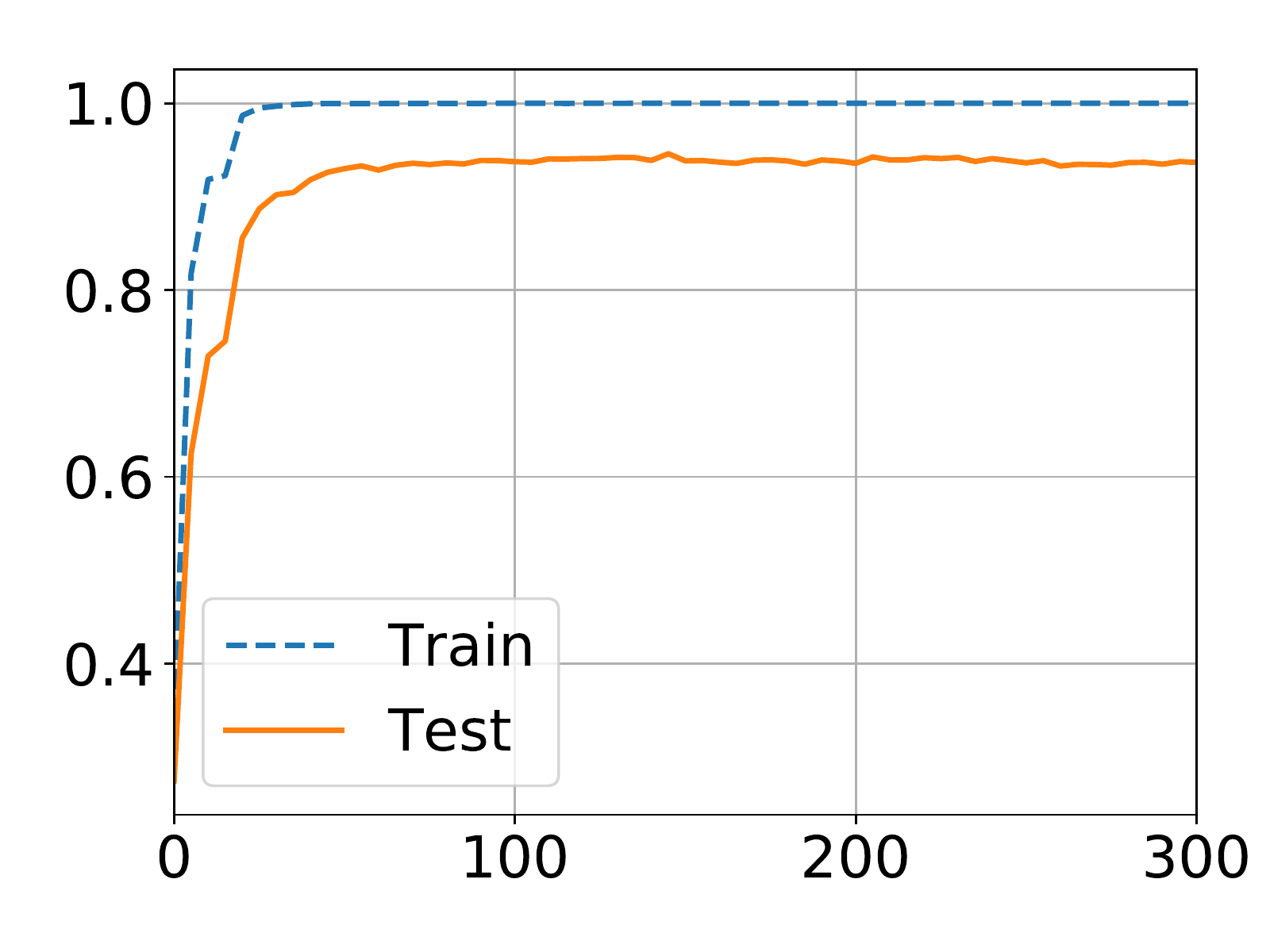}
    \vspace{-0.8cm}
    \caption{K=4}
  \end{subfigure}
  \centering
  \begin{subfigure}{.33\textwidth}
    \centering
    \includegraphics[width=1\linewidth]{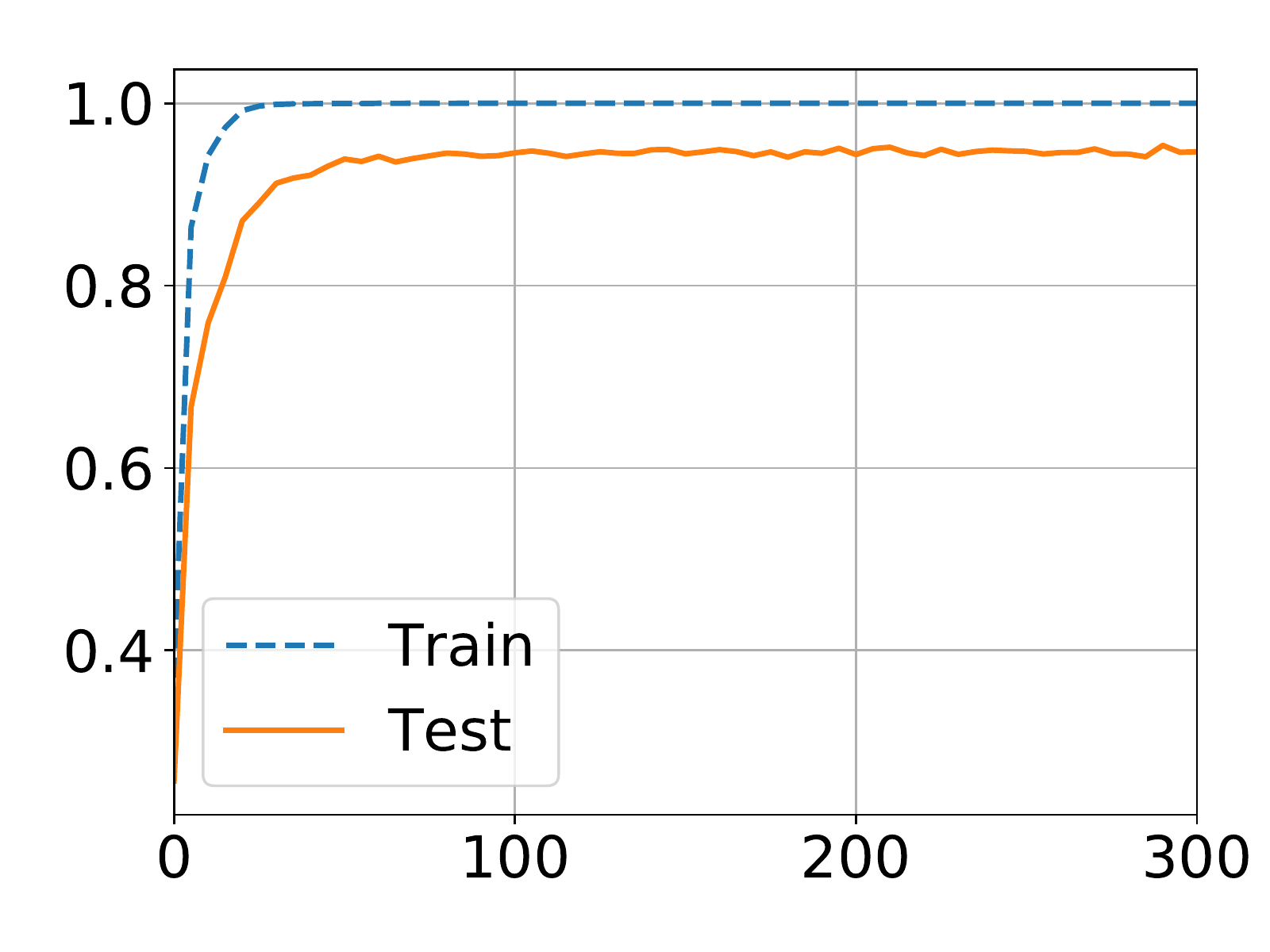}
    \vspace{-0.8cm}
    \caption{K=10}
  \end{subfigure}
  \centering
  \begin{subfigure}{.33\textwidth}
    \centering
    \includegraphics[width=1\linewidth]{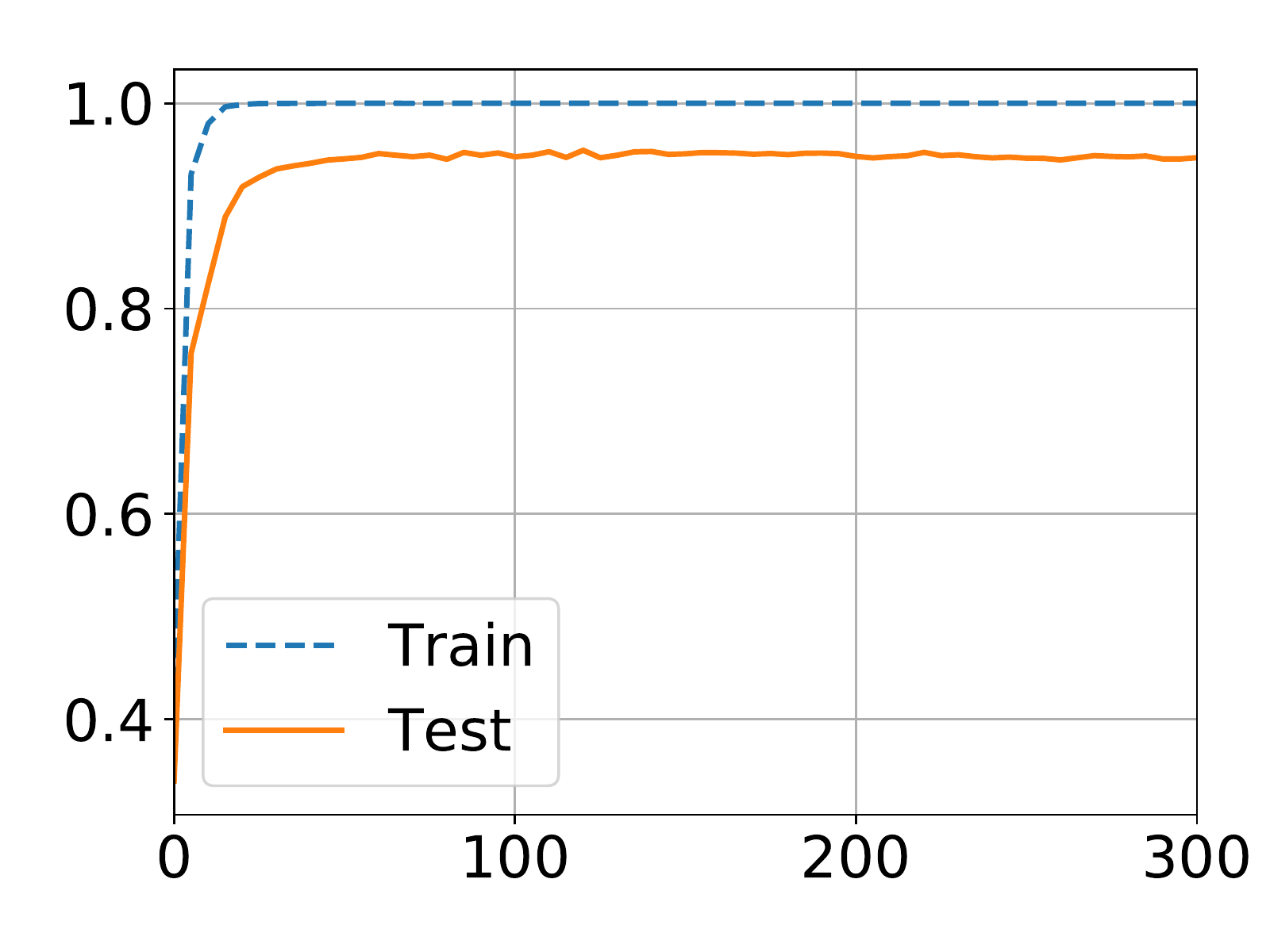}
    \vspace{-0.8cm}
    \caption{K=20}
  \end{subfigure}
  \vspace{-0.3cm}
  \caption{Train and Test Accuracy Curves versus Epochs on Dataset Norb}
  \label{fig:class-norb}
\end{figure*}

The second experiment on interpolation shows interesting result for
modeling multi-modal data. The distribution of ten digits together in
MNIST dataset is expected to be multi-modal. The aspect of multi-modal
distribution is addressed using the experimental result shown in
\autoref{fig-interpo-genmm2}. We use similar experimental steps
as that in \autoref{fig-interpo-genmm1} but with modifications. It
is evident that the generated digit images do not correspond well to
the real images of the first row and the last row. For example, in the
first column of \autoref{fig-interpo-genmm2}, we observe
presence of digits two and eight, while we expect that the
column should be comprised of only images of digit zero. Natural
question is why interpolation leads to generation of digits that are
unexpected. The answer lies in the procedure of performing our
experiment. The key difference for this experiment compared to the
experiment in \autoref{fig-interpo-genmm1} is that a sample is
produced by a randomly selected generator $\bm{g}_k(\bm{z})$ from $K$
possible choices. We compute interpolated latent code using the same
procedure as that in \autoref{fig-interpo-genmm1}, but use the generator where its identity $k$ is randomly sampled from the prior $\bm{\pi}$ directly. The generated images in this interpolation experiment reveals a clue that each generator models a subset of the whole training dataset. We can qualitatively argue that use of multiple generators helps for modeling the multi-modal distribution.  

\subsection{Application to Classification Task}
\begin{figure*}[!ht]
  \captionsetup[subfigure]{justification=centering}
  \centering
  \begin{subfigure}{.33\textwidth}
    \centering
    \includegraphics[width=1\linewidth]{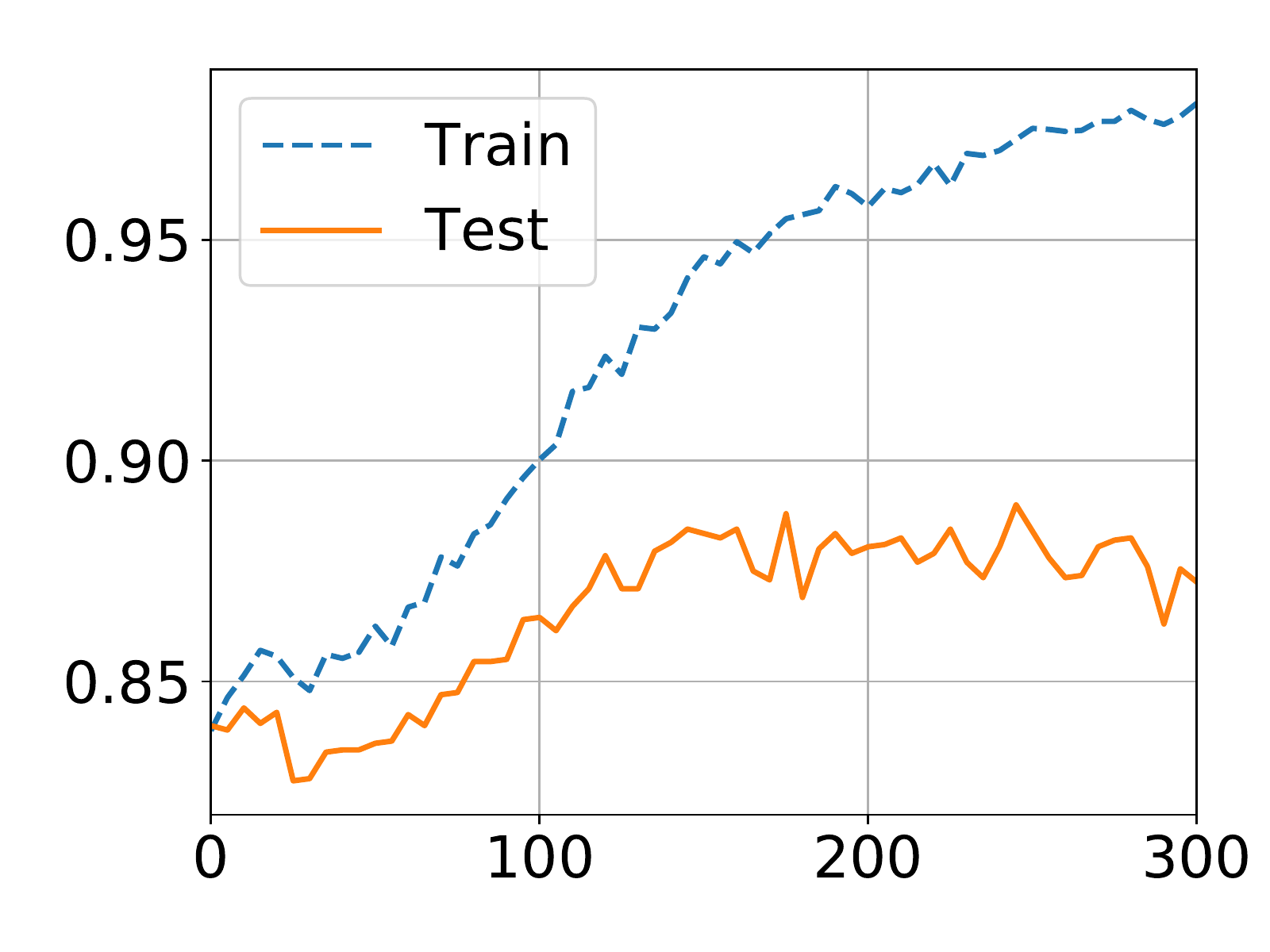}
    \vspace{-0.8cm}
    \caption{K=1}
  \end{subfigure}
  \vspace{-2pt}
  \begin{subfigure}{.33\textwidth}
    \centering
    \includegraphics[width=1\linewidth]{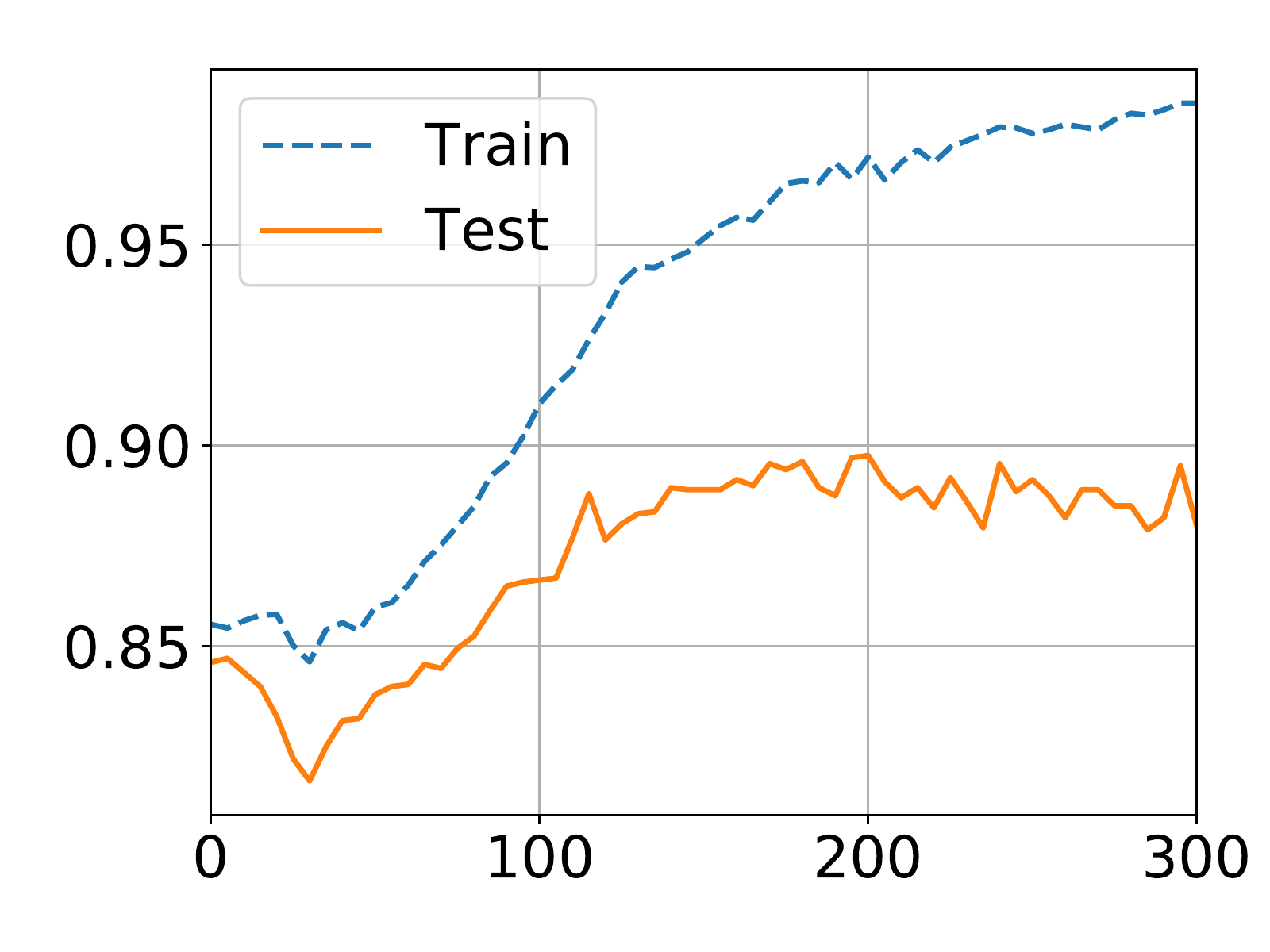}
    \vspace{-0.8cm}
    \caption{K=2}
  \end{subfigure}
  \centering
  \begin{subfigure}{.33\textwidth}
    \centering
    \includegraphics[width=1\linewidth]{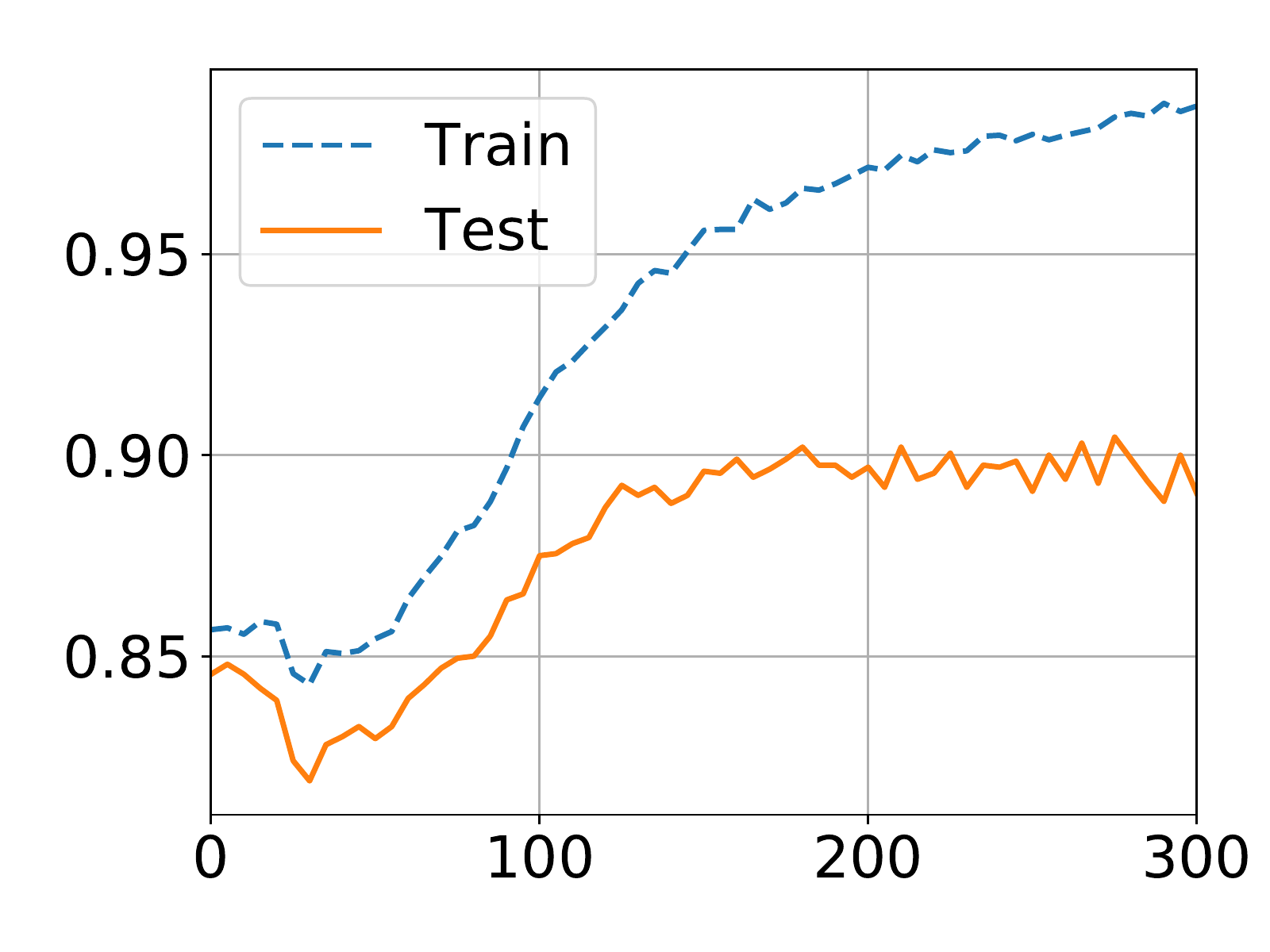}
    \vspace{-0.8cm}
    \caption{K=3}
  \end{subfigure}
  \centering
  \begin{subfigure}{0.33\textwidth}
    \centering
    \includegraphics[width=1\linewidth]{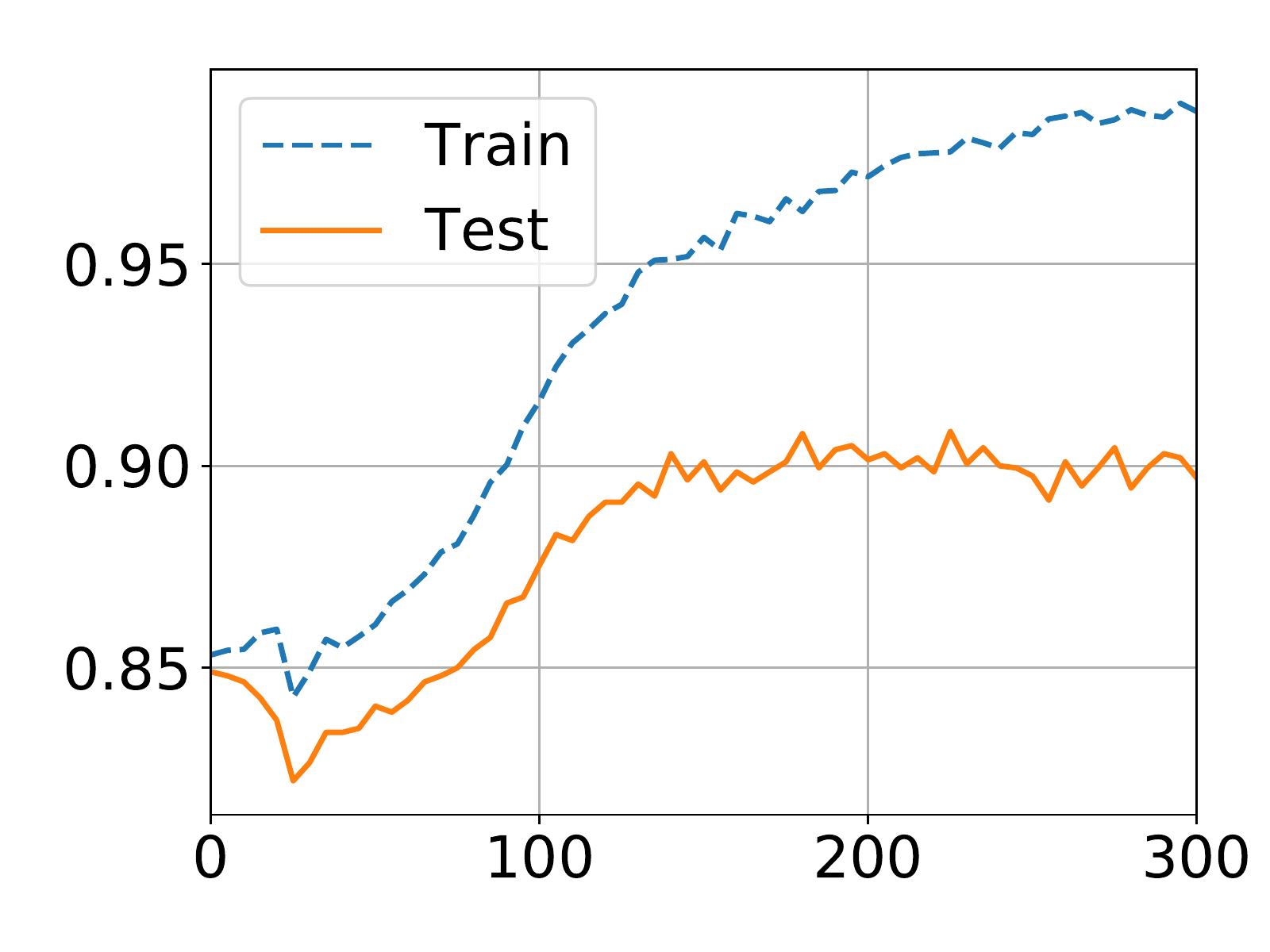}
    \vspace{-0.8cm}
    \caption{K=4}
  \end{subfigure}
  \centering
  \begin{subfigure}{.33\textwidth}
    \centering
    \includegraphics[width=1\linewidth]{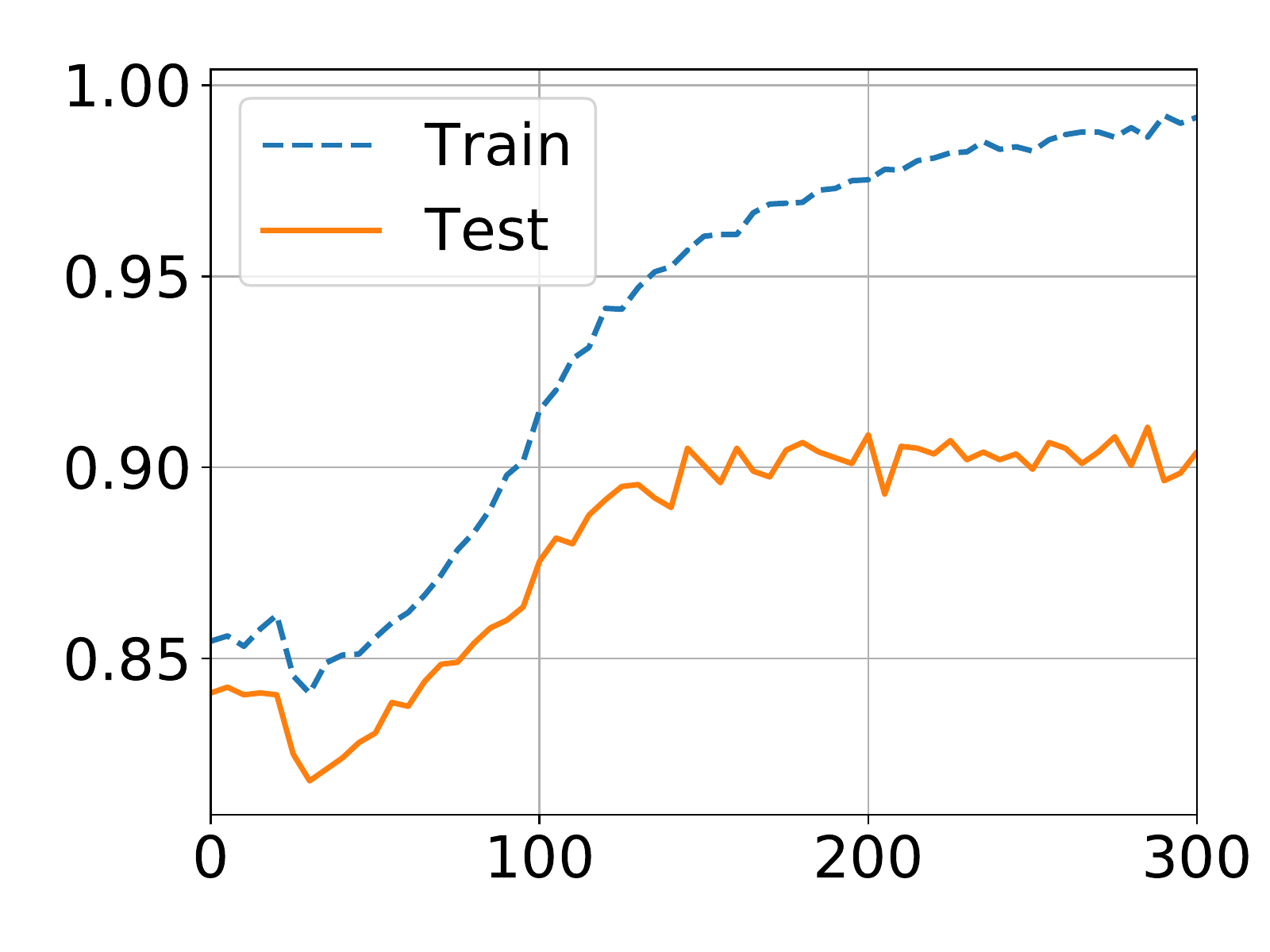}
    \vspace{-0.8cm}
    \caption{K=10}
  \end{subfigure}
  \centering
  \begin{subfigure}{.33\textwidth}
    \centering
    \includegraphics[width=1\linewidth]{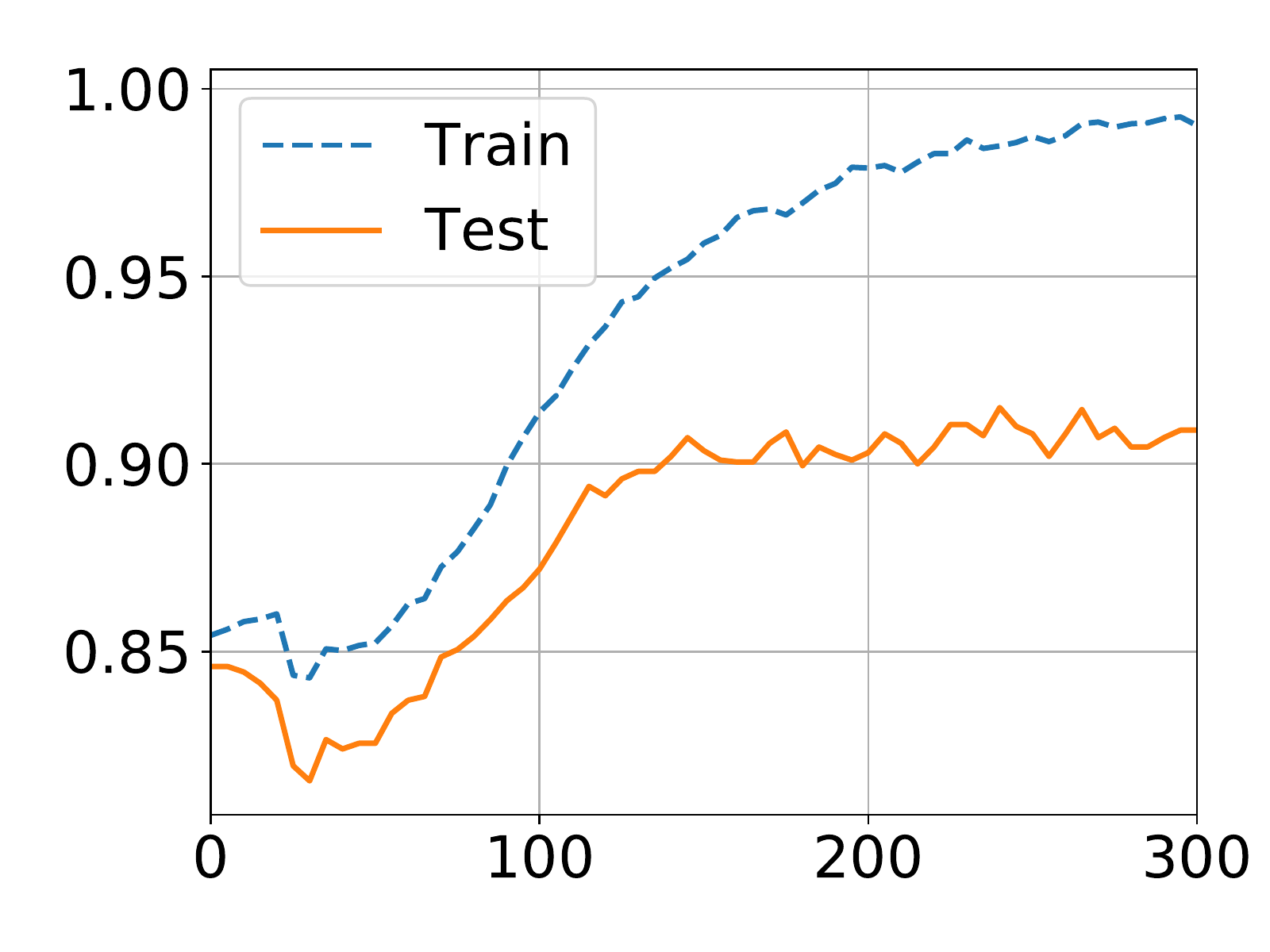}
    \vspace{-0.8cm}
    \caption{K=20}
  \end{subfigure}
  \vspace{-0.3cm}
  \caption{Train and Test Accuracy Curves versus Epochs on Dataset Satimage.}
  \vspace{0.3cm}
  \label{fig:class-satimage}
\end{figure*}
In this subsection, we apply our proposed mixture models to classification tasks using the maximum likelihood criterion. We compare classification performance with the state-of-art results. The state-of-art results are produced by discriminative learning approaches. The major advantage of maximum likelihood based classification is that any new class can be accommodated on-the-fly. On the contrary a discriminative learning approach requires retraining whenever new classes appear. 

For a given dataset with $Y$ classes, we divide the dataset by sample labels and each subset has the same label $y$. Then we train one GenMM model per class of data, i.e. $p(\bm{x};\bm{\Phi}_{y})$ is trained with the $y$-th class's data. After we have all $p(\bm{x};\bm{\Phi}_y)$, $\forall y = 1, 2, \cdots, Y$ trained, a new sample $\bm{x}$ is predicted by $\argmax_{y} p(\bm{x};\bm{\Phi}_y)$.

The maximum likelihood based classification experiment as described above is carried out in three different datasets: Letter, Satimage, and Norb. For each dataset, we train our models for $300$ epoches on the training data of the corresponding dataset, and the test accuracy is reported in \autoref{tab:acc-classification}. The state-of-art accuracy of each dataset in literature is also listed in this table for comparison. For each dataset, we increase the total number of mixture components $K$ and the neural network generators have the same structure. The table shows that the classification accuracy on each dataset is increased as we increase the number of generators in GenMM. When $K$ is $10$ or $20$, maximum likelihood based classification by GenMM outperforms the state-of-art accuracy. The state-of-art accuracy results are obtained by using discriminative learning approaches. For dataset Norb, more significant performance gain is observed. Our classification accuracy is boosted from $0.9184$ to $0.9542$ when $K$ is increased from $1$ to $20$ and a large improvement margin is obtained over reference accuracy. We also test LatMM on classification task, but its accuracy is more or less around the accuracy of GenMM with $K=1$. Note that LatMM is a relatively low-complexity model than GenMM. 

\autoref{fig:class-letter} \autoref{fig:class-satimage} and \autoref{fig:class-norb} show the train and test accuracy changing along with the training epoch on dataset Letter and Satimage, respectively. For each dataset, the accuracy curves versus epoch trained with GenMM at different value of $K$ are shown. In these sets of figures, all accuracy curves climbs and flattens around some value, as training epoch increases. Train accuracy is either coincident with, or above test accuracy curve at different training phases. For each set of figures on a given dataset, the gap between train and test curve is smaller as a larger number of mixture components is used. As $K$ increases, test curve flattens at a larger accuracy value. This again speaks for validation of our proposed models and also the advantage of using our mixture models for practical tasks.

\section{Conclusion}
We conclude that the principal of expectation maximization can be used for neural network based probability distribution modeling. Our approach leads to explicit distribution modeling and the experimental results show an important aspect that the normal statistical behaviour of modeling performance versus model complexity remains valid. The proposed models are able to generate images which have good visual quality. This is also supported by several metric scores. Practical applications of our models for classification tasks are also carried out. The results confirm that our approach is good for modeling multi-modal distributions.
Further extensions using variational inference for learning
parameters of mixture models will be studied in the future.

\section{Acknowledgments}
The computations were enabled by resources provided by the Swedish National Infrastructure for Computing (SNIC) at HPC2N partially funded by the Swedish Research Council through grant agreement no. 2016-07213.

\bibliographystyle{IEEEtran}

\bibliography{myref}

\end{document}